\documentclass{article}

\PassOptionsToPackage{numbers, compress}{natbib}
 \usepackage[preprint]{style}

\usepackage[utf8]{inputenc} 
\usepackage[T1]{fontenc}    
\usepackage{hyperref}       
\usepackage{url}            
\usepackage{booktabs}       
\usepackage{amsfonts}       
\usepackage{nicefrac}       
\usepackage{microtype}      
\usepackage{xcolor}         

\usepackage{amsmath}
\usepackage{amssymb}
\usepackage{amsthm}
\usepackage[most]{tcolorbox}

\usepackage[capitalize,noabbrev]{cleveref}
\usepackage{listings}
\usepackage{mathtools}   
\usepackage{dsfont}        
\usepackage{thmtools}      
\usepackage{algorithm}     
\usepackage{algorithmic}   
\usepackage{booktabs}   
\usepackage{multirow}   
\usepackage{tikz}
\usepackage{pgfplots}
\usepackage{wrapfig}
\usepackage{array}
\usepgfplotslibrary{fillbetween}
\usepackage{subcaption}
\usepackage{enumitem}
\usepackage{soul}

\theoremstyle{plain}
\newtheorem{theorem}{Theorem}[section]

\newtheorem{lemma}[theorem]{Lemma}
\newtheorem{claim}[theorem]{Claim}

\newtheorem{definition}[theorem]{Definition}
\newtheorem{assumption}{Assumption}
\theoremstyle{remark}
\newtheorem{remark}[theorem]{Remark}
	    \newcommand{\red}[1]{{\leavevmode\color{red}{#1}}}

\newcommand\toref{\red{[REF]}}

\usepackage[textsize=tiny]{todonotes}
\newcount\Full   
\newcommand{\iffull}[2]{\ifnum\Full=1{#1}\fi\ifnum\Full=0{#2}\fi}
\newcount\Comments  
\newcommand{\kibitz}[2]{\ifnum\Comments=1{\textcolor{#1}{{#2}}}\fi}

\newcommand{\sg}[1] {\kibitz{blue}{[SG:#1]}}

\DeclareMathOperator*{\argmax}{arg\,max}
\DeclareMathOperator*{\argmin}{arg\,min}

\title{Linear Strategic Classification with Endogenous Improvements}

\author{%
  Siddharth Shrivastava\thanks{Equal contribution.}\\
  Department of Artificial Intelligence \\
  IIT Hyderabad \\
  \texttt{siddharthmanik16@gmail.com} \\
  \And
  Mahvith Akshintala\footnotemark[1]\\
  Department of Artificial Intelligence \\
  IIT Hyderabad \\
  \texttt{akshintalamahvith@gmail.com} \\
  \AND
  B Vamsha Vardhan Reddy \\
  IIIT Hyderabad \\ 
  \texttt{vamsha1401@gmail.com} \\
  \And
  Naresh Manwani \\
  IIIT Hyderabad \\ 
  \texttt{naresh.manwani@iiit.ac.in} \\
  \And
  Sujit Gujar \\
  IIIT Hyderabad \\
  \texttt{sujit.gujar@iiit.ac.in} \\
  \And
  Ganesh Ghalme \\
  Departement of Artifical Intelligence \\
  IIT Hyderabad \\ 
  \texttt{ganeshghalme@ai.iith.ac.in} \\
}

\definecolor{colorwgreen}{RGB}{0,150,0}
\begin{document}

\maketitle
\begin{abstract}
Strategic classification studies settings in which agents respond to a deployed classifier by modifying observable features at a cost. Classical models typically treat such responses as cosmetic: features may change, but true labels remain fixed. We study an improvement-aware variant in which strategic responses can induce genuine changes in outcome-relevant features. Agents choose post-deployment feature vectors strategically, and labels are then generated according to a stable conditional outcome law that preserves the relationship between features and outcomes.

We formalize this problem for linear classifiers under a single-index qualification model and linear-decomposable costs. We show that the strategic-optimal classifier is obtained by a parallel shift of the Bayes-optimal decision boundary, and that it provides a better surrogate for the improvement-aware objective than the Bayes classifier. Since improvement-aware learning requires post-deployment labels, which are typically unavailable before deployment, we provide PAC-style guarantees under an oracle model, propose a practical plug-in algorithm, establish its generalization bound, and evaluate it on synthetic and real-world datasets.
\end{abstract}

\section{Introduction}
Strategic classification~\citep{hardt2016strategic} studies prediction settings in which agents respond to a deployed classifier by modifying observable features at a cost. Such behavior arises in credit scoring, admissions \cite{hardt2016strategic, dong2018strategic}, and hiring \cite{kleinberg2020classifiers}: applicants may adjust credit utilization, obtain certifications, or enroll in test-preparation programs to improve their chances of receiving a favorable decision. Classical strategic classification treats these responses as manipulation: the feature vector changes, but the underlying label or qualification remains fixed.

This immutability assumption is often too restrictive. In many applications, the same actions that change observable features can also improve the underlying outcome of interest. Coursework may improve academic readiness, professional training may increase job productivity, and financial discipline may improve repayment probability. Thus, a classifier can both induce strategic feature changes and incentivize genuine improvement. We call this setting {\em improvement-aware} strategic classification.

We study improvement-aware strategic classification with {\em endogenous improvement} for binary label prediction tasks in this paper.  In particular, we assume that the original population is 
governed by a stable conditional outcome law
$\eta(x)=\Pr(Y=1\mid X=x)$. The deployed classifier affects agents by changing
their incentives over feasible feature modifications.   Thus, the classifier
changes the marginal distribution of features, but not the structural relation
between features and outcomes. In this sense, improvement is endogenous: it is
not externally imposed by the analyst, but arises because agents move to regions
of the feature space that, under the original data-generating process, are
associated with better outcomes.

  The feature manipulation incurs a cost. The cost function captures the effort or resources required to move from original features to modified features. An agent's utility is then the difference between the benefit of a favorable classification outcome and the cost incurred to achieve it. For instance, a loan applicant weighs the benefit of credit approval against the cost of temporarily manipulating credit utilization, while a job candidate balances the value of an offer against the expense and effort of obtaining certifications. By explicitly modeling these cost-benefit calculations, the framework enables analysis of how agents will strategically respond to any given classifier. 
  

Our   framework makes two assumptions: $1)$ the class-probability function $\eta$ follows a single-index qualification model, $\eta(x) = g({w^{\star}}^\top x)$, for a non-decreasing link $g: \mathbb{R} \to [0,1]$ and non-negative weights $w^{\star} \in \mathbb{R}^d_{\geq 0}$, and $2)$ the manipulation cost is linear-decomposable, i.e., the total cost is a weighted sum of feature-wise improvement costs. Assumption \ref{ass:singleIndex} guarantees that non-strategic Bayes optimal classifier is linear, and subsumes the monotonicity of $\eta$ in features. Monotonicity of the outcome in improvement-relevant features is a standard assumption in the improvement-aware strategic classification literature \cite{kleinberg2020classifiers, miller2020strategic}. It also encompasses many class-probability models commonly used in the literature, including linear and sigmoidal models \cite{audibert2007fast, hardt2016strategic, yara2026nonparametric}. We will discuss these assumptions in detail in Section \ref{sec:setting}.
\paragraph{Our Results and Paper Organization} Section \ref{sec:setting} introduces the model and assumptions. Section \ref{sec:structural} gives the structural results: Theorem \ref{thm:strategic-shift} shows that the strategic-optimal linear classifier is a parallel shift of the Bayes classifier, and Theorem \ref{thm: fs proxy for fimp} shows that this shifted classifier is a better proxy for the improvement-aware objective than the Bayes classifier. Section \ref{sec:Algorithm} studies learnability: Theorem \ref{thm:uniform_convergence} gives a uniform-convergence guarantee under sampled post-response labels, and Theorem \ref{thm:alg_guarantee} establishes a plug-in generalization bound for \textsc{Strat-Imp-Aware}. Section \ref{sec:exp} reports experiments on synthetic and real-world datasets.

\subsection{Related Work}
\citet{hardt2016strategic} formalized Strategic Classification as a Stackelberg game between learner and agents who manipulate observable features at a cost. Subsequent works explore several aspects of strategic classification including the social cost of  manipulation \citep{milli2019social},    information asymmetry between learner and agents \citep{ghalme21, shao2023strategic}, sequential version of strategic classification \cite{cohen2023sequential} and so on. \citet{kolkman2022strategic} develops a differentiable surrogate for strategic response that enables end-to-end gradient based training. Classical strategic classification  treats manipulation as costly feature movement that leaves the underlying label unchanged-- an assumption our framework relaxes.

Strategic classification has been extended to settings where agents can genuinely improve rather than gaming the classification rule by  feature-level manipulation. This setting is closely related to performative prediction problem introduced by \cite{perdomo2020performative}.  One thread studies classifier design that incentivizes productive effort over gaming \citep{kleinberg2020classifiers, jin2022incentive, ahmadi2022classification, levanon2022generalized}. \citet{chen2023learning} studies  incentivizing  improvement under  deterministic qualification improvement with tuned surrogate objective, and \citet{xie2024learning} give guarantees for $1$-dimensional threshold classifier under deterministic relations with active experimentation at deployment; both require deterministic assumptions that our probabilistic, classifier-endogenous framework relaxes. A second thread analyzes fairness under improvement-aware response \citep{efthymiou2025desirable, alhanouti2025anticipating}. A third line of work in this setting grounds the improvement/gaming distinction in causal structure on features \citep{miller2020strategic, horowitz2023causal, efthymiou2025incentivizing}. Finally, several studies extend the improvement-aware analysis to dynamics, persistent improvement, and welfare \citep{haghtalab2020maximizing, bechavod2021gaming, xie2024algorithmic,huang2026multi}. Our framework treats improvement as emergent from a probabilistic, classifier-endogenous data-generating process. 

 

A line of work develops learning-theoretic foundations for strategic classification under best-responding agents, including PAC-learnability and strategic VC-dimension results \citep{sundaram2023pac}, incentive-aware sample complexity bounds \citep{zhang2021incentive}, regret bounds in the online setting \citep{ahmadi2023fundamental}, and learnability gaps between strategic and standard PAC learning \citep{cohen2024learnability}. A recent thread extends this analysis to settings with genuine improvement: \citet{attias2025pac} prove PAC bounds under exogenously specified improvement sets, \citet{sharma2025conservative} analyze linear classifiers with bounded improvement. These two close works to ours, \citet{attias2025pac} and \citet{sharma2025conservative}, both operate under deterministic labels and worst-case tie-breaking, with improvement sets that are either exogenous geometric objects or bounded regions independent of the classifier. Our setting relaxes both assumptions --- the label is probabilistic and the manipulation region is classifier-dependent --- yielding PAC learning guarantee (Theorem~\ref{thm:uniform_convergence}
) and  generalization guarantee (Theorem~\ref{thm:alg_guarantee}) with no direct analogue in either work.


Prior work relies on exogenous structure: causal graphs, binary improvable/non-improvable feature labels, or bounded improvement sets to separate manipulation from improvement, requiring strong domain knowledge often unavailable in practice. Our framework  resolves this dichotomy: every manipulation induces a probabilistic label shift through a class-probability function, with the magnitude of improvement determined endogenously by the deployed classifier through agent utility maximization under decomposable costs. 

\section{Settings and Preliminaries}

\label{sec:setting}
Let $\mathcal{X} \subseteq \mathbb{R}^d$ denote the feature space, and   $\mathcal{D}$ be a distribution over $\mathcal{X}$. Also let  
\(\mathcal X_0:=\operatorname{supp}( \mathcal D)\subseteq\mathcal X\) denote the support of the original 
feature distribution. Agents may report feature vectors in a feasible report set 
\(\mathcal A\subseteq\mathcal X\), with \(\mathcal X_0\subseteq\mathcal A\). 
For simplicity we take \(\mathcal A\) to be common across agents; the model can also allow 
agent-dependent feasible sets \(\mathcal A(x)\subseteq\mathcal X\). We define the class-probability function as $\eta(x) = \mathds{P}(y=1|x)$, where the label for $x$ is drawn as $y_x \sim \operatorname{Bernoulli}(\eta(x))$. The training dataset $D$ consists of unaltered points $(x_i, y_{x_{i}})_{i=1}^n$, where the $x_{i}$'s are independently sampled from $\mathcal{D}$ and $y_{x_i} $ are independent realizations of $\eta(x_i)$.  Wherever  $x_i$ is clear from the context,  we will write $y_i$ to denote $y_{x_i} $ for notational simplicity. Further, we define $\texttt{err}(\cdot)$ as the standard expected $0-1$ loss, $\texttt{err}(f) = \mathbb{E}_{x \sim \mathcal{D}}[\mathds{1}( f(x) \neq y_x )]$.


\begin{definition}[Agent Best Response]
For a classifier \(f\), an agent with original feature vector \(x\in\mathcal X_0\) reports $$x^f\in\arg\max_{z\in\mathcal A}\{\beta f(z)-c(x,z)\}, $$
where $\beta>0$ is the benefit from positive classification and $c(x,x')$ is the cost of changing features from $x$ to $x'$.  
\end{definition}

Strategic reporting induces a mismatch between the training distribution and the distribution observed at deployment. A strategically robust classifier accounts for this best-response-induced distribution shift and adjusts its decision rule accordingly. The goal in standard strategic classification setting is to minimize the strategic error given below. 

\begin{definition}[Strategic Error] 
The strategic error of a classifier $f \in \mathcal{F}$ is defined as 
\begin{equation}
\texttt{err}_s(f) = \mathbb{E}_{x \sim \mathcal{D}} \left[ \mathds{1}[f(x^f) \ne y_x] \right].
\end{equation} 
\end{definition}
This definition implicitly assumes labels are immutable.    We modify the above objective to reflect the resulting change in qualification/labels in improvement-settings as follows. 

\begin{definition}[Improvement-Aware Strategic Error] 
\label{def: imp aware error}
The improvement-aware strategic error of a classifier $f \in \mathcal{F}$ is defined as 
\begin{equation}
\texttt{err}_{\mathrm{Imp}}(f) = \mathbb{E}_{x \sim \mathcal{D}} \left[ \mathds{1}[f(x^f) \ne y_{x^f}] \right].
\end{equation} 
\end{definition}  
We consider endogenous improvement in qualification.  This modeling choice captures strategic behavior as inducing true qualification changes. In particular,  the probabilistic gain in qualification is a function of distance from classifier boundary, in contrast to models that either partition agents exogenously into gaming versus improving types \cite{horowitz2023causal},\cite{chen2023learning} or treat improvement as deterministic \cite{attias2025pac}. To characterize when and how feature improvements translate to qualification improvements, we assume the following condition on the model.
\begin{assumption}[Single-index qualification model]
\label{ass:singleIndex}
There exists a continuous non-decreasing function \(g:\mathbb R\to[0,1]\) and a vector
\(w^\star\in\mathbb R_{\ge 0}^d\) such that \(\eta(x)=g((w^\star)^\top x)\) for all
\(x\in\mathcal X\). Moreover, the Bayes threshold 
$b^\star:=\inf\{t\in\mathbb R:g(t)\ge 1/2\}$ is finite.
\end{assumption}
The single-index qualification model ensures  that the true data-generating process respects a natural alignment between feature improvements and qualification improvements, enabling us to design classifiers that leverage this structure. This class of models   covers several standard probabilistic classification models, including logistic, probit, and other monotone-link generalized linear models; the sigmoid link is a common example~\cite{kolkman2022strategic}.  Furthermore, Assumption \ref{ass:singleIndex}   is particularly justified in domains where features represent positive attributes, skill investments, or substantive credentials rather than mere signals. We validate this assumption on real-world datasets in Appendix \ref{app:exp}. Next we observe     that  Assumption \ref{ass:singleIndex} guarantees that the optimal classifier is linear. 

\begin{restatable}{observation}{ObsOne}[Existence of a linear Non-strategic Bayes optimal classifier]\label{obs:bayes-linear}
Under $0$--$1$ loss in the Single-index qualification model (Assumption~\ref{ass:singleIndex}), there exists a Non-strategic Bayes optimal classifier of the form $f^\star(x)=\mathbf 1\{{w^{\star}}^\top x\ge b^\star\} 
$ 
for some threshold $b^\star\in \mathbb R$.
In particular, if 
$b^\star := \inf\{t\in\mathbb R: g(t)\ge 1/2\},$ 
then the classifier above is Bayes-optimal.
\end{restatable}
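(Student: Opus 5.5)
The plan is to use the standard pointwise characterization of Bayes-optimal classifiers under $0$--$1$ loss, and then read off the threshold from the monotone link $g$. First I will recall that for any classifier $f$,
\[
\texttt{err}(f)=\mathbb{E}_{x\sim\mathcal D}\bigl[f(x)(1-\eta(x))+(1-f(x))\eta(x)\bigr],
\]
so the conditional risk at $x$ is $1-\eta(x)$ if $f(x)=1$ and $\eta(x)$ if $f(x)=0$. Hence any $f$ with $f(x)=1$ whenever $\eta(x)>1/2$ and $f(x)=0$ whenever $\eta(x)<1/2$ (ties arbitrary) minimizes the integrand pointwise. It is therefore Bayes-optimal, since its risk is at most that of every measurable competitor.

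Next I will show that $f^\star(x)=\mathbf 1\{(w^\star)^\top x\ge b^\star\}$ has this property. Write $t=(w^\star)^\top x$, so that $\eta(x)=g(t)$ by Assumption~\ref{ass:singleIndex}.

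\emph{Case $t\ge b^\star$.} I claim $g(t)\ge 1/2$. The set $S=\{s: g(s)\ge 1/2\}$ is closed because $g$ is continuous, so its infimum $b^\star$ (finite by assumption) lies in $S$, giving $g(b^\star)\ge 1/2$. Monotonicity then gives $g(t)\ge g(b^\star)\ge 1/2$. So $f^\star(x)=1$ never occurs where $\eta(x)<1/2$.

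\emph{Case $t<b^\star$.} By the definition of the infimum, $t\notin S$, so $g(t)<1/2$. So $f^\star(x)=0$ never occurs where $\eta(x)>1/2$.

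Combining the two cases, $f^\star$ predicts the pointwise minimizer everywhere, and ties $\eta=1/2$ are harmless. The linear form, with this particular $b^\star$, is therefore Bayes-optimal, which gives both parts of the observation.

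The only delicate step is verifying $g(b^\star)\ge 1/2$, which is where continuity (closedness of $S$) and finiteness of $b^\star$ are used. Without continuity, $S$ could be an open half-line and the inequality $\ge b^\star$ would have to become $>b^\star$. The rest is routine, including measurability of $f^\star$ as the indicator of a half-space.
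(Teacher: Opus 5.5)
Your proof is correct and follows the same route as the paper: identify the level set $\{x:\eta(x)\ge 1/2\}$ with the half-space $\{x:(w^\star)^\top x\ge b^\star\}$, then apply the pointwise Bayes rule. You are more careful than the paper, which attributes the closed threshold form to monotonicity of $g$ alone; you correctly note that continuity of $g$, i.e.\ closedness of $\{s:g(s)\ge 1/2\}$, is what gives $g(b^\star)\ge 1/2$ and hence justifies the non-strict inequality.
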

 This observation (proof in Appendix~\ref{app:setting}) allows us to restrict the space of classifiers to the set of linear classifiers with non-negative weights and hence, throughout the paper, we will assume that $$\mathcal F:=\{f_{w,b}:\mathcal X\to\{0,1\}: f_{w,b}(z)=\mathds 1\{w^\top z\ge b\},
\ w_i\ge0\ \forall i\in[d],\ b\in\mathbb R\}.$$ 
\begin{definition}[Linear-decomposable cost] 
A cost function $c: \mathcal X_0  \times \mathcal{A} \to \mathbb{R}_+$ is decomposable if there exist $\alpha_i > 0$ for $i \in [d]$  such that 
$$ 
c(x, x') = \sum_{i=1}^d \alpha_i  (x'_i - x_i )_+. 
$$ Here $  (x'_i - x_i )_+ = \max(0, x'_i - x_i)$.
\end{definition}

\noindent The cost model is a weighted one-sided $\ell_1$ action cost, where the parameters $\alpha_i$ denotes the per-unit cost or  difficulty of improving a given  feature $i$.  It is closely related to the linear/separable manipulation costs used in strategic classification \cite{hardt2016strategic,AhmadiStrategic2021} and to coordinate-wise action costs used in algorithmic recourse \cite{ustun2019actionable}. 
This structure captures settings where each feature can be improved independently at some per-unit cost for example, hours spent on test preparation, tuition for certification courses, or effort invested in building work experience. 

\begin{assumption} \label{assump: cost function} 
The cost function  is linear-decomposable. 
\end{assumption}

First, we prove certain structural results for the improvement-aware strategic classification. 
\section{Structural Results}
\label{sec:structural}

  Our first result shows that if a non-strategic classifier has non-negative weights, then the agents are incentivized to improve their feature values, i.e., the agents' best-response feature vector is coordinate-wise greater than the original feature vector. 

\begin{restatable}{lemma}{increasingX}\label{lem:x'_i>=x_i}  
Suppose \(\mathcal A\) is coordinate-wise order-convex: for every \(x\in\mathcal X_0\), \(z\in\mathcal A\), and every \(\tilde z\) lying coordinate-wise between \(x\) and \(z\), we have \(\tilde z\in\mathcal A\). Let \(f_{w,b}\in\mathcal F\) be a linear classifier with \(w\in\mathbb R^d_{\ge0}\) and \(b\in\mathbb R\). Then, for every \(x\in\mathcal X_0\), there exists a strategic best response \(x^f\in\mathcal A\) satisfying \(x_i^f\ge x_i\) for all \(i\in[d]\).
\end{restatable}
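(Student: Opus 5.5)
The plan is a projection argument: take any best response and project it coordinate-wise upward onto the box above \(x\). Fix \(x\in\mathcal X_0\) and assume a best response \(z\in\arg\max_{z\in\mathcal A}\{\beta f_{w,b}(z)-c(x,z)\}\) exists. Existence is implicit in the definition of agent best response. If needed, it can be justified: the utility takes only two ``regimes'' (\(f=0\) or \(f=1\)), the best value in the \(f=0\) regime is \(0\), attained at \(z=x\in\mathcal A\), and within the \(f=1\) regime the cost is a continuous, coordinate-wise bounded-below objective. We define \(\tilde z\) by \(\tilde z_i:=\max(z_i,x_i)\) for each \(i\in[d]\). Then \(\tilde z_i\ge x_i\) for all \(i\), and it remains to show that \(\tilde z\in\mathcal A\) and that \(\tilde z\) is also a best response.

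\textbf{Feasibility.} For each coordinate, \(\tilde z_i\) equals either \(z_i\) (when \(z_i\ge x_i\)) or \(x_i\) (when \(z_i<x_i\)). In both cases it lies between \(x_i\) and \(z_i\). So \(\tilde z\) lies coordinate-wise between \(x\) and \(z\), and the order-convexity hypothesis on \(\mathcal A\) gives \(\tilde z\in\mathcal A\).

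\textbf{Cost.} Under the linear-decomposable cost (Assumption~\ref{assump: cost function}), a coordinate with \(z_i\ge x_i\) contributes \(\alpha_i(z_i-x_i)_+\) to both \(c(x,z)\) and \(c(x,\tilde z)\). A coordinate with \(z_i<x_i\) contributes \(0\) to both, since \((z_i-x_i)_+=0\) and \(\tilde z_i-x_i=0\). Hence \(c(x,\tilde z)=c(x,z)\).

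\textbf{Classification.} Since \(w\in\mathbb R^d_{\ge0}\) and \(\tilde z\ge z\) coordinate-wise, we have \(w^\top\tilde z\ge w^\top z\). Therefore \(f_{w,b}(\tilde z)\ge f_{w,b}(z)\).

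Combining the cost and classification steps, \(\beta f(\tilde z)-c(x,\tilde z)\ge\beta f(z)-c(x,z)\), which is the maximum value. So \(\tilde z\) is itself a maximizer, and we may take \(x^f:=\tilde z\).

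The main subtlety is existence of the maximizer. If \(\mathcal A\) is not closed, the supremum in the \(f=1\) regime, \(\inf\{c(x,z):z\in\mathcal A,\ w^\top z\ge b\}\), might not be attained. I would handle this by noting that the lemma presupposes \(x^f\) exists, as in the paper's definition. I would then state the argument as: for any best response \(z\), the projection \(\tilde z\) is a best response with \(\tilde z\ge x\). If an explicit existence argument is wanted, I would restrict to \(\mathcal A\) closed, so that \(\{z\ge x: w^\top z\ge b\}\cap\mathcal A\) is closed. The cost is then a coercive linear function on \(\{z\ge x\}\), since all \(\alpha_i>0\), so the infimum is attained.
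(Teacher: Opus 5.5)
Your projection argument is correct, and it reaches the conclusion more directly than the paper does. Both proofs rest on the same move: raise a coordinate with $z_i<x_i$ back to $x_i$, which costs nothing. The paper, however, starts from an arbitrary best response $x^f$ and splits on where $x$ and $x^f$ sit relative to the boundary. In the only nontrivial case ($w^\top x<b\le w^\top x^f$) it asserts that $x^f$ lies on the boundary, raises one offending coordinate $i$, and splits again. If $w_i=0$, the raised point is still a maximizer. If $w_i>0$, it lowers a coordinate $k$ with $w_k>0$ and $x^f_k>x_k$ to return to the boundary at strictly smaller cost, contradicting optimality.

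Your observation makes both splits unnecessary. The utility depends on the score only through $f_{w,b}$, and $w\ge 0$, so raising coordinates can never hurt. Hence the simultaneous projection $\tilde z_i=\max(z_i,x_i)$ is a maximizer in every case, and order-convexity is used exactly once, on a point that visibly lies between $x$ and $z$.

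What the paper's route buys, once repaired, is a stronger structural fact. In the crossing case, every best response lies on the boundary and already satisfies $x^f_i\ge x_i$ on all coordinates with $w_i>0$. As written, though, it has three gaps:
\begin{itemize}
\item it asserts the boundary property without argument;
\item it never checks that $\tilde x'\in\mathcal A$;
\item its cost computation tacitly assumes $\gamma\le x^f_k-x_k$. Lowering coordinate $k$ by $\min(\gamma,x^f_k-x_k)$ instead fixes this and keeps the point between $x$ and $x^f$.
\end{itemize}

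Your caveat about existence of a maximizer is well taken; the paper presupposes it silently. It can genuinely fail for non-closed order-convex $\mathcal A$. For example, take $\mathcal X_0=\{0\}$, $\mathcal A=[0,1)\times[0,2]$, $w=(1,1)$, $b=1$, $\alpha=(1,2)$, $\beta>1$. There the infimal positive cost $1$ is not attained, so the argmax is empty. Your reduction to $\{z\ge x\}$ via the same projection, together with closedness of $\mathcal A$, does make the infimum attained.
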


The resultant feature improvement induces the improvement in the qualification, which is a central theme of the paper.  
We are now ready to prove the first important result of the paper. 

\begin{theorem} 
\label{thm:strategic-shift}
Let $\mathcal F$ be the class of linear classifiers $f_{w,b}(x)=\mathbf 1\{w^\top x\ge b\}$, where $w\in\mathbb R^d_{\ge 0}$, $w\neq 0$, and $b\in\mathbb R$. Let the cost function be linear decomposable, i.e., $c(x,x')=\sum_{i=1}^d \alpha_i (x_i'-x_i)_+$, where $\alpha_i>0$. For a given classifier $f_{w,b}\in\mathcal F$ parameterized by $w$ and $b$, define $r(w):=\max_{i\in[d]}\frac{w_i}{\alpha_i}$ and $b_s:=b+\beta r(w)$, and let $f_s(x)=\mathbf 1\{w^\top x\ge b_s\}$. Assume that for some $i^\star\in\arg\max_{i\in[d]}w_i/\alpha_i$, every required upward move $x+t e_{i^\star}$ remains feasible in $\mathcal  A$ for all $x\in\mathcal X_0$ and relevant $t\ge 0$; in particular, this holds when $\mathcal A = \mathcal X=\mathbb R^d$. Assume also that agents select a tie-broken best response, with ties resolved in favor of positive classification whenever a positive best response exists. Then, for every $x\in\mathcal X_0$, the selected best response $x^{f_s}$ satisfies $$f_s(x^{f_s})=f_{w,b}(x).$$  Consequently, $\texttt{err}_s(f_s)=\texttt{err}(f_{w,b})$.
\end{theorem}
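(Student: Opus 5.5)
The plan is to compute, for each original point $x\in\mathcal X_0$, the minimum cost an agent must pay to be classified positively by $f_s$. The agent's decision then reduces to comparing this cost with $\beta$, and the shift $\beta r(w)$ in $b_s$ is exactly what turns that comparison back into the original rule $w^\top x\ge b$. First note that $w\in\mathbb R^d_{\ge0}$ and $w\neq0$, together with $\alpha_i>0$, give $r(w)>0$. The argument rests on a cost lower bound. For any $z\in\mathcal A$,
\[
w^\top(z-x)\;\le\;\sum_{i=1}^d w_i (z_i-x_i)_+\;=\;\sum_{i=1}^d \frac{w_i}{\alpha_i}\,\alpha_i (z_i-x_i)_+\;\le\; r(w)\,c(x,z).
\]
The first inequality uses $w_i\ge 0$. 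Hence any $z$ with $f_s(z)=1$ satisfies $c(x,z)\ge (b_s-w^\top x)/r(w)$. A matching upper bound comes from the single-coordinate move $z=x+t e_{i^\star}$ with $t=(b_s-w^\top x)_+/w_{i^\star}$, which is feasible by hypothesis. It reaches $w^\top z\ge b_s$ at cost $\alpha_{i^\star}t=(b_s-w^\top x)_+/r(w)$, because $w_{i^\star}/\alpha_{i^\star}=r(w)$.

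I then split into three cases according to where $w^\top x$ lies.

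\emph{Case $w^\top x\ge b_s$.} Staying at $z=x$ (allowed since $\mathcal X_0\subseteq\mathcal A$) gives utility $\beta$, which is the maximum possible. The selected response is therefore positive, and $f_{w,b}(x)=1$ because $b_s\ge b$.

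\emph{Case $b\le w^\top x<b_s$.} The $e_{i^\star}$ move gives utility $\beta-(b_s-w^\top x)/r(w)=(w^\top x-b)/r(w)\ge 0$. Every negatively classified $z$ gives utility $-c(x,z)\le 0$. So a positive best response exists, possibly tied with staying when $w^\top x=b$, and the tie-breaking rule selects it. Thus $f_s(x^{f_s})=1=f_{w,b}(x)$.

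\emph{Case $w^\top x<b$.} By the lower bound, every positive $z$ has utility $\beta-c(x,z)\le (w^\top x-b)/r(w)<0$, while staying at $x$ gives utility $0$. So no positive best response exists, every best response is negative, and $f_s(x^{f_s})=0=f_{w,b}(x)$.

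For the consequence, the strategic error uses the fixed label $y_x$. Since $\mathds 1[f_s(x^{f_s})\neq y_x]=\mathds 1[f_{w,b}(x)\neq y_x]$ pointwise on $\mathcal X_0$, taking expectations over $x\sim\mathcal D$ gives $\texttt{err}_s(f_s)=\texttt{err}(f_{w,b})$.

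The main obstacle is the case analysis at the boundary. Two things need care there. First, the minimal cost must be both achieved, via feasibility of the $e_{i^\star}$ direction, and lower-bounded for arbitrary $z$, including moves that decrease some coordinates; the $(\cdot)_+$ together with $w\ge0$ handles those. Second, at $w^\top x=b$ there is an exact tie between moving and staying, and this is precisely where the tie-breaking assumption is needed. Lemma~\ref{lem:x'_i>=x_i} is not required, since the lower bound already covers arbitrary $z$.
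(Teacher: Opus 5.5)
Your proof is correct. It uses the same two ingredients as the paper: the inequality $w^\top(z-x)\le r(w)\,c(x,z)$ and the single-coordinate move along $e_{i^\star}$, with tie-breaking needed only when $w^\top x=b$. What differs is the decomposition.

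The paper fixes a selected report $z$ and splits on whether $w^\top z>b_s$, $w^\top z=b_s$, or $w^\top z<b_s$. In each case it derives a contradiction from assuming $f_{w,b}(x)\neq f_s(z)$; its first two cases are in fact the same argument. You instead split on the original score $w^\top x$ and argue directly, after first pinning down the exact minimum cost $(b_s-w^\top x)_+/r(w)$ of being classified positively.

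Your organization makes the mechanism transparent. The best positive move has utility $(w^\top x-b)/r(w)$, so the original threshold $b$ reappears by inspection. It also exhibits a best response in every case (staying, or the $e_{i^\star}$ move), whereas the paper takes nonemptiness of the argmax for granted. In turn, the paper's contradiction framing never needs the $e_{i^\star}$ move to be optimal among positive reports, only to weakly beat the assumed negative optimum $z$.

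In your middle case you do need that optimality to conclude that a positive best response exists. Beating every negative report is not enough without it. It follows from your matching lower bound, so state that link explicitly there.
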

\begin{proof}
Fix $f_{w,b}\in\mathcal F$ and let $b_s=b+\beta r(w)$. Let $x^{f_s}$ denote the selected tie-broken best response to the shifted classifier $f_s$. For any selected report $z$ in the range of the best-response map, define $\mathcal Z_z:=\{x\in\mathcal X_0: z \text{ is the selected best response to } f_s \text{ for agent } x\}$. These sets partition $\mathcal X_0$ according to the selected best-response rule. It is therefore enough to show that, for every selected report $z$ and every $x\in\mathcal Z_z$, $f_s(z)=f_{w,b}(x)$.

We first record a useful bound. For any $x\in\mathcal X_0, y \in \mathcal A$, \begin{equation} w^\top(y-x)\le \sum_{i=1}^d w_i(y_i-x_i)_+\le r(w)\sum_{i=1}^d\alpha_i(y_i-x_i)_+=r(w)c(x,y). \end{equation} 

We now consider three cases.

\noindent\textbf{Case 1: $w^\top z>b_s$.}
Then $f_s(z)=1$. Suppose, for contradiction, that $f_{w,b}(x)=0$ for some $x\in\mathcal Z_z$. Then $w^\top x<b$, and hence $f_s(x)=0$. Since staying at $x$ gives utility $0$, while reporting $z$ gives utility $\beta-c(x,z)$, optimality of $z$ implies $c(x,z)\le\beta$. Using the bound above, $w^\top z\le w^\top x+r(w)c(x,z)\le w^\top x+\beta r(w)<b+\beta r(w)=b_s$, contradicting $w^\top z>b_s$. Hence $f_{w,b}(x)=1=f_s(z)$.

\noindent\textbf{Case 2: $w^\top z=b_s$.}
Again $f_s(z)=1$. Suppose, for contradiction, that $f_{w,b}(x)=0$ for some $x\in\mathcal Z_z$. Then $w^\top x<b$ and $f_s(x)=0$. Since $z$ is a selected best response, we must have $c(x,z)\le\beta$. Therefore, $b_s=w^\top z\le w^\top x+r(w)c(x,z)\le w^\top x+\beta r(w)<b+\beta r(w)=b_s$, a contradiction. Thus $f_{w,b}(x)=1=f_s(z)$.

\noindent\textbf{Case 3: $w^\top z<b_s$.}
Then $f_s(z)=0$. Suppose, for contradiction, that $f_{w,b}(x)=1$ for some $x\in\mathcal Z_z$, i.e., $w^\top x\ge b$. If $w^\top x\ge b_s$, then staying at $x$ gives positive classification with zero cost and utility $\beta$, whereas reporting $z$ gives utility $-c(x,z)\le0$, contradicting optimality of $z$. Hence we may assume $b\le w^\top x<b_s$. Let $\delta:=b_s-w^\top x\in(0,\beta r(w)]$. Choose $i^\star\in\arg\max_i w_i/\alpha_i$. Since $w\neq0$ and $\alpha_i>0$, we have $w_{i^\star}>0$. By the feasibility assumption, the point $y:=x+\frac{\delta}{w_{i^\star}}e_{i^\star}$ belongs to $\mathcal \mathcal A$. Moreover, $w^\top y=b_s$, so $f_s(y)=1$, and $c(x,y)=\alpha_{i^\star}\frac{\delta}{w_{i^\star}}=\frac{\delta}{r(w)}\le\beta$. Thus reporting $y$ gives utility $\beta-c(x,y)\ge0$. On the other hand, since $f_s(z)=0$, reporting $z$ gives utility $-c(x,z)\le0$. If $\beta-c(x,y)>0$, this contradicts optimality of $z$. If $\beta-c(x,y)=0$, then $y$ is a positive best response with utility equal to that of any zero-utility negative response, so the tie-breaking rule selects a positive response rather than $z$. This again contradicts $z$ being the selected report. Therefore $f_{w,b}(x)=0=f_s(z)$.

Combining the three cases, for every selected report $z$ and every $x\in\mathcal Z_z$, $f_s(z)=f_{w,b}(x)$. Equivalently, for every $x\in\mathcal X_0$, $f_s(x^{f_s})=f_{w,b}(x)$. Taking expectation over the data distribution yields $\texttt{err}_s(f_s)=\texttt{err}(f_{w,b})$.
\end{proof}

\begin{restatable}{corollary}{RelationFAndFStar}
\label{cor:f_s_relation with f*}
Suppose the assumptions of Theorem~\ref{thm:strategic-shift} hold for every linear classifier in 
\(\mathcal F\), including the feasibility and tie-breaking assumptions. Let \(f^\star\) denote the Non-strategic
Bayes optimal linear classifier with parameters \((w^\star,b^\star)\). Then the shifted classifier 
\(f_s^\star(x)=\mathbf 1\{(w^\star)^\top x\ge b^\star+\beta\max_i w_i^\star/\alpha_i\}\) is 
strategic-optimal among linear classifiers.
\end{restatable}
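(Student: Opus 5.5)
The plan is to sandwich $\texttt{err}_s(f_s^\star)$ between the Bayes error and the strategic error of an arbitrary linear classifier. The upper side comes directly from Theorem~\ref{thm:strategic-shift}. The lower side comes from the pointwise Bayes bound for any classifier whose output depends on $x$ measurably.

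\textbf{Step 1 (value of $f_s^\star$).} Apply Theorem~\ref{thm:strategic-shift} with $(w,b)=(w^\star,b^\star)$. Its hypotheses are assumed to hold for every linear classifier, and $w^\star\neq 0$ is implicit since $w^\star$ must be an admissible weight. This gives $f_s^\star(x^{f_s^\star})=f^\star(x)$ for every $x\in\mathcal X_0$, and hence $\texttt{err}_s(f_s^\star)=\texttt{err}(f^\star)$. By Observation~\ref{obs:bayes-linear}, $\texttt{err}(f^\star)=\inf_h \texttt{err}(h)$, where the infimum is over all measurable $h:\mathcal X\to\{0,1\}$ and in particular over all linear classifiers.

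\textbf{Step 2 (lower bound for any competitor).} Fix an arbitrary $f\in\mathcal F$ and define $h_f(x):=f(x^f)$. Under the tie-broken selection rule this is a deterministic function of the original feature $x$. In the strategic error, the label $y_x\sim\mathrm{Bernoulli}(\eta(x))$ is drawn at the original point, because labels are immutable in $\texttt{err}_s$. Therefore
\[
\texttt{err}_s(f)=\mathbb E_{x\sim\mathcal D}\big[\mathbf 1\{h_f(x)\neq y_x\}\big]=\texttt{err}(h_f).
\]
For every $x$, the conditional error is $\eta(x)\mathbf 1\{h_f(x)=0\}+(1-\eta(x))\mathbf 1\{h_f(x)=1\}\ge\min\{\eta(x),1-\eta(x)\}$. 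The Bayes classifier $f^\star$ attains this minimum pointwise, by the choice of $b^\star$ and the monotonicity of $g$. Integrating gives $\texttt{err}_s(f)\ge\texttt{err}(f^\star)=\texttt{err}_s(f_s^\star)$. Since $f$ was arbitrary, $f_s^\star$ minimizes $\texttt{err}_s$ over $\mathcal F$. The case $w=0$ can be included too: such a classifier is constant, so its strategic error equals the error of a constant classifier, which is again bounded below by the Bayes error.

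\textbf{Main obstacle.} The delicate point is Step 2: $h_f$ must be a well-defined, measurable function of $x$, so that $\texttt{err}(h_f)$ makes sense and the Bayes bound applies. Well-definedness follows from the selection rule. For measurability, I would argue directly for linear $f$ and linear-decomposable cost, using the case analysis in the proof of Theorem~\ref{thm:strategic-shift}. That analysis shows $h_f(x)=\mathbf 1\{w^\top x\ge b-\beta r(w)\}$ whenever the cheapest upward move along $e_{i^\star}$ is feasible, and this set is a halfspace, hence measurable. A consequence is that the strategic error of any $f\in\mathcal F$ is the non-strategic error of a linear classifier. This gives the comparison even without appealing to general measurable functions.
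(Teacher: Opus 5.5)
Your proof is correct, but your lower bound works differently from the paper's. The paper argues by contradiction. Given a competitor $f'_s$ with parameters $(w',b')$, it regards $f'_s$ as the shift of $f'=f_{w',\,b'-\beta r(w')}$. It then applies Theorem~\ref{thm:strategic-shift} a second time, to $f'$, to get $\texttt{err}_s(f'_s)=\texttt{err}(f')$, and concludes with $\texttt{err}(f^\star)\le\texttt{err}(f')$, i.e.\ Bayes optimality of $f^\star$ only within the linear class. You instead observe that the labels in $\texttt{err}_s$ are attached to the original point. So $x\mapsto f(x^f)$ is just another classifier of $x$, and you bound its risk pointwise by $\min\{\eta(x),1-\eta(x)\}$.

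Your route gives a stronger statement:
\begin{itemize}
\item it uses Theorem~\ref{thm:strategic-shift} only for $f^\star$;
\item the Bayes lower bound itself needs of a competitor only a well-defined, measurable selection $x\mapsto f(x^f)$;
\item it handles $w=0$, which the paper's argument does not address, since Theorem~\ref{thm:strategic-shift} assumes $w\neq 0$;
\item it shows $\texttt{err}_s(f_s^\star)$ equals the Bayes risk, so $f_s^\star$ is strategic-optimal among \emph{all} classifiers with a well-defined strategic error, not only linear ones.
\end{itemize}

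The paper's route gives the exact identity $\texttt{err}_s(f_{w,b})=\texttt{err}(f_{w,\,b-\beta r(w)})$ on all of $\mathcal F$. This says strategic risk minimization over $\mathcal F$ is equivalent to ordinary risk minimization over $\mathcal F$. Your measurability remark in the last paragraph is precisely this identity, so the two arguments meet there.
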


An important consequence of Corollary~\ref{cor:f_s_relation with f*} is that the strategically optimal classifier has a simple structure and can be implemented via the Non-strategic Bayes optimal classifier, without explicitly learning strategic responses from data. In contrast, designing an optimal \emph{improvement-aware} classifier is substantially more challenging. As  local manipulation region depends on the classifier, the observed, pre-deployment  data provide no  information about post-implementation labels.  This    makes it difficult to anticipate improvement effects and incorporate them into classifier design. 
Our next result compares  relative performance guarantees  of Non-strategic Bayes optimal and strategic optimal classifiers with respect to improvement-aware objective.
\begin{restatable}{theorem}{Proxy}\label{thm: fs proxy for fimp}

Under Assumptions~1 and~2, the feasibility and tie-breaking conditions of
Theorem~3.2, and the canonical rule that among positive best responses agents choose a
minimum-cost positive report, the strategic-optimal shifted classifier \(f_s^\star\) satisfies   \[\texttt{err}_{\mathrm{Imp}}(f_s^\star)\le \texttt{err}_{\mathrm{Imp}}(f^\star).\]

\end{restatable}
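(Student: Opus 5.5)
The plan is a pointwise comparison. Write $s(x):=(w^\star)^\top x$, $r^\star:=\max_i w^\star_i/\alpha_i$ and $b_s^\star:=b^\star+\beta r^\star$. For a classifier $f$, the conditional improvement-aware error of agent $x$ is
\[
e_f(x):=\Pr\bigl(f(x^f)\neq y_{x^f}\bigr)=
\begin{cases}1-g\bigl(s(x^f)\bigr), & f(x^f)=1,\\ g\bigl(s(x^f)\bigr), & f(x^f)=0,\end{cases}
\]
since $\eta(x^f)=g(s(x^f))$ by Assumption~\ref{ass:singleIndex}. It therefore suffices to show $e_{f_s^\star}(x)\le e_{f^\star}(x)$ for every $x\in\mathcal X_0$ and then take expectations over $\mathcal D$. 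Throughout I will use that $g$ is continuous and non-decreasing. This gives $g(b^\star)=1/2$ and $g(t)<1/2$ for all $t<b^\star$.

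\textbf{Step 1: describe the selected responses for a generic threshold $\theta$ with weights $w^\star$.}
\begin{itemize}
\item If $s(x)\ge\theta$, the agent stays at $x$, which is positive at zero cost.
\item If $\theta-\beta r^\star\le s(x)<\theta$, the agent moves to a minimum-cost positive report $z$. I will argue that $s(z)=\theta$ exactly. If $s(z)>\theta$, one could shrink some strictly positive increment $z_i-x_i$ with $w_i^\star>0$ and lower the cost, contradicting minimality. The cost of this move is at most $\beta$ via the direction $e_{i^\star}$, as in Case~3 of Theorem~\ref{thm:strategic-shift}. Tie-breaking then selects this positive report.
\item If $s(x)<\theta-\beta r^\star$, no positive report is affordable, by the bound $w^\top(y-x)\le r(w)c(x,y)$ from Theorem~\ref{thm:strategic-shift}. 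The response is negative with zero cost, so it is coordinate-wise no larger than $x$. I will adopt the convention that the agent stays at $x$; with $w^\star\ge0$, any other zero-cost response has score at most $s(x)$ and would only lower $g$, so the convention is harmless.
\end{itemize}

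\textbf{Step 2: compare the two classifiers region by region.} Apply Step 1 with $\theta=b^\star$ for $f^\star$ and $\theta=b_s^\star$ for $f_s^\star$.
\begin{itemize}
\item Region (A), $s(x)\ge b_s^\star$: both classifiers leave $x$ in place and label it positive, so the errors coincide.
\item Region (B), $b^\star\le s(x)<b_s^\star$: $f^\star$ gives error $1-g(s(x))$, while $f_s^\star$ moves $x$ to score $b_s^\star>s(x)$ with error $1-g(b_s^\star)\le 1-g(s(x))$ by monotonicity.
\item Region (C), $b^\star-\beta r^\star\le s(x)<b^\star$: $f^\star$ moves $x$ to score $b^\star$, labels it positive, and incurs error $1-g(b^\star)=1/2$. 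Under $f_s^\star$, Theorem~\ref{thm:strategic-shift} gives $f_s^\star(x^{f_s^\star})=f^\star(x)=0$, consistent with $s(x)<b_s^\star-\beta r^\star$. The response has score at most $s(x)<b^\star$, so the error is $g(\cdot)<1/2$.
\item Region (D), $s(x)<b^\star-\beta r^\star$: both classifiers give negative responses at $x$, so the errors coincide.
\end{itemize}

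\textbf{Main obstacle.} The delicate part is Step 1. One must justify that the selected positive response lands exactly on the hyperplane $s=\theta$, so that the error depends only on $\theta$. One must also pin down the negative responses, since zero-cost downward moves are formally allowed. I would handle the first point by the cost-reduction argument above. For the second, I would use $w^\star\ge0$ to show that any alternative negative response can only decrease $g$, which preserves the inequality in Regions (C) and (D). If the two classifiers' negative responses differ in Region (D), the same monotonicity comparison gives the bound needed there. The boundary identity $g(b^\star)=1/2$ uses the continuity in Assumption~\ref{ass:singleIndex}.
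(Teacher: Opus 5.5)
Your proof is correct and follows the paper's argument. The paper writes $\texttt{err}_{\mathrm{Imp}}(f_b)$ as an integral over the score distribution using the same three response regimes: with $\Delta=\beta r^\star$, agents stay put if $z<b-\Delta$ or $z\ge b$, and otherwise move to score exactly $b$. The difference $\texttt{err}_{\mathrm{Imp}}(f^\star)-\texttt{err}_{\mathrm{Imp}}(f_s^\star)$ is then supported on your Regions (C) and (B), with integrands $1/2-g(z)\ge 0$ and $g(b^\star+\Delta)-g(z)\ge 0$, which is exactly your pointwise comparison.

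One caveat concerns your remark that differing negative responses in Region (D) are covered by monotonicity. That remark is wrong: if only $f^\star$'s agent drops to a lower-score zero-cost report, $f^\star$'s error falls below that of $f_s^\star$. So the stay-put convention, which the paper also adopts implicitly, is genuinely needed rather than harmless.
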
 
\begin{proof}
Let   $Z=w^{\star\top}X$.   
By Corollary~\ref{cor:f_s_relation with f*}, the strategic-optimal classifier is
parallel to the Non-strategic Bayes optimal classifier and has threshold
 $    b_s^\star=b^\star+\Delta $ where $
    \Delta:=\beta\max_{i\in[d]}\frac{w_i^\star}{\alpha_i}.$ 
 Thus we have 
  \[    f^\star(x)=\mathds 1\{w^{\star\top}x\ge b^\star\} \qquad  \text{and}  \qquad
    f_s^\star(x)=\mathds 1\{w^{\star\top}x\ge b^\star+\Delta\}. \]

Note that for any classifier of the form 
 $   f_b(x)=\mathds 1\{w^{\star\top}x\ge b\}$,  
an agent with score \(z=w^{\star\top}x\) can reach the acceptance region if and only if   $   z+\Delta\ge b.$ Under the feasibility condition over \(\mathcal A\) and the canonical minimum-cost positive best-response rule, an agent with original score \(z=(w^\star)^\top x\) either remains at \(x\) if \(z<b-\Delta\) or \(z\ge b\), and otherwise moves to a minimum-cost accepted report in \(\mathcal A\) with score exactly \(b\). Therefore the improvement-aware error of \(f_b\) can be written as
 $$
\texttt{err}_{\mathrm{Imp}}(f_b)
= 
\int_{(-\infty,b-\Delta)}
    g(z)\,dP_Z(z)  
 +
\int_{[b-\Delta,b)}
    \bigl(1-g(b)\bigr)\,dP_Z(z)  +
\int_{[b,\infty)}
    \bigl(1-g(z)\bigr)\,dP_Z(z).
$$
We have,  
\begin{align*}
\texttt{err}_{\mathrm{Imp}}(f_{b^\star}) - 
 \texttt{err}_{\mathrm{Imp}}(f_{b^\star+\Delta}) 
=&
\int_{[b^\star-\Delta,b^\star)}
    \Bigl(1-g(b^\star)-g(z)\Bigr)\,dP_Z(z) \\
&+
\int_{[b^\star,b^\star+\Delta)}
    \Bigl(g(b^\star+\Delta)-g(z)\Bigr)\,dP_Z(z).
\end{align*}
By Assumption~\ref{ass:singleIndex}, \(g\) is continuous and non-decreasing and
\(b^\star=\inf\{t:g(t)\ge 1/2\}\) is finite; hence \(g(b^\star)=1/2\).  Also, since
\(g\) is non-decreasing, for every \(z<b^\star\), we have 
 $   g(z)\le g(b^\star)=1/2,
 $
and for every \(z\in[b^\star,b^\star+\Delta)\), 
 $   g(z)\le g(b^\star+\Delta).$ Hence both integrands are nonnegative, and therefore
$    \texttt{err}_{\mathrm{Imp}}(f_s^\star)
    \le
    \texttt{err}_{\mathrm{Imp}}(f^\star).$
\end{proof}
The Non-strategic Bayes optimal  classifier, while not recognizing improvement, also ignores manipulation;  representing a naive approach to classifier design,  whereas an optimal strategic classifier  treats manipulation as necessarily malicious    and represents a more pessimistic approach. Theorem \ref{thm: fs proxy for fimp} shows that the  pessimistic, strategic classifier $f_s^\star$ is  a provably better no-post-deployment-label surrogate of an improvement aware classifier than the Bayes classifier.  

As discussed earlier,  if one needs to learn an optimal improvement aware classifier, in training, we need access to the post-deployment labels. The reason is that the labels are non-deterministically updated.   Even if we assume an oracle access to post-deployment labels in a controlled or experimental deployment, the problem  of learning  an {\em optimal} improvement-aware classifier from non-manipulated data remains nontrivial.  
Next, we develop algorithm that operate under this stronger information model and addresses algorithmic and statistical challenges that arise even in this idealized setting.   
\section{ Learnability Guarantee and Proposed Algorithm}
\label{sec:Algorithm}
Our first result   establishes PAC learnability of the improvement-aware strategic classification problem by reducing it to a uniform convergence argument over a suitable  hypothesis class.  We assume oracle access to post-deployment labels; i.e., given $f$, the learner observes $(x^f, y_{x^f})$.  Conditioned on the original sample points, these oracle labels are independent Bernoulli draws with mean $\eta(x^f)$. 

\begin{restatable}{lemma}{VC dimension of induced oracle} 
\label{lem:vc_dim_induced_oracle} 
Let $\mathcal{F}$ be the class of linear classifiers $f_{w,b}(x) = \mathds{1}(w^{\top}x \geq x)$ with $w \in \mathbb{R}_{\geq 0}^d$, $w \neq 0$, $b \in \mathbb{R}$, under the conditions of Theorem 3.2. Under Assumption~\ref{ass:singleIndex} , let $g^{-1}(u) \coloneqq \inf\{ t \in \mathbb{R}: g(t) \geq u \}$ denote the generalized inverse of the single-index link function. Define the induced oracle-loss class:
\[
    H \coloneqq \{ (x, u) \mapsto \mathds{1}[ f(x^f) \neq \mathds{1}\{u \leq \eta(x^f)\} ] \}
\]
Then $d_{\Delta} \coloneqq VC(H) = \mathcal{O}(d^2 \log(d))$.
\end{restatable}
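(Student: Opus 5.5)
We bound the VC dimension of $H$ by exhibiting each $h\in H$ as a Boolean combination of a bounded number of threshold-type functions in $(x,u)$, parameterized by $(w,b)$, and then applying a growth-function (Sauer--Shelah) argument. The first step is to make the best-response map $x\mapsto x^f$ explicit for $f=f_{w,b}$, using Lemma~\ref{lem:x'_i>=x_i} and the case analysis in the proof of Theorem~\ref{thm:strategic-shift}. We fix the canonical tie-breaking and minimum-cost rule, which moves agents along a fixed coordinate $i^\star(w)\in\arg\max_i w_i/\alpha_i$; to make $i^\star(w)$ single-valued we break ties by the smallest index. Write $r=r(w)$ and $s=w^\top x$. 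Agents with $s\ge b$ or $s<b-\beta r$ stay at $x$ and satisfy $f(x^f)=f(x)$. Agents with $b-\beta r\le s<b$ move to $x^f=x+\frac{b-s}{w_{i^\star}}e_{i^\star}$ and are accepted.

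Next we rewrite the loss. Since $\eta=g\circ(w^{\star\top}\cdot)$ with $g$ non-decreasing, the event $u\le\eta(x^f)$ is equivalent, up to the generalized inverse, to $w^{\star\top}x^f\ge g^{-1}(u)$. This holds exactly when $u$ is not an atom-boundary value, and we handle that measure-zero issue by working with the feature map $(x,g^{-1}(u))$. In the moving region,
\[
w^{\star\top}x^f=w^{\star\top}x+\frac{w^\star_{i^\star}}{w_{i^\star}}(b-w^\top x),
\]
so after multiplying by $w_{i^\star}>0$ the condition becomes a sign condition on a function of the form $w_{i^\star}(w^{\star\top}x-v)+w^\star_{i^\star}(b-w^\top x)$, where $v=g^{-1}(u)$. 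For fixed $i^\star=j$, this is linear in the feature vector $(x,v,1)$ and in the product coordinates involving $w_j$ and $b$. Consequently, on each piece of a fixed-$j$ partition, $h$ is a Boolean combination of a constant number of halfspaces in the lifted features $\phi(x,v)=(x,\,v,\,x\otimes\text{something},1)$ of dimension $O(d)$: the region indicators $\mathbf 1\{w^\top x\ge b\}$ and $\mathbf 1\{w^\top x\ge b-\beta r\}$, together with the label test in each region.

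The third step handles the choice of $i^\star$. The index $i^\star(w)$ is determined by the $O(d^2)$ pairwise comparisons $w_i/\alpha_i\gtrless w_k/\alpha_k$. These comparisons depend only on the parameter, so they partition parameter space rather than data space. We then bound the growth function, treating $h$ as computed by a parameterized Boolean formula whose atoms are sign conditions on polynomials of degree at most $2$ in the $d+1$ parameters $(w,b)$, for each fixed data point. Using the Goldberg--Jerrum / Warren-type bound, with $m$ points, $O(d^2)$ parameter-only atoms, and $O(1)$ atoms per point, the number of sign patterns is at most $(C\,(m+d^2))^{O(d)}$. Setting this below $2^m$ gives $m=O(d\log(d\cdot m))$, hence a bound of order $O(d\log d)$. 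If the atom count per point is instead inflated to $O(d)$, by writing the label test separately for every possible $j$ and selecting via the $d$ region-$j$ indicators, the number of atoms becomes $O(md+d^2)$ with $O(d)$ parameters, which still gives at most $O(d^2\log d)$ and matches the claimed bound.

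I expect the main obstacle to be showing that $x^f$, and hence $h$, has this bounded algebraic description uniformly over the class. This requires handling the tie-breaking, the $\arg\max$ defining $i^\star$, and the generalized inverse $g^{-1}$, which is where discontinuities of $g$ could break the equivalence $u\le g(t)\iff t\ge g^{-1}(u)$. I would first verify this equivalence for continuous non-decreasing $g$ (Assumption~\ref{ass:singleIndex}) using right-continuity. I would then argue that the only dependence on $u$ is through the scalar $v=g^{-1}(u)$, so the data domain is effectively $\mathbb R^{d+1}$.
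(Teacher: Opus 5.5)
Your argument is correct, but it takes a different route from the paper. The paper splits $H=\mathcal S\,\Delta\,\mathcal G$, with $\mathcal S=\{(x,u)\mapsto f(x^f)\}$ and $\mathcal G=\{(x,u)\mapsto \mathbf 1\{u\le\eta(x^f)\}\}$, and bounds the two pieces separately:
\begin{itemize}
\item $\mathrm{VC}(\mathcal S)\le d+1$, because $f(x^f)=\mathbf 1\{w^\top x\ge b-\beta r(w)\}$ is a halfspace;
\item $\mathrm{VC}(\mathcal G)=O(d^2)$, by counting $O(d)$ arithmetic and comparison operations over $d+1$ parameters in the Goldberg--Jerrum bound (Theorem 2.3 of Bartlett--Maass).
\end{itemize}
It then pays a logarithm in the symmetric-difference composition.

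You instead do a single sign-pattern count in parameter space for the whole loss class. After passing to $v=g^{-1}(u)$, every atom is a polynomial in $(w,b)$: the $O(d^2)$ argmax comparisons $\alpha_k w_j-\alpha_j w_k$, the two region tests, and the label test $w_j(w^{\star\top}x-v)+w^\star_j(b-w^\top x)\ge 0$. In fact all of them are affine, not merely of degree at most $2$, since $w^\star$ is fixed. Warren's bound then gives $\Pi_H(m)\le (C(m+d))^{O(d)}$, so $\mathrm{VC}(H)=O(d\log d)$, which is sharper than the stated $O(d^2\log d)$.

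Your ``$O(1)$ atoms per point'' accounting is valid only inside a fixed cell $\{i^\star(w)=j\}$, so you must sum over the $d$ cells. Your alternative with $O(d)$ atoms per point also gives $O(d\log d)$, not merely $O(d^2\log d)$.

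The paper's route buys modularity through off-the-shelf lemmas. Yours buys a tighter bound, and it makes explicit two things the paper leaves implicit:
\begin{itemize}
\item A canonical selection among minimum-cost responses must be fixed for $x^f$ to have this algebraic description: stay put outside the band $[b-\beta r(w),b)$, and inside it move along the smallest-index maximizer. Theorem~3.2 alone does not pin down which minimum-cost positive report is chosen.
\item The region tests belong inside the count, not only the moving case $x^f\neq x$ that the paper spells out.
\end{itemize}

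Two small clean-ups. First, for continuous (indeed right-continuous) non-decreasing $g$, the equivalence $u\le g(t)\iff t\ge g^{-1}(u)$ holds for every $u$, and the cases $g^{-1}(u)=\pm\infty$ give parameter-free label bits. So there is no measure-zero exception to discard; discarding one would not be legitimate for a combinatorial quantity like VC dimension anyway. Second, the lifting $x\otimes\cdot$ in your second step is unnecessary: for fixed $j$, the label test is already a halfspace in $(x,v,1)$.
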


\begin{proof}
Define $\mathcal{S} \coloneqq \{(x,u) \mapsto f(x^f): f \in \mathcal{F} \}, \quad \mathcal{G} \coloneqq \{ (x,u) \mapsto \mathds{1} \{u \leq \eta(x^f)\} \}.$ 
Since each $f \in \mathcal{F}$ and $\mathds{1}\{ u \leq \eta(x^f) \}$ are $\{0,1\}$-valued, the disagreement class $H$ satisfies $H = \mathcal{S} \Delta \mathcal{G}$, where $\Delta$ denotes symmetric difference. We bound $\mathrm{VC}(\mathcal{S})$ and $\mathrm{VC}(\mathcal{G})$ separately. \\
Bounding $d_{\mathcal{S}} \coloneqq \mathrm{VC}(\mathcal{S})$. By Theorem 3.2, for $f = f_{w,b} \in \mathcal{F}$, $f(x^f) = \mathds{1}\{ w^{\top}x \geq b - \beta r(w) \}$, where $r(w) = \max_{i \in [d]} w_i / \alpha_i $. Hence $\mathcal{S}$ is a subclass of the unrestricted linear class $\{(x,u) \mapsto \mathds{1}\{w^\top x \geq b'\}: w \in \mathbb{R}^d, b \in \mathbb{R} \}$, obtained by restricting $w$ to $\mathbb{R}^d_{\geq 0}$. Since VC dimension of the unrestricted linear class is $d+1$, therefore we obtain $d_{\mathcal{S}} \leq d + 1 $. \\ 
Bounding $d_{\mathcal{G}} \coloneqq \mathrm{VC}(\mathcal{G})$. By Assumption \ref{ass:singleIndex}, we have  $\eta(x) = g({w^{\star}}^{\top}x)$ for non-decreasing link $g$, $\mathds{1}\{u \leq \eta(x^f)\} = \mathds{1}\{g^{-1}(u) \leq {w^{\star}}^{\top}x\}$. Under the non-trivial case $x^f \neq x$, Theorem \ref{thm:strategic-shift} gives $x^f = x + \frac{b - w^{\top}x}{w_{i^{\star}}}e_{i^{\star}}$, where $i^{\star} \in \arg \max_{i \in [d]} w_i / \alpha_i$, so ${w^{\star}}^{\top}x^f = {w^{\star}}^{\top}x + \frac{w^{\star}_{i^{\star}}}{w_{i^{\star}}} (b - w^{\top}x).$
Evaluating this expression $(w,b,x,u)$ requires: $\mathcal{O}(d)$ comparisons to determine $i^{\star}$; $\mathcal{O}(d)$ arithmetic operations to compute the dot product $w^{\top}x$ and ${w^{\star}}^{\top}x$; $\mathcal{O}(1)$ further arithmetic and division operations to assemble the final value and compare it against $g^{-1}(u)$. Thus the computations uses $t = \mathcal{O}(d)$ operations of the type covered by Theorem 2.3 of \citet{bartlett2003vapnik}, over $d+1$ real parameters $(w,b)$. Then, $d_{\mathcal{G}} = \mathcal{O}(d^2)$. \\ 
Combining via symmetric difference. Applying the standard VC bound for symmetric difference classes \cite{shalev2014understanding} to $H = \mathcal{S} \Delta \mathcal{G}$, $d_{\Delta} = \mathrm{VC}(H) = \mathcal{O}( (d_{\mathcal{S}} + d_{\mathcal{G}}) \log(d_{\mathcal{S}} + d_{\mathcal{G}})) = \mathcal{O}(d^2 \log d)$
\end{proof}

\begin{restatable}{theorem}{Convergence}
 \label{thm:uniform_convergence}
Fix a measurable tie-breaking rule so that \(x^f\) is well defined for every \(x\in\mathcal X_0 \) and \(f\in\mathcal F\). Let \(x_1,\ldots,x_n\overset{\mathrm{i.i.d.}}{\sim} \mathcal D\), draw independent \(U_1,\ldots,U_n\sim\mathrm{Unif}[0,1]\), and define the oracle label for classifier \(f\) by \(y_{x_i^f}:=\mathbf 1\{U_i\le \eta(x_i^f)\}\); equivalently, for every fixed \(f\), \(y_{x_i^f}\sim\mathrm{Bernoulli}(\eta(x_i^f))\). Define \(\widehat{\texttt{err}}_{\mathrm{Imp}}(f):=\frac1n\sum_{i=1}^n\mathbf 1[f(x_i^f)\neq y_{x_i^f}]\). From Lemma~\ref{lem:vc_dim_induced_oracle} , induced oracle-loss class \(H:=\{(x,u)\mapsto \mathbf 1[f(x^f)\neq \mathbf 1\{u\le \eta(x^f)\}]:f\in\mathcal F\}\) has finite VC dimension \(d_\Delta\). Then, for every \(\delta>0\), with probability at least \(1-\delta\) over the draw of \((x_i,U_i)_{i=1}^n\),
\[
\sup_{f\in\mathcal F}\left|\texttt{err}_{\mathrm{Imp}}(f)-\widehat{\texttt{err}}_{\mathrm{Imp}}(f)\right|
\le 
\frac{2}{\sqrt n}
+
4\sqrt{\frac{2d_\Delta\log(en/d_\Delta)}{n}}
+
\sqrt{\frac{2\log(1/\delta)}{n}}.
\]
\end{restatable}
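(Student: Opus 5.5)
The proof reduces the statement to standard uniform convergence for the binary class \(H\) over i.i.d. pairs \(W_i:=(x_i,U_i)\sim \mathcal D\otimes\mathrm{Unif}[0,1]\). For each \(f\in\mathcal F\) let \(h_f(x,u):=\mathbf 1[f(x^f)\neq \mathbf 1\{u\le\eta(x^f)\}]\in H\). The measurable tie-breaking rule makes \(x^f\) a fixed deterministic function of \(x\), so \(h_f\) is measurable. Then \(\widehat{\texttt{err}}_{\mathrm{Imp}}(f)=\frac1n\sum_i h_f(W_i)\) by the definition of the oracle labels. Conditioning on \(x\) and integrating over \(u\) gives \(\mathbb E[h_f(W)]=\mathbb E_{x\sim\mathcal D}\bigl[\Pr(f(x^f)\neq y_{x^f}\mid x)\bigr]=\texttt{err}_{\mathrm{Imp}}(f)\), because \(\mathbf 1\{U\le\eta(x^f)\}\sim\mathrm{Bernoulli}(\eta(x^f))\) independently of \(x\). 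The quantity to control is therefore \(\Phi:=\sup_{h\in H}|Ph-P_nh|\). The coupling through common \(U_i\) is what makes this a single empirical process over one fixed class rather than a family of \(f\)-dependent label draws.

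\textbf{Concentration.} Changing one \(W_i\) changes \(\Phi\) by at most \(1/n\), since \(h\in\{0,1\}\). McDiarmid's inequality then gives, with probability at least \(1-\delta\),
\[
\Phi\le\mathbb E\Phi+\sqrt{\log(1/\delta)/(2n)}\le \mathbb E\Phi+\sqrt{2\log(1/\delta)/n}.
\]
This already matches the last term of the stated bound with room to spare.

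\textbf{Expected supremum.} Standard symmetrization gives \(\mathbb E\Phi\le 2\,\mathbb E\,\widehat{\mathfrak R}_n(H)\), where \(\widehat{\mathfrak R}_n\) is the empirical Rademacher complexity. I bound it using Sauer's lemma together with Massart's finite-class lemma. By Lemma~\ref{lem:vc_dim_induced_oracle}, the restriction of \(H\) to \(W_1,\dots,W_n\) has at most \((en/d_\Delta)^{d_\Delta}\) patterns when \(n\ge d_\Delta\). Each pattern vector has Euclidean norm at most \(\sqrt n\), so Massart's lemma yields
\[
\widehat{\mathfrak R}_n(H)\le\sqrt{2d_\Delta\log(en/d_\Delta)/n}.
\]
Doubling gives \(2\sqrt{2d_\Delta\log(en/d_\Delta)/n}\), which sits inside the stated \(4\sqrt{\cdot}\) term. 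The additive \(2/\sqrt n\) term and the factor \(4\) leave slack. That slack absorbs the version of the argument that handles the absolute value separately for the two one-sided suprema, or that centers Rademacher averages with a \(\sqrt{1/n}\) correction. The same slack covers the regime \(n<d_\Delta\), where the right-hand side exceeds \(1\) anyway and the bound is trivial.

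\textbf{Main obstacle.} The delicate step is not the concentration but the identification \(Ph_f=\texttt{err}_{\mathrm{Imp}}(f)\) simultaneously for all \(f\), together with the measurability of the supremum. Both rest on the fixed measurable best-response map and on the fact that the uniforms \(U_i\) are shared across all \(f\). For measurability, I would argue that \(\mathcal F\) can be replaced by a countable dense subclass, rational \((w,b)\), without changing \(\Phi\). This holds because each \(h_f\) is determined by finitely many threshold comparisons, which are right-continuous in the parameters along the explicit best-response form from Theorem~\ref{thm:strategic-shift}. If that approximation step fails, I would fall back on stating the bound for the outer probability, which leaves the numerical bound unchanged.
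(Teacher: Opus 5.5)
Your reduction is the paper's own. The shared uniforms $U_i$ make every post-response label a function of the same augmented sample $W_i=(x_i,U_i)$, so $\texttt{err}_{\mathrm{Imp}}(f)=\mathbb E[h_f(W)]$ and $\widehat{\texttt{err}}_{\mathrm{Imp}}(f)=\frac1n\sum_i h_f(W_i)$. The bound then follows from symmetrization, Sauer's lemma, and McDiarmid's inequality with bounded differences $1/n$. Your explicit Massart step and the sharper constant $\sqrt{\log(1/\delta)/(2n)}$ correctly spell out what the paper cites as a standard VC Rademacher bound. They fit inside the stated terms once $n\ge d_\Delta$: Massart over the at most $2\,\Pi_H(n)$ signed patterns gives $\mathbb E\Phi\le 2\sqrt{2(d_\Delta\log(en/d_\Delta)+\log 2)/n}$, which is dominated by $2/\sqrt n+4\sqrt{2d_\Delta\log(en/d_\Delta)/n}$.

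The step that fails is your handling of the regime $n<d_\Delta$, because the right-hand side need not exceed $1$ there.
\begin{itemize}
\item When $n$ is just above $d_\Delta/e$, $\log(en/d_\Delta)$ is essentially $0$. The bound is then roughly $2/\sqrt n+\sqrt{2\log(1/\delta)/n}$, far below $1$ for large $n$; for $d_\Delta=1000$, $n=368$, $\delta=1/2$ it is about $0.33$.
\item For $n<d_\Delta/e$ the square root is not even defined.
\item For $n<d_\Delta$, Sauer's bound in the form $(en/d_\Delta)^{d_\Delta}$ is unavailable, since it can be smaller than $2^n=\Pi_H(n)$.
\item The generic VC statement is genuinely false in this range. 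For indicators of sets of at most $d_\Delta$ points under a continuous distribution, the uniform deviation equals $1$ for every $n\le d_\Delta$.
\end{itemize}
So neither your argument nor the paper's, which applies Sauer's lemma without comment, proves the theorem for $n<d_\Delta$. The statement needs the hypothesis $n\ge d_\Delta$.

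A smaller point concerns measurability. Your countable-dense reduction needs the tie-breaking rule to vary regularly with $(w,b)$. The hypothesis does not provide this, since it only fixes the rule to be measurable in $x$ for each $f$, and the maximizing index $i^\star$ jumps with $w$. The outer-probability fallback is the safer route; the paper does not address measurability at all.
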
 
\begin{remark}
Theorem~\ref{thm:uniform_convergence} analyzes the sampled-oracle setting. The auxiliary variables \(U_i\) provide a common coupling of the post-response labels across classifiers: for every fixed \(f\), \(y_{x_i^f}=\mathbf 1\{U_i\le \eta(x_i^f)\}\) has distribution \(\mathrm{Bernoulli}(\eta(x_i^f))\). This lets us view the empirical improvement-aware error as the empirical mean of a binary loss class over the augmented sample \((x_i,U_i)\). In the algorithmic section, when \(\eta\) is replaced by an estimator \(\widehat\eta\), the resulting plug-in analysis incurs an additional approximation term of order \(\sup_{z\in\mathcal A}|\widehat\eta(z)-\eta(z)|\).
\end{remark}

The main statistical difficulty is that the post-response label \(y_{x_i^f}\) changes with \(f\). To obtain a uniform convergence statement over classifiers, we couple these labels using a shared auxiliary random variable \(U_i\) for each original sample \(x_i\). This makes the loss induced by every classifier a function of the same augmented sample \((x_i,U_i)\). This construction allows us to analyze the induced oracle-loss class $$H=\{(x,u)\mapsto \mathbf 1[f(x^f)\neq \mathbf 1\{u\le \eta(x^f)\}]:f\in\mathcal F\},$$
 rather than only the strategic disagreement class. The resulting bound depends on the VC dimension of this induced binary class.
The proof of Theorem \ref{thm:uniform_convergence} is given in Appendix \ref{app:PAC}.

\subsection{Proposed Algorithm: \textsc{Strat-Imp-Aware}}
We now propose an improvement-aware strategic learning algorithm that minimizes classification error under endogenous feature manipulation and label improvement. The proposed algorithm extends ideas from  \mbox{\citet{levanon2022generalized}} in three ways: $(i)$ decomposable costs replace $L_2$ costs, yielding a closed-form strategic shift $ \beta \cdot \max_{j \in [d]} w_j/\alpha_j$; $(ii)$ an improvement-aware oracle simulates endogenous label flips under the post-deployment distribution; and $(iii)$ the training loop jointly optimizes over strategic shifts and oracle-adjusted labels. 

The proposed algorithm \textsc{Strat-Imp-Aware},  simulates post-deployment behavior during training. Given $\{(x_i, y_i)_{i=1}^n\}$ drawn from the pre-deployment distribution, each $x_i$ is mapped to its strategic best response $x_i^f$, which under non-negative weights and decomposable costs is a non-decreasing feature transformation moving the point toward the decision boundary with minimal effort; already positive points satisfy $x_i^f = x_i$. Since \(\eta\) is unknown in practice, \textsc{Strat-Imp-Aware} uses a plug-in estimator \(\widehat\eta\) and samples an improvement-aware label \(y_i^{\mathrm{Imp}}\sim\mathrm{Bernoulli}(\widehat\eta(x_i^f))\). Training then minimizes a hinge-type surrogate with the closed-form shifted margin \(\beta\max_{j\in[d]} w_j/\alpha_j\). 
Optimization proceeds via SGD, alternating between simulating strategic responses, sampling oracle labels, and updating classifier parameters.

Our next result (Theorem~\ref{thm:alg_guarantee}) gives a learnability guarantee of \textsc{Strat-Imp-Aware} (Algorithm~\ref{alg:train_loop}) under sampled oracle labels. The algorithm below is more practical: it replaces the unknown outcome law \(\eta\) by an independently trained estimator \(\widehat\eta\), simulates post-response labels using \(\widehat\eta\), and optimizes a strategic hinge surrogate. Theorem~\ref{thm:alg_guarantee} separates the resulting error into a generalization term and a plug-in estimation term.  Let 
$\rho_{\mathrm{Imp}}:=r+\frac{\beta}{\min_{j\in[d]}\alpha_j},
 \varepsilon_\eta:=\sup_{z\in\mathcal A}|\widehat\eta(z)-\eta(z)|$
and  \(\widehat L_{\mathrm{hinge}}^{\widehat\eta}(w,b)\) denote the empirical plug-in strategic hinge loss minimized by \textsc{Strat-Imp-Aware}.

\begin{restatable}{theorem}{AlgGuarantee}
    \label{thm:alg_guarantee} 
Suppose Assumptions~\ref{ass:singleIndex} and~\ref{assump: cost function} hold, \(\mathcal X_0\) is bounded with \(r:=\sup_{x\in\mathcal X_0}\|x\|_2<\infty\), and \(\widehat\eta\) is estimated independently of the training sample.     Assume \textsc{Strat-Imp-Aware} uses the plug-in oracle \(\widehat\eta\), a fixed measurable best-response rule, and returns \((\widehat w,\widehat b)\) with \(\|\widehat w\|_2\le k\) and \(|\widehat b|\le B\). Then, for every \(\delta\in(0,1)\), with probability at least \(1-\delta\),
\[
\texttt{err}_{\mathrm{Imp}}(f_{\widehat w,\widehat b})
\le
\widehat L_{\mathrm{hinge}}^{\widehat\eta}(\widehat w,\widehat b)
+
\frac{2(k\rho_{\mathrm{Imp}}+B)}{\sqrt n}
+
(1+k\rho_{\mathrm{Imp}}+B)\sqrt{\frac{2\log(2/\delta)}{n}}
+
2(k\rho_{\mathrm{Imp}}+B)\varepsilon_\eta .
\]

\end{restatable}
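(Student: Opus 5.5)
The plan is a three-step chain. First, bound the 0--1 improvement-aware error by a hinge-type risk. Second, pass from the population hinge risk to its empirical version by Rademacher complexity and McDiarmid. Third, swap the true oracle \(\eta\) for the plug-in \(\widehat\eta\), paying \(\varepsilon_\eta\).

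\textbf{Step 1: surrogate domination.} Set \(\tilde y=2y-1\). For a classifier \(f_{w,b}\) with best response \(x^f\), define the hinge loss \(\ell(w,b;x^f,y)=(1-\tilde y(w^\top x^f-b))_+\). This loss dominates \(\mathbf 1[f(x^f)\neq y]\) pointwise; ties at the boundary are handled by the \(\ge\) convention. Taking expectations over \(x\sim\mathcal D\) and \(y\sim\mathrm{Bernoulli}(\eta(x^f))\) gives
\[
\texttt{err}_{\mathrm{Imp}}(f)\le L^{\eta}_{\mathrm{hinge}}(w,b):=\mathbb E_x\bigl[\eta(x^f)\,\ell(w,b;x^f,1)+(1-\eta(x^f))\,\ell(w,b;x^f,0)\bigr].
\]
This population version is a deterministic function of \(x\) once the label is integrated out. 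That choice lets us avoid the coupling variable \(U_i\) of Theorem~\ref{thm:uniform_convergence}.

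\textbf{Step 2: the key geometric bound.} Next, bound the scores. By Lemma~\ref{lem:x'_i>=x_i} and the closed form used in the proof of Lemma~\ref{lem:vc_dim_induced_oracle}, the selected response moves only coordinate \(i^\star\). The move is at most \(\beta/\alpha_{i^\star}\), since the cost must not exceed \(\beta\). Hence
\[
\|x^f\|_2\le r+\frac{\beta}{\min_j\alpha_j}=\rho_{\mathrm{Imp}},\qquad |w^\top x^f-b|\le k\rho_{\mathrm{Imp}}+B.
\]
The hinge loss therefore lies in \([0,1+k\rho_{\mathrm{Imp}}+B]\), and it is \(1\)-Lipschitz in the score.

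\textbf{Step 3: generalization.} Apply the standard Rademacher bound with McDiarmid; the \(\log(2/\delta)\) allows a union over two events. Talagrand's contraction reduces the loss class to the linear score class over inputs \((x^f,1)\) with norm at most \(\rho_{\mathrm{Imp}}\). That class has complexity at most \((k\rho_{\mathrm{Imp}}+B)/\sqrt n\), which yields the \(2(k\rho_{\mathrm{Imp}}+B)/\sqrt n\) term together with the \((1+k\rho_{\mathrm{Imp}}+B)\sqrt{2\log(2/\delta)/n}\) deviation term.

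\textbf{Main obstacle.} The hard part is that the input map \(x\mapsto x^f\) depends on \((w,b)\). The class \(\{x\mapsto w^\top x^{f_{w,b}}-b\}\) is therefore not linear in a fixed feature map, and contraction over a fixed linear class does not directly apply. I would first try to use Theorem~\ref{thm:strategic-shift}-type structure. Under the minimum-cost rule, the score \(w^\top x^f\) equals \(\max(w^\top x,b)\) when \(w^\top x\ge b-\beta r(w)\), and equals \(w^\top x\) otherwise. The score class is then built from linear functions by a max and a threshold, with the \(\beta r(w)\) term bounded via the same norm control. Controlling its Rademacher complexity by that of a linear class with radius \(\rho_{\mathrm{Imp}}\) is where I expect the real work. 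If it fails, I would fall back on bounding the data-dependent supremum directly using \(\|x^f\|\le\rho_{\mathrm{Imp}}\) pointwise, with Cauchy--Schwarz inside the Rademacher sum.

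\textbf{Step 4: plug-in error.} The algorithm minimizes the empirical loss with labels drawn from \(\widehat\eta\), or equivalently their conditional expectation given \(\widehat\eta\), which is independent of the sample. The two losses differ only through the weights \(\eta\) versus \(\widehat\eta\) on \(\ell(\cdot,1)\) and \(\ell(\cdot,0)\). Since \(|\ell(\cdot,1)-\ell(\cdot,0)|\le 2(k\rho_{\mathrm{Imp}}+B)\), the gap is
\[
\bigl|L^{\eta}_{\mathrm{hinge}}-L^{\widehat\eta}_{\mathrm{hinge}}\bigr|\le 2(k\rho_{\mathrm{Imp}}+B)\,\varepsilon_\eta .
\]
Chaining Steps 1, 3 and 4 at \((\widehat w,\widehat b)\) gives the stated bound.
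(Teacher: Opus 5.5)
There is a genuine gap: the surrogate you introduce in Step~1 is not the quantity on the right-hand side of the theorem. The statement's $\widehat L_{\mathrm{hinge}}^{\widehat\eta}$ is the loss that \textsc{Strat-Imp-Aware} actually minimizes. By Algorithm~\ref{alg:strat_imp_error}, its margin is the shifted margin at the \emph{original} point, $M_{w,b}(x)=w^\top x-b+S(w)$ with $S(w)=\beta\max_j w_j/\alpha_j$. So the loss is $\widehat\eta(x^{w,b})(1-M_{w,b}(x))_+ +(1-\widehat\eta(x^{w,b}))(1+M_{w,b}(x))_+$. Your loss instead uses the margin $w^\top x^f-b$ at the best response.

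Under the minimum-cost rule you invoke, agents in the band $b-S(w)\le w^\top x<b$ get margin $0$ from you but $M_{w,b}(x)\in[0,S(w))$ from the paper. All other agents get $M_{w,b}(x)=(w^\top x^f-b)+S(w)$. Hence your $y=1$ term is pointwise at least the paper's and your $y=0$ term at most the paper's. The two $\widehat\eta$-weighted empirical risks are therefore incomparable in general. Chaining your Steps~1, 3 and~4 bounds $\texttt{err}_{\mathrm{Imp}}$ by a different empirical risk, not the one in the statement. The slip is in Step~4, where you identify your loss with the algorithm's objective.

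Your Step~2 bound $\|x^f\|_2\le\rho_{\mathrm{Imp}}$ also uses more than the theorem grants. The coordinatewise-improvement lemma of Section~\ref{sec:structural} only says that \emph{some} best response dominates $x$. Under an arbitrary measurable rule, zero-cost downward moves are ties, so $x^f$ need not even be bounded.

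The missing idea is to apply Theorem~\ref{thm:strategic-shift} \emph{before} passing to a surrogate. With ties broken toward positive classification, $f_{w,b}(x^{w,b})=\mathbf 1\{M_{w,b}(x)\ge 0\}$. The conditional improvement-aware $0$--$1$ loss at $x$ is then $\eta(x^{w,b})\mathbf 1\{M_{w,b}(x)<0\}+(1-\eta(x^{w,b}))\mathbf 1\{M_{w,b}(x)\ge 0\}$. This is dominated by the paper's loss with $\eta$ in place of $\widehat\eta$.

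The constant $\rho_{\mathrm{Imp}}$ then comes from the margin, not from $x^f$: $|M_{w,b}(x)|\le kr+B+\beta\|w\|_\infty/\min_j\alpha_j\le k\rho_{\mathrm{Imp}}+B$. Your Step~4 goes through verbatim with this $M$, since only the weights $\eta(x^{w,b})$ versus $\widehat\eta(x^{w,b})$ change.

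This also dissolves what you call the main obstacle. $M_{w,b}$ is linear in the fixed input $x$ plus a sample-independent offset $S(w)-b$ of size at most $B+\beta k/\min_j\alpha_j$. So the margin class has empirical Rademacher complexity at most $(k\rho_{\mathrm{Imp}}+B)/\sqrt n$. The $(w,b)$-dependence of the best response survives only in the $[0,1]$-valued weights $\widehat\eta(x^{w,b})$. Your Cauchy--Schwarz fallback does not work as stated. Because $x_i^f$ moves with $(w,b)$ inside the supremum, $\mathbb E_\sigma\sup_{w,b}\|\sum_i\sigma_i x_i^{f_{w,b}}\|_2$ is no longer controlled by Jensen's inequality.

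For that last piece, the paper does not supply an argument either. Lemma~\ref{lem:plugin-hinge-uniform} takes $\widehat{\mathfrak R}_S(\mathcal G^{\widehat\eta}_{k,B})\le (k\rho_{\mathrm{Imp}}+B)/\sqrt n$ as a hypothesis rather than deriving it.
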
 
The proof of Theorem~\ref{thm:alg_guarantee}, deferred to Appendix~\ref{sec:generation_bounds}, applies uniform convergence to the plug-in strategic hinge-loss class over the bounded parameter set \(\mathcal W_{k,B}\). Decomposable costs yield the closed-form strategic shift \(S(w)=\beta\max_{j\in[d]}w_j/\alpha_j\), and the boundedness assumptions give the margin bound \(C_{k,B}=k\rho_{\mathrm{Imp}}+B\). Combining the resulting Rademacher bound with standard bounded-loss generalization and the plug-in error \(\varepsilon_\eta\) gives the stated guarantee. 

\begin{remark}
The term \(\varepsilon_\eta=\sup_{z\in\mathcal A}|\widehat\eta(z)-\eta(z)|\) separates the statistical error of estimating the class-probability function from the generalization error of the strategic hinge class. In Appendix~\ref{sec:oracle_free}, we show that under a correctly specified single-index model with a Lipschitz link and a strongly convex population risk, a calibrated plug-in estimator satisfies \(\varepsilon_\eta=\mathcal{O}(\sqrt{(d+\log(1/\delta))/m})\) with high probability, where \(m\) is the sample size used to estimate \(\widehat\eta\).
\end{remark}
\begin{figure}[ht!]
\centering
\begin{minipage}[t]{0.43\textwidth}
\begin{algorithm}[H]
    \caption{ORACLE}
    \label{alg:oracle}
    \small
    \begin{algorithmic}[1]
        \raggedright
        \STATE \textbf{Input:} Sample $(x, y)$, Model $(w,b)$, Cost-coeff. $\vec{\alpha}$, estimate $\widehat\eta$
        \STATE $\begin{aligned}[t]
            \operatorname{BR}(x; w,b) \leftarrow {} & \argmax_{z \in \mathcal A} \Big(\beta\,\mathds{1}(w^\top z \geq b) \\
                                    & \qquad\;\; - \textstyle\sum_j \alpha_j (z_j - x_j)_+ \Big)
        \end{aligned}$
        \STATE Sample \(y_{\mathrm{Imp}}\sim\mathrm{Bernoulli}(\widehat\eta(\operatorname{BR}(x;w,b)))\).
        \STATE \textbf{return} $\tilde{y} \leftarrow 2 y_{\mathrm{Imp}} - 1$
    \end{algorithmic}
\end{algorithm} 

\vspace{-0.5em}

\begin{algorithm}[H]
    \caption{STRAT-IMP-ERROR}
    \label{alg:strat_imp_error}
    \small
    \begin{algorithmic}[1]
        \raggedright
        \STATE {\textbf{Input:}} Feature $x$, Target $\tilde{y}$, Model $(w,b)$, Utility $\beta >0$, Cost-coeff. $\vec{\alpha}$
        \STATE $\text{pred} \leftarrow w^\top x - b$
        \STATE $S(w) \leftarrow \beta \max_{j \in [d]} \left( \frac{w_j}{\alpha_j} \right)$
        \STATE $\text{Margin} \leftarrow \tilde{y} \cdot (\text{pred} + S(w))$
        \STATE \textbf{return} $\mathcal{L} \leftarrow \max(0, 1 - \text{Margin})$
    \end{algorithmic}
\end{algorithm}
\end{minipage}
\hfill
\begin{minipage}[t]{0.53\textwidth}
\begin{algorithm}[H]
    \caption{\textsc{Strat-Imp-Aware}}
    \label{alg:train_loop}
    \small
    \begin{algorithmic}[1]
        \raggedright
        \STATE {\textbf{Input:}} Training Set $\mathcal{D}_{tr}$, Validation Set $\mathcal{D}_{val}$, Model $(w,b)$, Epochs $T$, Learning rate $\gamma$, Utility $\beta$, Cost-coeff. $\vec{\alpha}$, estimate $\widehat\eta$
        \STATE Initialize $w,b$, $L_{best} \leftarrow \infty$
        \FOR{$epoch = 1 \dots T$}
            \FOR{batch $(X, Y) \in \mathcal{D}_{tr}$}
                \STATE $(X, \tilde{Y}) \leftarrow \textsc{Oracle}(X, Y, w,b, \vec{\alpha}, \widehat\eta)$
                \STATE $\mathcal{L}_{batch} \leftarrow \textsc{StratImpError}(X, \tilde{Y}, w, b, \beta, \vec{\alpha})$
                \STATE $w \leftarrow w - \gamma \nabla_w \mathcal{L}_{batch}$
                \STATE $b \leftarrow b - \gamma \nabla_b \mathcal{L}_{batch}$
            \ENDFOR
            \STATE $L_{val} \leftarrow 0$
            \FOR{batch $(X, Y) \in \mathcal{D}_{val}$}
                \STATE $(X, \tilde{Y}) \leftarrow \textsc{Oracle}(X, Y, w, b, \vec{\alpha})$
                \STATE $L_{val} \leftarrow L_{val} + \textsc{StratImpError}(X, \tilde{Y}, w, b, \beta, \vec{\alpha})$
            \ENDFOR
            \STATE $L_{val} \leftarrow \text{mean}(L_{val})$
            \IF{$L_{val} < L_{best}$}
                \STATE $L_{best} \leftarrow L_{val}$, $w^\star \leftarrow w, b^\star \leftarrow b$
            \ENDIF
        \ENDFOR
        \STATE \textbf{return} Best Model $(w^\star, b^\star)$
    \end{algorithmic}
\end{algorithm}
\end{minipage}
\end{figure}
\section{ Experiments}
\label{sec:exp}
 This section and in Appendix \ref{app:exp}, we present a comprehensive set of simulation and empirical experiments designed to validate the theoretical assumptions of our framework and the practical performance of our proposed Strategic-Improvement-aware algorithm. Our code is publically available \href{https://github.com/Mahvith/Strategic_imp}{here.}
 
  \paragraph{Experimental Setting:} We evaluate our algorithm on four real-world datasets (Adult Income \cite{adult_2}, HELOC \cite{brown2018helocapplicantriskperformance}, Law School \cite{https://doi.org/10.1002/widm.1452} and ACS Income \cite{ding2022retiringadultnewdatasets}) and one synthetic dataset. We perform two major experiments on these datasets. Additional details on  experimental setup  with ablation studies are given in the Appendix \ref{app:exp}. The description of the datasets is given in the Table \ref{tab:datasets}. The class-probability function $\eta(x)$ is modeled using Logistic Regression with the  Calibrated classification technique, following the Assumption \ref{ass:singleIndex}. On all the datasets, we consider the features for which $\eta(x)$ is monotonically increasing, more details can be found in Appendix \ref{app:exp}. We use  a single-layer neural network with a $d$-dimensional input and a biased linear output and  optimize the model parameters using a Hinge loss objective. The model is trained using stochastic gradient descent (SGD).   

\begin{wraptable}{r}{0.7\textwidth}
\flushright
\small
\setlength{\tabcolsep}{5pt}
\renewcommand{\arraystretch}{1.2}
\begin{tabular}{p{1.65cm}p{1.2cm}p{1.1cm} p{1.1cm}p{2cm}}
\toprule
\textbf{Dataset} 
&  \textbf{Instances}
&  \textbf{Total features} 
&  \textbf{Imp. features} 
&  \textbf{Task}  \\
\midrule 
Adult 
& 48{,}842 
& 15 
& 5 
& Income $>$ 50K  \\

HELOC 
& 10{,}459 
& 24 
& 8 
& Credit risk  \\

Law School 
& 20{,}798 
& 12 
& 6 
& Bar exam   \\

ACS Income 
& 1{,}664{,}500 
& 10 
& 3 
& Income $>$ 50K  \\

Synthetic 
& 20{,}000 
& 8 
& 8 
& $+$ve label  \\
\bottomrule
\end{tabular}
\caption{Datasets used in experiments.}
\label{tab:datasets}
\end{wraptable}
We  implement two benchmark algorithms namely $\textsc{SERM}$ (\citep{levanon2022generalized}) and \cite{attias2025pac} and 
 the improvement-aware classifier \textsc{Strat-Imp-Aware}. The classifier is trained using an objective that anticipates agent manipulation. Specifically, the model identifies the feature with the highest return on investment calculated as the ratio of the model weight to the manipulation cost ($w_k / \alpha_k$). During training, we adjust the loss by adding a strategic gain term, which represents the maximum score change an agent can achieve within their budget $\beta$. 
 \paragraph{Robustness to Training Set Size.}
Figure \ref{fig:merged_error} evaluates the improvement-aware strategic error across the datasets in Table \ref{tab:datasets} as a function of the number of training samples and shows that the improvement-aware linear classifier $f^{\star}_{\mathrm{Imp}}$ consistently achieves lowest improvement error  regardless of training set size. Furthermore, the strategic linear classifier $f^{\star}_s$ generally outperforms the standard optimal linear classifier $f^{\star}$, while achieving comparable performance on the Law School dataset. This clear hierarchy empirically demonstrates that $f^{\star}_s$ serves as a  better proxy for improvement-aware settings than the naive $f^{\star}$, even when training data is limited.

\vspace{-4.0em}

\begin{figure*}[ht!]
    \centering

    \begin{tikzpicture}
    \begin{axis}[
        hide axis,
        xmin=0, xmax=1, ymin=0, ymax=1,
        width=1.1\linewidth, height=1.5cm, 
        scale only axis,
        legend columns=4,
        legend style={
            draw=gray!60,
            fill=white,
            font=\scriptsize, 
            /tikz/every even column/.append style={column sep=0.2cm},
            at={(0.5,0)}, anchor=center, 
        },
    ]
    \addlegendimage{color=blue!70!black, mark=*, mark size=2pt, line width=1pt}
    \addlegendentry{Non-strategic Classifier (SVM)}
    \addlegendimage{color=orange!90!black, mark=square*, mark size=1.8pt, line width=1pt}
    \addlegendentry{Strategic Classifier (SERM)\cite{levanon2022generalized}}
    \addlegendimage{color=red!80!black, mark=diamond*, mark size=2.2pt, line width=1pt}
    \addlegendentry{\citet{attias2025pac}}
    \addlegendimage{color=green!50!black, mark=triangle*, mark size=2.2pt, line width=1pt}
    \addlegendentry{\textsc{Strat-Imp-Aware} (Proposed)}
    \end{axis}
    \end{tikzpicture}


    \begin{subfigure}[b]{0.20\linewidth}
        \centering
        \begin{tikzpicture}
        \begin{axis}[
             width=1.37\linewidth, height=1.37\linewidth,
             xmin=-200, xmax=6500, ymin=0.1, ymax=0.69,
             xtick={0,3000,6000},
             ytick={0.2,0.3,0.4,0.5},
             scaled x ticks=false,
             scaled y ticks=false,
             xticklabel style={/pgf/number format/fixed, /pgf/number format/1000 sep={}, font=\tiny},
             yticklabel style={/pgf/number format/fixed, /pgf/number format/precision=1, font=\tiny, rotate=45},
             minor tick num=0, grid=major, grid style={dashed, gray!40},
        ]
        
        \addplot[draw=none, name path=nn_u, forget plot] coordinates {(50,0.62) (200,0.67) (500,0.57) (1000,0.54) (2000,0.53) (4000,0.47) (6000,0.44)};
        \addplot[draw=none, name path=nn_l, forget plot] coordinates {(50,0.51) (200,0.50) (500,0.44) (1000,0.41) (2000,0.38) (4000,0.35) (6000,0.35)};
        \addplot[red!20, fill opacity=0.5, forget plot] fill between[of=nn_u and nn_l];
        \addplot[color=red!80!black, mark=diamond*, mark size=1.5pt, line width=0.7pt]
        coordinates {(50,0.518) (200,0.570) (500,0.508) (1000,0.470) (2000,0.450) (4000,0.410) (6000,0.395)};

        \addplot[draw=none, name path=au, forget plot] coordinates {(50,0.62) (200,0.50) (500,0.49) (1000,0.44) (2000,0.43) (4000,0.41) (6000,0.40)};
        \addplot[draw=none, name path=al, forget plot] coordinates {(50,0.48) (200,0.38) (500,0.38) (1000,0.39) (2000,0.37) (4000,0.36) (6000,0.36)};
        \addplot[blue!20, fill opacity=0.5, forget plot] fill between[of=au and al];
        \addplot[color=blue!70!black, mark=*, mark size=1pt, line width=0.7pt]
        coordinates {(50,0.55) (200,0.44) (500,0.43) (1000,0.41) (2000,0.40) (4000,0.38) (6000,0.38)};
        
        \addplot[draw=none, name path=bu, forget plot] coordinates {(50,0.44) (200,0.32) (500,0.30) (1000,0.25) (2000,0.22) (4000,0.20) (6000,0.20)};
        \addplot[draw=none, name path=bl, forget plot] coordinates {(50,0.32) (200,0.25) (500,0.24) (1000,0.22) (2000,0.18) (4000,0.18) (6000,0.18)};
        \addplot[orange!25, fill opacity=0.5, forget plot] fill between[of=bu and bl];
        \addplot[color=orange!90!black, mark=square*, mark size=0.8pt, line width=0.7pt]
        coordinates {(50,0.40) (200,0.29) (500,0.27) (1000,0.24) (2000,0.20) (4000,0.19) (6000,0.19)};

        \addplot[draw=none, name path=cu, forget plot] coordinates {(50,0.35) (200,0.25) (500,0.24) (1000,0.20) (2000,0.19) (4000,0.19) (6000,0.18)};
        \addplot[draw=none, name path=cl, forget plot] coordinates {(50,0.24) (200,0.19) (500,0.20) (1000,0.18) (2000,0.17) (4000,0.17) (6000,0.16)};
        \addplot[green!25, fill opacity=0.5, forget plot] fill between[of=cu and cl];
        \addplot[color=green!50!black, mark=triangle*, mark size=1.2pt, line width=0.7pt]
        coordinates {(50,0.31) (200,0.22) (500,0.22) (1000,0.19) (2000,0.18) (4000,0.18) (6000,0.17)};
        \end{axis}
        \end{tikzpicture}
        \caption{Adult}
        \label{fig:scimp_adult}
    \end{subfigure}%
    \hfill%
    \begin{subfigure}[b]{0.20\linewidth}
        \centering
        \begin{tikzpicture}
        \begin{axis}[
             width=1.37\linewidth, height=1.37\linewidth,
             xmin=-200, xmax=6500, ymin=0.22, ymax=0.56,
             xtick={0,3000,6000},
             ytick={0.3,0.4,0.5},
             scaled x ticks=false,
             scaled y ticks=false,
             xticklabel style={/pgf/number format/fixed, /pgf/number format/1000 sep={}, font=\tiny},
             yticklabel style={/pgf/number format/fixed, /pgf/number format/precision=1, font=\tiny, rotate=45},
             minor tick num=0, grid=major, grid style={dashed, gray!40},
        ]

        \addplot[draw=none, name path=nn_u, forget plot] coordinates {(50,0.52) (200,0.54) (500,0.49) (1000,0.49) (2000,0.47) (4000,0.44) (6000,0.44)};
        \addplot[draw=none, name path=nn_l, forget plot] coordinates {(50,0.44) (200,0.43) (500,0.415) (1000,0.42) (2000,0.41) (4000,0.38) (6000,0.37)};
        \addplot[red!20, fill opacity=0.5, forget plot] fill between[of=nn_u and nn_l];
        \addplot[color=red!80!black, mark=diamond*, mark size=1.5pt, line width=0.7pt]
        coordinates {(50,0.468) (200,0.485) (500,0.453) (1000,0.455) (2000,0.438) (4000,0.410) (6000,0.405)};

        \addplot[draw=none, name path=au, forget plot] coordinates {(50,0.49) (200,0.38) (500,0.35) (1000,0.34) (2000,0.35) (4000,0.34) (6000,0.33)};
        \addplot[draw=none, name path=al, forget plot] coordinates {(50,0.43) (200,0.30) (500,0.29) (1000,0.30) (2000,0.30) (4000,0.30) (6000,0.29)};
        \addplot[blue!20, fill opacity=0.5, forget plot] fill between[of=au and al];
        \addplot[color=blue!70!black, mark=*, mark size=1pt, line width=0.7pt]
        coordinates {(50,0.46) (200,0.34) (500,0.32) (1000,0.32) (2000,0.33) (4000,0.32) (6000,0.31)};

        \addplot[draw=none, name path=bu, forget plot] coordinates {(50,0.46) (200,0.31) (500,0.28) (1000,0.27) (2000,0.27) (4000,0.27) (6000,0.26)};
        \addplot[draw=none, name path=bl, forget plot] coordinates {(50,0.40) (200,0.27) (500,0.25) (1000,0.25) (2000,0.25) (4000,0.25) (6000,0.24)};
        \addplot[orange!25, fill opacity=0.5, forget plot] fill between[of=bu and bl];
        \addplot[color=orange!90!black, mark=square*, mark size=0.8pt, line width=0.7pt]
        coordinates {(50,0.44) (200,0.29) (500,0.26) (1000,0.26) (2000,0.26) (4000,0.26) (6000,0.25)};

        \addplot[draw=none, name path=cu, forget plot] coordinates {(50,0.45) (200,0.31) (500,0.28) (1000,0.27) (2000,0.27) (4000,0.27) (6000,0.26)};
        \addplot[draw=none, name path=cl, forget plot] coordinates {(50,0.38) (200,0.26) (500,0.25) (1000,0.25) (2000,0.25) (4000,0.25) (6000,0.24)};
        \addplot[green!25, fill opacity=0.5, forget plot] fill between[of=cu and cl];
        \addplot[color=green!50!black, mark=triangle*, mark size=1.2pt, line width=0.7pt]
        coordinates {(50,0.43) (200,0.29) (500,0.26) (1000,0.26) (2000,0.25) (4000,0.25) (6000,0.25)};
        \end{axis}
        \end{tikzpicture}
        \caption{HELOC}
        \label{fig:scimp_heloc}
    \end{subfigure}%
    \hfill%
    \begin{subfigure}[b]{0.20\linewidth}
        \centering
        \begin{tikzpicture}
        \begin{axis}[
             width=1.37\linewidth, height=1.37\linewidth,
             xmin=-200, xmax=6500, ymin=-0.08, ymax=0.32,
             xtick={0,3000,6000},
             ytick={0.05, 0.15, 0.25},
             scaled x ticks=false,
             scaled y ticks=false,
             xticklabel style={/pgf/number format/fixed, /pgf/number format/1000 sep={}, font=\tiny},
             yticklabel style={/pgf/number format/fixed, /pgf/number format/precision=1, font=\tiny, rotate=45},
             minor tick num=0, grid=major, grid style={dashed, gray!40}, 
        ]
        
        \addplot[draw=none, name path=nn_u, forget plot] coordinates {(50, 0.112) (200, 0.107) (500, 0.106) (1000, 0.105) (4000, 0.104) (6000, 0.106)};
        \addplot[draw=none, name path=nn_l, forget plot] coordinates {(50, 0.100) (200, 0.103) (500, 0.102) (1000, 0.101) (4000, 0.100) (6000, 0.102)};
        \addplot[red!20, fill opacity=0.5, forget plot] fill between[of=nn_u and nn_l];
        \addplot[color=red!80!black, mark=diamond*, mark size=1.5pt, line width=0.7pt]
        coordinates {(50, 0.106) (200, 0.105) (500, 0.104) (1000, 0.098) (4000, 0.097) (6000, 0.104)};

        \addplot[draw=none, name path=au, forget plot] coordinates {(50, 0.108) (200, 0.108) (500, 0.108) (1000, 0.108) (4000, 0.108) (6000, 0.108)};
        \addplot[draw=none, name path=al, forget plot] coordinates {(50, 0.104) (200, 0.104) (500, 0.104) (1000, 0.104) (4000, 0.104) (6000, 0.104)};
        \addplot[blue!20, fill opacity=0.5, forget plot] fill between[of=au and al];
        \addplot[color=blue!70!black, mark=*, mark size=1pt, line width=0.7pt]
        coordinates {(50, 0.106) (200, 0.106) (500, 0.106) (1000, 0.106) (4000, 0.106) (6000, 0.106)};
        
        \addplot[draw=none, name path=bu, forget plot] coordinates {(50, 0.100) (200, 0.250) (500, 0.300) (1000, 0.083) (4000, 0.080) (6000, 0.102)};
        \addplot[draw=none, name path=bl, forget plot] coordinates {(50, 0.060) (200, -0.052) (500, -0.058) (1000, 0.018) (4000, 0.015) (6000, 0.018)};
        \addplot[orange!25, fill opacity=0.5, forget plot] fill between[of=bu and bl];
        \addplot[color=orange!90!black, mark=square*, mark size=0.8pt, line width=0.7pt]
        coordinates {(50, 0.078) (200, 0.084) (500, 0.121) (1000, 0.051) (4000, 0.046) (6000, 0.060)};
        
        \addplot[draw=none, name path=cu, forget plot] coordinates {(50, 0.110) (200, 0.092) (500, 0.078) (1000, 0.059) (4000, 0.048) (6000, 0.060)};
        \addplot[draw=none, name path=cl, forget plot] coordinates {(50, 0.060) (200, 0.025) (500, 0.020) (1000, 0.013) (4000, 0.008) (6000, 0.005)};
        \addplot[green!25, fill opacity=0.5, forget plot] fill between[of=cu and cl];
        \addplot[color=green!50!black, mark=triangle*, mark size=1.2pt, line width=0.7pt]
        coordinates {(50, 0.090) (200, 0.060) (500, 0.053) (1000, 0.036) (4000, 0.028) (6000, 0.033)};
        \end{axis}
        \end{tikzpicture}
        \caption{Law School}
        \label{fig:scimp_lawschool}
    \end{subfigure}%
    \hfill%
    \begin{subfigure}[b]{0.20\linewidth}
        \centering
        \begin{tikzpicture}
        \begin{axis}[
             width=1.37\linewidth, height=1.37\linewidth,
             xmin=-200, xmax=6500, ymin=0.18, ymax=0.48,
             xtick={0,3000,6000},
             ytick={0.25, 0.35, 0.45},
             scaled x ticks=false,
             scaled y ticks=false,
             xticklabel style={/pgf/number format/fixed, /pgf/number format/1000 sep={}, font=\tiny},
             yticklabel style={/pgf/number format/fixed, /pgf/number format/precision=1, font=\tiny, rotate=45},
             minor tick num=0, grid=major, grid style={dashed, gray!40},
        ]

        \addplot[draw=none, name path=nn_u, forget plot] coordinates {(50,0.458) (200,0.405) (500,0.383) (1000,0.363) (2000,0.351) (4000,0.354) (6000,0.350)};
        \addplot[draw=none, name path=nn_l, forget plot] coordinates {(50,0.345) (200,0.329) (500,0.329) (1000,0.317) (2000,0.327) (4000,0.322) (6000,0.328)};
        \addplot[red!20, fill opacity=0.5, forget plot] fill between[of=nn_u and nn_l];
        \addplot[color=red!80!black, mark=diamond*, mark size=1.5pt, line width=0.7pt]
        coordinates {(50,0.404) (200,0.367) (500,0.356) (1000,0.340) (2000,0.339) (4000,0.338) (6000,0.339)};

        \addplot[draw=none, name path=au, forget plot] coordinates {(50,0.43) (200,0.36) (500,0.32) (1000,0.28) (2000,0.28) (4000,0.28) (6000,0.28)};
        \addplot[draw=none, name path=al, forget plot] coordinates {(50,0.29) (200,0.26) (500,0.26) (1000,0.25) (2000,0.26) (4000,0.27) (6000,0.26)};
        \addplot[blue!20, fill opacity=0.5, forget plot] fill between[of=au and al];
        \addplot[color=blue!70!black, mark=*, mark size=1pt, line width=0.7pt]
        coordinates {(50,0.364) (200,0.314) (500,0.291) (1000,0.265) (2000,0.268) (4000,0.277) (6000,0.274)};
        
        \addplot[draw=none, name path=bu, forget plot] coordinates {(50,0.40) (200,0.26) (500,0.24) (1000,0.22) (2000,0.21) (4000,0.21) (6000,0.21)};
        \addplot[draw=none, name path=bl, forget plot] coordinates {(50,0.28) (200,0.20) (500,0.21) (1000,0.20) (2000,0.20) (4000,0.20) (6000,0.20)};
        \addplot[orange!25, fill opacity=0.5, forget plot] fill between[of=bu and bl];
        \addplot[color=orange!90!black, mark=square*, mark size=0.8pt, line width=0.7pt]
        coordinates {(50,0.351) (200,0.226) (500,0.215) (1000,0.209) (2000,0.208) (4000,0.205) (6000,0.204)};
        
        \addplot[draw=none, name path=cu, forget plot] coordinates {(50,0.40) (200,0.28) (500,0.24) (1000,0.22) (2000,0.21) (4000,0.21) (6000,0.20)};
        \addplot[draw=none, name path=cl, forget plot] coordinates {(50,0.26) (200,0.22) (500,0.21) (1000,0.20) (2000,0.20) (4000,0.20) (6000,0.19)};
        \addplot[green!25, fill opacity=0.5, forget plot] fill between[of=cu and cl];
        \addplot[colorwgreen!50!black, mark=triangle*, mark size=1.2pt, line width=0.7pt]
        coordinates {(50,0.350) (200,0.246) (500,0.226) (1000,0.211) (2000,0.207) (4000,0.201) (6000,0.198)};
        \end{axis}
        \end{tikzpicture}
        \caption{ACSIncome}
        \label{fig:scimp_acsincome}
    \end{subfigure}%
    \hfill%
    \begin{subfigure}[b]{0.20\linewidth}
        \centering
        \begin{tikzpicture}
        \begin{axis}[
             width=1.37\linewidth, height=1.37\linewidth,
             xmin=-200, xmax=6500, ymin=0.10, ymax=0.49,
             xtick={0,3000,6000},
             ytick={0.15,0.25,0.35,0.45},
             scaled x ticks=false,
             scaled y ticks=false,
             xticklabel style={/pgf/number format/fixed, /pgf/number format/1000 sep={}, font=\tiny},
             yticklabel style={/pgf/number format/fixed, /pgf/number format/precision=1, font=\tiny, rotate=45},
             minor tick num=0, grid=major, grid style={dashed, gray!40},
        ]
        
        \addplot[draw=none, name path=nn_u, forget plot] coordinates {(50, 0.47) (200, 0.44) (500, 0.46) (1000, 0.45) (2000, 0.44) (4000, 0.42) (6000, 0.40)};
        \addplot[draw=none, name path=nn_l, forget plot] coordinates {(50, 0.34) (200, 0.36) (500, 0.39) (1000, 0.40) (2000, 0.40) (4000, 0.365) (6000, 0.335)};
        \addplot[red!20, fill opacity=0.5, forget plot] fill between[of=nn_u and nn_l];
        \addplot[color=red!80!black, mark=diamond*, mark size=1.5pt, line width=0.7pt]
        coordinates {(50, 0.402) (200, 0.400) (500, 0.425) (1000, 0.425) (2000, 0.420) (4000, 0.393) (6000, 0.368)};

        \addplot[draw=none, name path=au, forget plot] coordinates {(50,0.43) (200,0.28) (500,0.26) (1000,0.23) (2000,0.23) (4000,0.23) (6000,0.23)};
        \addplot[draw=none, name path=al, forget plot] coordinates {(50,0.27) (200,0.21) (500,0.19) (1000,0.19) (2000,0.20) (4000,0.20) (6000,0.20)};
        \addplot[blue!20, fill opacity=0.5, forget plot] fill between[of=au and al];
        \addplot[color=blue!70!black, mark=*, mark size=1pt, line width=0.7pt]
        coordinates {(50,0.370) (200,0.245) (500,0.232) (1000,0.211) (2000,0.211) (4000,0.214) (6000,0.211)};
        
        \addplot[draw=none, name path=bu, forget plot] coordinates {(50,0.38) (200,0.20) (500,0.18) (1000,0.17) (2000,0.15) (4000,0.15) (6000,0.15)};
        \addplot[draw=none, name path=bl, forget plot] coordinates {(50,0.30) (200,0.16) (500,0.16) (1000,0.14) (2000,0.13) (4000,0.13) (6000,0.13)};
        \addplot[orange!25, fill opacity=0.5, forget plot] fill between[of=bu and bl];
        \addplot[color=orange!90!black, mark=square*, mark size=0.8pt, line width=0.7pt]
        coordinates {(50,0.340) (200,0.180) (500,0.170) (1000,0.152) (2000,0.140) (4000,0.142) (6000,0.140)};
        
        \addplot[draw=none, name path=cu, forget plot] coordinates {(50,0.37) (200,0.21) (500,0.19) (1000,0.17) (2000,0.16) (4000,0.16) (6000,0.16)};
        \addplot[draw=none, name path=cl, forget plot] coordinates {(50,0.28) (200,0.17) (500,0.16) (1000,0.15) (2000,0.14) (4000,0.14) (6000,0.14)};
        \addplot[green!25, fill opacity=0.5, forget plot] fill between[of=cu and cl];
        \addplot[color=green!50!black, mark=triangle*, mark size=1.2pt, line width=0.7pt]
        coordinates {(50,0.333) (200,0.192) (500,0.178) (1000,0.160) (2000,0.150) (4000,0.147) (6000,0.147)};
        \end{axis}
        \end{tikzpicture}
        \caption{Synthetic}
        \label{fig:scimp_synthetic}
    \end{subfigure}


    \caption{\textbf{Improvement-aware strategic error versus training sample size across five datasets.} We compare four linear classifiers: a non-strategic baseline (SVM), a strategic classifier \cite{levanon2022generalized}, the method of Attias et al. \cite{attias2025pac}, and our proposed \textsc{Strat-Imp-Aware} algorithm. Shaded regions denote the min–max range over 10 runs.}
    \label{fig:merged_error}
\end{figure*}
 
\newpage 

\subsection{Experimental findings}

\begin{wraptable}{r}{0.62\textwidth}
\flushleft
\small
\setlength{\tabcolsep}{3pt}
\renewcommand{\arraystretch}{1}
\vspace{-10pt}
\begin{tabular}{{p{1.7cm}p{2.2cm}p{1cm}p{1cm}p{1cm}p{0.9cm}}}
\toprule
\textbf{Dataset} & \textbf{Metric} 
& \textsc{SVM} 
& \textsc{SERM \cite{levanon2022generalized}} 
&  \textsc{Strat-Imp Aware}  
&  \citet{attias2025pac} \\
\midrule

\multirow{3}{*}{\textbf{Adult}}
& Manip. agents   & 12169 & 775 & \textbf{13464} & 4480 \\
& Improved agents & 3567 & 90 & \textbf{9623} & 2858 \\
& Imp  Rate (\%)  $\uparrow$    & 29.31 & 11.61 & \textbf{71.47} & 63.79 \\

\midrule
\multirow{3}{*}{\textbf{HELOC}}
& Manip. agents   & 1761 & 1048 & \textbf{3138} &  1014 \\
& Improved agents & 609  & 220 & \textbf{1572} & 498 \\
& Imp Rate (\%)   $\uparrow$       & 34.58 & 20.99 & \textbf{50.10} & 49.11 \\

\midrule
\multirow{3}{*}{\textbf{Law School}}
& Manip. agents   & 33 & 22 & 32 &  \textbf{77}\\
& Improved agents & 9  & 4  & 11 &  \textbf{33}\\
& Imp Rate (\%)   $\uparrow$       & 27.27 & 18.18 & 34.37 & \textbf{42.86}  \\

\midrule

\multirow{3}{*}{\textbf{ACS Income}}
& Manip. agents   & 32369 & 17303 & \textbf{52666} & 21062 \\
& Improved agents & 12388 & 4163  & \textbf{26835} & 10184 \\
& Imp Rate (\%)   $\uparrow$       & 38.27 & 24.05 & \textbf{50.95} & 48.35 \\

\midrule
\multirow{3}{*}{\textbf{Synthetic}}
& Manip. agents   & 3027 & 2233 & \textbf{5992} & 2779 \\
& Improved agents & 1057 & 232 & \textbf{2968} & 1321 \\
& Imp Rate (\%)   $\uparrow$       & 34.91 & 10.38 & \textbf{49.53} & 47.54 \\
\bottomrule
\end{tabular}
\caption{\small Table shows that \textsc{Strat-Imp-Aware} attains the highest improvement rate on four of five datasets while also inducing more manipulation. The only exception is Law School, likely due to the small number of manipulating agents and class imbalance.}   
\vspace{-40pt}
\label{tab:merged_cls}
\end{wraptable}

\paragraph{Test-Time Improvement Rates.}   
We evaluate the agent's label improvement across the four classifiers:\textsc{SVM}, {\sc SERM} \cite{levanon2022generalized}, \textsc{Strat-Imp-Aware} and \citet{attias2025pac}. We split each dataset 70-30 among training and test sets. For each deployed classifier, we count an agent as manipulating if its selected best response differs from its original feature vector. Among manipulating agents, we count an agent as improved if the sampled post-response label is positive after manipulation when the original label was negative. The improvement rate reported in \ref{tab:merged_cls}   is the fraction of manipulating agents who improve.   All models are trained on the complete training set, while the total counts of manipulating and improved agents are quantified using the test data. 

\section{Discussion}
We studied a strategic classification setting in which deployment-time responses can change both observable features and true qualifications. The main takeaway is not that strategic conservatism is always optimal, but that when feature changes can also improve true outcomes, ignoring strategic response can be worse than using a conservative strategic surrogate. In particular, under our structural assumptions, the strategic-optimal classifier is a closer proxy for the improvement-aware objective than the Bayes classifier when post-deployment labels are unavailable. We also established PAC-style learnability with oracle access to post-response labels and proposed a plug-in learning algorithm with generalization guarantees.

\paragraph{Limitations.}
Our analysis relies on a single-index qualification model, linear classifiers, linear-decomposable costs, and a canonical best-response rule. These assumptions make the problem tractable but may limit applicability in richer real-world settings. Important directions include extending the analysis to non-separable costs such as \(\ell_2\) costs, heterogeneous or repeated-interaction models, cost-sensitive objectives, and causal models of improvement. Another useful direction is to derive sharper quantitative comparisons between Bayes, strategic, and improvement-aware classifiers in terms of distributional properties such as mass near the decision boundary, curvature of the class-probability function, and heterogeneity in effort costs.

\bibliographystyle{plainnat}
\bibliography{reference}


\appendix
\newpage 

\section{Notation}
\allowdisplaybreaks 

\begin{table}[ht]
\centering
\caption{Notation used in the paper.}
\label{tab:notation}
\small
\begin{tabular}{p{0.3\linewidth}p{0.65\linewidth}}
\toprule
\textbf{Notation} & \textbf{Description} \\
\midrule
\(\mathcal X\subseteq\mathbb R^d\) & Valid feature space; both original features and reported/post-response features lie in \(\mathcal X\). \\

\(\mathcal X_0=\operatorname{supp}(D)\subseteq\mathcal X\) & Support of the original/pre-deployment feature distribution. \\

\(\mathcal A\subseteq\mathcal X\) & Feasible report/action space, with \(\mathcal X_0\subseteq\mathcal A\). \\

\(D\) & Distribution over original features, supported on \(\mathcal X_0\). \\

\(x\in\mathcal X_0\) & Original feature vector sampled from \(D\). \\

\(z\in\mathcal A\) & Reported/post-response feature vector. \\

\(Y=\{0,1\}\) & Binary label space. \\

\(\eta:\mathcal X\to[0,1]\) & Class-probability function, \(\eta(x)=\Pr(Y=1\mid X=x)\). \\

\(y_x\sim\operatorname{Bernoulli}(\eta(x))\) & Conditional label generated at feature vector \(x\in\mathcal X\). \\

\(\mathcal F\) & Hypothesis class of linear classifiers \(f_{w,b}:\mathcal X\to\{0,1\}\). \\

\(f_{w,b}(z)=\mathds 1\{w^\top z\ge b\}\) & Linear classifier with weight vector \(w\in\mathbb R^d_{\ge0}\) and threshold \(b\in\mathbb R\). \\

\(c:\mathcal X_0\times\mathcal A\to\mathbb R_+\) & Cost of changing original feature \(x\in\mathcal X_0\) to report \(z\in\mathcal A\). \\

\(c(x,z)=\sum_{i=1}^d\alpha_i(z_i-x_i)_+\) & Linear-decomposable one-sided cost, with \(\alpha_i>0\). \\

\((a)_+=\max\{a,0\}\) & Positive part of \(a\in\mathbb R\). \\

\(\beta>0\) & Utility gain from receiving positive classification. \\

\(u_f(x,z)=\beta f(z)-c(x,z)\) & Utility of reporting \(z\in\mathcal A\) for an agent with original feature \(x\in\mathcal X_0\). \\

\(x^f\in\arg\max_{z\in\mathcal A}u_f(x,z)\) & Selected best response/report under classifier \(f\). \\

\(\operatorname{BR}(x;w,b)\) & Best response of \(x\) to the linear classifier \(f_{w,b}\). \\

\(\texttt{err}(f)=\mathbb E_{x\sim D}[\mathds 1\{f(x)\neq y_x\}]\) & Standard non-strategic classification error. \\

\(\texttt{err}_s(f)=\mathbb E_{x\sim D}[\mathds 1\{f(x^f)\neq y_x\}]\) & Strategic error with immutable labels. \\

\(\texttt{err}_{\mathrm{Imp}}(f)=\mathbb E_{x\sim D}[\mathds 1\{f(x^f)\neq y_{x^f}\}]\) & Improvement-aware strategic error with post-response labels. \\

\(\widehat{\texttt{err}}_{\mathrm{Imp}}(f)=\frac1n\sum_{i=1}^n\mathds 1\{f(x_i^f)\neq y_{x_i^f}\}\) & Empirical improvement-aware strategic error. \\

\(g:\mathbb R\to[0,1]\) & Monotone link function in the single-index model. \\

\(w^\star\in\mathbb R^d_{\ge0}\) & True single-index direction, satisfying \(\eta(x)=g((w^\star)^\top x)\). \\

\(b^\star=\inf\{t:g(t)\ge1/2\}\) & Bayes threshold under the single-index model. \\

\(f^\star\) & Non-strategic Bayes optimal classifier. \\

\(f_s^\star\) & Strategic-optimal shifted classifier. \\

\(f_{\mathrm{Imp}}^\star\) & Improvement-aware optimal classifier. \\

\(r(w)=\max_{i\in[d]}w_i/\alpha_i\) & Cost-adjusted maximum coordinate ratio. \\

\(S(w)=\beta\max_{i\in[d]}w_i/\alpha_i\) & Strategic shift induced by classifier \(f_{w,b}\). \\

\(b_s=b+S(w)\) & Shifted threshold corresponding to \(f_{w,b}\). \\

\(M_{w,b}(x)=w^\top x-b+S(w)\) & Strategic shifted margin used in the hinge surrogate. \\

\(\rho_{\mathrm{Imp}}=r+\beta/\min_{j\in[d]}\alpha_j\) & Radius parameter used in the generalization bound, where \(r=\sup_{x\in\mathcal X_0}\|x\|_2\). \\

\(\widehat\eta\) & Plug-in estimator of \(\eta\). \\

\(\varepsilon_\eta=\sup_{z\in\mathcal A}|\widehat\eta(z)-\eta(z)|\) & Uniform plug-in estimation error over feasible reports. \\

\(\widehat L_{\mathrm{hinge}}^{\widehat\eta}(w,b)\) & Empirical plug-in strategic hinge loss minimized by \textsc{Strat-Imp-Aware}. \\

\(\mathcal W_{k,B}\) & Norm- and bias-bounded parameter class \(\{(w,b):w\in\mathbb R_{\ge0}^d,\|w\|_2\le k, |b|\le B\}\). \\

\(C_{k,B}=k\rho_{\mathrm{Imp}}+B\) & Uniform margin bound used in Theorem~\ref{thm:alg_guarantee}. \\
\bottomrule
\end{tabular}
\end{table}


\section{Additional Proofs from Section \ref{sec:setting}}
\label{app:setting}

\ObsOne*
\begin{proof}
By Assumption~\ref{ass:singleIndex}, 
\[
\eta(x)=g(w^{\star\top} x).
\]
Since $g$ is nondecreasing, the set
\[
\{x:\eta(x)\ge 1/2\}
=
\{x:g(w^{\star\top} x)\ge 1/2\}
\]
is a threshold set of the form $\{x:w^{\star\top} x\ge b^\star\}$, where
\[
b^\star := \inf\{t\in\mathbb R: g(t)\ge 1/2\}.
\]
Hence the classifier $f^\star(x)=\mathbf 1\{w^{\star\top} x\ge b^\star\}$ is Bayes optimal.
\end{proof}

\section{Additional Proofs from Section \ref{sec:structural}}
\label{app:structural}

In this section, we present proofs from Section~\ref{sec:structural}. 

\begin{figure}[ht!] 
    \centering
    \includegraphics[width=0.7\columnwidth]{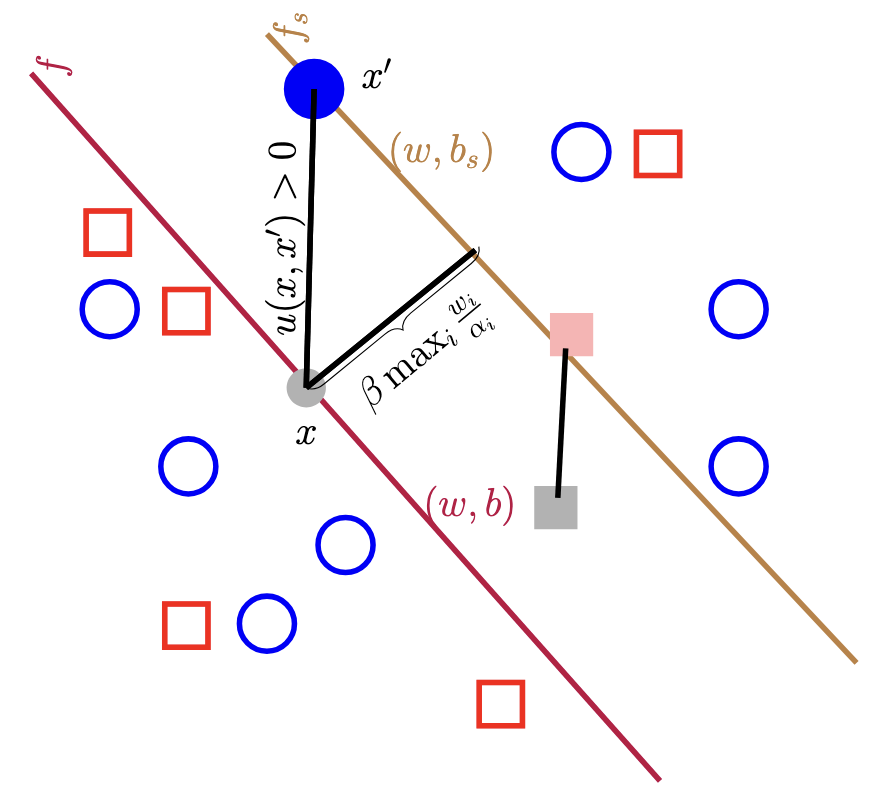}
    \caption{The strategic classifier $f_s$ (brown) is obtained by shifting the linear classifier $f$ (red) by a margin of $\max_{i \in [d]} \left( \frac{w_i}{\alpha_i} \right) \beta$. This translation anticipates utility-maximizing feature manipulation from $x$ to $x'$ by the agents.}
    \label{fig: f_s relation with f*}
\end{figure}

\increasingX* 
\begin{proof}
\label{proof:x'_i>=x_i}
We prove the lemma by contradiction. Suppose there exists some $i \in [d]$ such that $x^f_i < x_i$. We will show that there exists an alternative manipulated point $\tilde{x}$ such that $\tilde{x}_i = x_i$ and show that $u_f(x, \tilde{x}) \geq u_f(x, x^f)$, thereby allowing us to restrict our attention to manipulated points that are coordinate-wise non decreasing. 

We analyze the properties of the manipulated point $x^f$ by distinguishing cases based on the relative positions of $x$ and $x^f$ with respect to decision boundary.

\paragraph{Case 1:  $w^{\top}x \geq b$.} In this case, the agent is already classified positively. Then, the optimal strategy is $x^f = x$ (zero cost). Hence, the condition holds trivially. 

\paragraph{Case 2: $w^{\top} x < b$ and $w^\top x^f < b$.} The agent fails to cross the classifier boundary; again, $x^f = x$.

\paragraph{Case 3: $w^\top x < b$ and $w^\top x^f \geq b$.} Since $x^f$ is a strategic manipulation, it must lie on the decision boundary to minimize cost, so $w^\top x^f = b$. Let $J = \{i \in [d] \mid x^f_i < x_i\}$ with  $J \neq \emptyset$. Consider an index $i \in J$. We construct an intermediate point $\tilde{x}$ (see Figure~\ref{fig:x_f_with_and_without_wi0} for reference) such that: 
\[
    \tilde{x}_j = \begin{cases}
        x_i & \text{if } j = i, \\
        x^f_j & \text{if } j \neq i
    \end{cases}
\]
Since $x^f_i < x_i$, the term $|x^f_i - x_i|_{+}$ in the cost function is $0$. Similarly, for $\tilde{x}$, $|\tilde{x}_i - x_i|_{+} = 0$. Thus, the cost remains unchanged:
    \begin{align*}
        c(x, x^{f}) &= \sum_{j = 1}^{d} \alpha_j | x^{f}_j - x_j|_{+}  = \sum_{j \neq i} \alpha_j |x^{f}_j - x_j|_{+}  = c(x, \tilde{x})
    \end{align*}
Furthermore, we analyze the score of $\tilde{x}$:
    \begin{align*}
        w^{\top} \tilde{x} &= w^{\top} x^{f} + w_i(x_i - x^{f}_i)  = b \,+\, w_i(x_i -x^{f}_i)
    \end{align*}
Since $w_i \geq 0$, and $x_i > x^f_i$, let $\epsilon = w_i(x_i - x^{f}_i) \geq 0$. 

    \begin{figure}[!t]
    \centering
    \begin{minipage}{0.4\textwidth}
        \centering
        \includegraphics[width=0.9\linewidth]{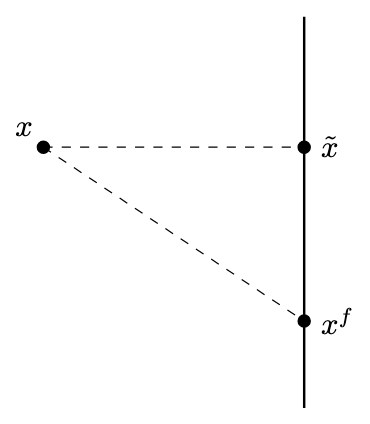}
        
    \end{minipage}
    \hfill
    \begin{minipage}{0.48\textwidth}
        \centering
        \includegraphics[width=0.9\linewidth]{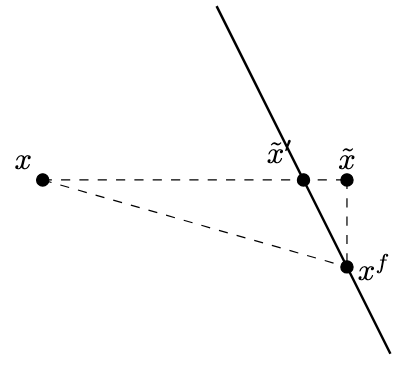}
    \end{minipage}

\caption{
Strategic response under feature manipulation constraints.
\textbf{Left:} There exists a coordinate $i \in [d]$ such that $w_i = 0$. 
In this case, the cost of moving from $x$ to $\tilde{x}$ is equal to the cost of moving to any other feasible point (e.g., $x^f$), since changes along this feature do not affect the prediction. 
Hence, the agent can manipulate $x$ to $\tilde{x}$ without incurring additional cost.
\textbf{Right:} For all $i \in [d]$, $w_i > 0$. 
Under the cost structure, the cost of moving from $x$ to $x^f$ is equal to the cost of moving from $x$ to $\tilde{x}$. 
Moreover, the cost of moving from $x$ to $\tilde{x}$ is strictly greater than the cost of moving from $x$ to $\tilde{x}'$. 
Therefore, any point $x$ will optimally manipulate to $\tilde{x}'$ rather than $\tilde{x}$.
}
       \label{fig:x_f_with_and_without_wi0}
\end{figure}

\begin{description}[leftmargin = *]
    \item[Sub-case 3.1: $w_i = 0$] If $w_i = 0$, then $\epsilon = 0$ and $w^\top \tilde{x} = w^\top x^f = b$. Since both $\tilde{x}$ and $x^f$ lie on the decision boundary (see figure \ref{fig:x_f_with_and_without_wi0})  and have equal costs, $\tilde{x}$ is also a valid maximizer of the utility. We can repeat this replacement for all $i \in J$ until all components satisfy the condition without loss of generality.
    \item[Sub-case 3.2: $w_i > 0$] In this scenario, the intermediate point $\tilde{x}$ satisfies: 
    \[
        w^\top \tilde{x} = w^\top x^f + w_i(x_i - x^f_i) = b + \epsilon, \quad \text{Where } \epsilon > 0
    \] 
    We also established previously that $c(x, \tilde{x}) = c(x, x^f)$.
    We observe that because $w^\top x^f = b > w^\top x$ and $w \geq 0$, there must exist at least one index $k \in [d]$ such that $w_k > 0$ and $x^f_k > x_k$. (if no such k existed, it would imply $x^f_j \leq x_j$ for all $j$ with non-negative weights, leading to $w^\top x^f \leq w^\top x < b$, a contradiction). 

    We construct a new point $\tilde{x}'$ by reducing the $k$-th component of $\tilde{x}$ to satisfy the boundary constraint (see figure \ref{fig:x_f_with_and_without_wi0}). Let $\gamma = \epsilon / w_k > 0$. We define $\tilde{x}'$ as: 
    \[
        \tilde{x}'_j = \begin{cases}
            \tilde{x}_k - \gamma & \text{if } j = k, \\
            \tilde{x}_k & \text{if } j \neq k
        \end{cases}
    \]
    First, we verify the boundary condition;  
    $    w^\top \tilde{x}' = w^\top \tilde{x} - w_k \gamma = (b + \epsilon) - w_k \left( \frac{\epsilon}{w_k} \right) = b$. 
     
    Second, we analyze the cost. Since $\tilde{x}_k = x^f_k > x_k$, we observe that: 
    \begin{align*}
    c(x, x^{f}) &= c(x, \tilde{x})  = \sum_{j = 1}^{d} \alpha_j |\tilde{x}_j - x_j|_{+} \\
    &= \sum_{j \neq k} \alpha_j|\tilde{x}'_j - x_j|_{+} \, + \, \alpha_k|\tilde{x}_k - x_k|_{+}\\
    &= \sum_{j \neq k} \alpha_j|\tilde{x}'_j - x_j|_{+} \, + \, \alpha_k \left| \tilde{x}'_k - x_k + \gamma_k \right|_{+}\\
    &= \sum_{j \neq k} \alpha_j|\tilde{x}'_j - x_j|_{+} \, + \, \alpha_k |\tilde{x}'_k - x_k|_{+} + \alpha_k \gamma_k \quad \quad \text{$\{\because \alpha_k, \gamma_k > 0\}$} \\
    &= c(x, \tilde{x}') + \alpha_k \gamma_k  > c(x, \tilde{x}') 
    \end{align*}
    Thus, we have found a point $\tilde{x}'$ on the decision boundary $(w^{\top} \tilde{x}' = b)$ with strictly lower cost than $x^f$. This contradicts the assumption that $x^f$ is the strategic response. 
\end{description}
\end{proof}

\RelationFAndFStar* 
\begin{proof} Let $f^{\star}$ be the Non-strategic Bayes optimal linear  classifier and $f_s^{\star}$ be the corresponding strategic classifier satisfying $\texttt{err}_{s}(f_s^{\star}) = \texttt{err}(f^{\star})$. For contradiction, assume that there is a classifier $f_s^{'}$ with parameters $(w', b')$ satisfying 
\begin{equation} 
\label{eq: contradiction eq}
\texttt{err}_{s}(f_s^{'}) < \texttt{err}_s(f_{s}^{\star}) . \end{equation}  and consider    classifier $f^{'}$ with parameters $w', b^{'} - \max_{i \in [d]}\left(\frac{w_i^{'}}{\alpha_i}\right) \beta$. The following set of inequalities result in a contradiction.   \begin{align*}  
\texttt{err}_s(f_s^{\star})  = \texttt{err}(f^{\star})  \leq \texttt{err}(f^{'}) = \texttt{err}_s(f_s^{'}) < \texttt{err}_s(f_s^{*}).   \end{align*} 
The first inequality above follows from Bayes optimality of $f^\star$ and the second inequality follows from Equation \ref{eq: contradiction eq}. 
\end{proof}

\begin{restatable}{lemma}{CharacterizationInOneD}[Improvement threshold lies between Non-strategic Bayes and Strategic thresholds]
\label{thm:imp in r1}
Let $x \sim \mathcal{D}$ with $x \in \mathcal{X} \subseteq \mathbb{R}$, and the label associated with input $x$ be $y_{x} \sim \operatorname{Bernoulli}(\eta(x))$ where $\eta: \mathbb{R} \rightarrow [0,1]$ is an increasing function. Consider threshold classifiers of the form $f_\theta(x) = \mathds{1}(x \geq \theta)$. 
Denote by $\theta^\star$ the Non-strategic Bayes optimal threshold. Let $\theta_s$ be the optimal threshold under strategic manipulation with  $\exists \alpha \in \mathbb{R}^{+}$ then suppose cost function $c(x,x') = max(0, \alpha(x'-x))$ and let $\theta_{\mathrm{imp}}$ be the optimal threshold under our improvement-aware model. Then the thresholds satisfy: 
\[
\theta^\star \;\;\le\;\; \theta_{\mathrm{imp}} \;\;\le\;\; \theta_s.
\]
\end{restatable}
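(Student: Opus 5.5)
The plan is to reduce everything to a pointwise comparison of per-agent losses in one dimension, reusing the structure from Theorem~\ref{thm:strategic-shift} and the proof of Theorem~\ref{thm: fs proxy for fimp}. With $c(x,x')=\max(0,\alpha(x'-x))$ the shift is $\Delta:=\beta/\alpha$. Theorem~\ref{thm:strategic-shift} and Corollary~\ref{cor:f_s_relation with f*} (with $d=1$, $w=1$) give $\theta_s=\theta^\star+\Delta$. Under the minimum-cost tie-breaking rule used in Theorem~\ref{thm: fs proxy for fimp}, an agent facing $f_\theta$ does the following:
\begin{itemize}
\item it stays put if $x<\theta-\Delta$ or $x\ge\theta$;
\item otherwise it moves to exactly $\theta$.
\end{itemize}
Hence $\texttt{err}_{\mathrm{Imp}}(f_\theta)=\mathbb E[\ell_\theta(x)]$, where
\[
\ell_\theta(x)=\eta(x)\mathds 1\{x<\theta-\Delta\}+(1-\eta(\theta))\mathds 1\{\theta-\Delta\le x<\theta\}+(1-\eta(x))\mathds 1\{x\ge\theta\}.
\]
I take $\eta(\theta^\star)=1/2$, by continuity as in Theorem~\ref{thm: fs proxy for fimp}. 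Since $\theta_{\mathrm{imp}}$ may not be unique, I prove the statement in the form: some minimizer lies in $[\theta^\star,\theta_s]$.

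\textbf{Lower bound.} I show that for every $\theta<\theta^\star$, $\ell_\theta(x)\ge\ell_{\theta^\star}(x)$ for all $x$, so no threshold below $\theta^\star$ beats $\theta^\star$. I split the line at the points $\theta-\Delta,\ \theta^\star-\Delta,\ \theta,\ \theta^\star$, in whichever order they occur. The inputs are $\eta(\theta)\le 1/2$, $\eta(x)\le 1/2$ for $x<\theta^\star$, and $\eta(x)\ge 1/2$ for $x\ge\theta^\star$. Each region then gives one of the following:
\begin{itemize}
\item equality (for $x<\theta-\Delta$ and for $x\ge\theta^\star$);
\item $1-\eta(\theta)\ge 1/2\ge\eta(x)$;
\item $1-\eta(\theta)\ge 1/2=1-\eta(\theta^\star)$;
\item $1-\eta(x)\ge 1/2\ge \eta(x)$ or $1-\eta(x)\ge 1/2$, for $x<\theta^\star$.
\end{itemize}
This step is routine case bookkeeping.

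\textbf{Upper bound, and the main obstacle.} I would attempt the symmetric comparison: for $\theta>\theta_s$, show $\ell_\theta\ge\ell_{\theta_s}$.
\begin{itemize}
\item For $x<\theta^\star$ the two losses agree.
\item For $x\in[\theta^\star,\theta_s)$: $\ell_\theta(x)=\eta(x)\ge 1/2\ge 1-\eta(\theta_s)=\ell_{\theta_s}(x)$.
\item For $x\in[\theta_s,\theta-\Delta)$: $\eta(x)\ge 1/2\ge 1-\eta(x)$.
\end{itemize}
The band $[\max(\theta_s,\theta-\Delta),\theta)$ goes the wrong way. There $\ell_\theta(x)=1-\eta(\theta)\le 1-\eta(x)=\ell_{\theta_s}(x)$, because pushing the threshold up induces genuine improvement.

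Pointwise domination therefore fails, and I expect this step to be the crux. In fact, a point mass at some $x_0>\theta_s$ with $\eta$ strictly increasing beyond $x_0$ makes $\texttt{err}_{\mathrm{Imp}}(f_\theta)=1-\eta(\theta)$ decreasing on $(x_0,x_0+\Delta]$. This suggests that the upper inequality needs an additional hypothesis. To close it I would first try to integrate the difference $\mathbb E[\ell_\theta-\ell_{\theta_s}]$ and show that the loss $\int_{[\theta_s,\theta-\Delta)}(2\eta-1)\,dP$ on rejected qualified agents dominates the band gain $\int_{[\theta-\Delta,\theta)}(\eta(\theta)-\eta(x))\,dP$. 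This would hold under a condition such as a non-increasing density on $[\theta_s,\infty)$ together with $\eta$ saturating (e.g.\ $\eta(\theta_s)$ close to $1$). Failing that, I would state the upper bound under the explicit assumption that $\eta$ is constant on $[\theta_s,\infty)$, in which case the band gain vanishes and the pointwise argument goes through.
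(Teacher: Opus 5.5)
Your lower bound is correct, and it is cleaner than the paper's argument. The paper argues by contradiction: it writes $\texttt{err}_{\mathrm{Imp}}(f_{\theta_{\mathrm{Imp}}})-\texttt{err}_{\mathrm{Imp}}(f_{\theta^\star})$ as $F$-increments and integrals against a density $p_X$, then pushes this through a chain of inequalities to a strict sign. Your pointwise comparison $\ell_\theta\ge\ell_{\theta^\star}$ uses the same facts about $\eta$ but needs no density. It also shows exactly what is true: the two losses can differ only on $[\theta-\Delta,\theta^\star)$, so thresholds below $\theta^\star$ are never better. The paper's strict inequality, and with it the claim that every minimizer is $\ge\theta^\star$, requires positive mass on that interval. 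Your ``some minimizer'' formulation is therefore the right one.

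Your diagnosis of the upper bound is also correct, and you have not missed an idea. The inequality $\theta_{\mathrm{imp}}\le\theta_s$ is false in general, and the paper's proof of it contains a sign error. In the appendix claim used for the upper bound, the first inequality produces the term $\mathbb P(y_{\theta_s}=0)\bigl(F(\theta_s-\tfrac{\beta}{\alpha})-F(\theta_{\mathrm{Imp}})\bigr)$. The next step asserts that replacing $\mathbb P(y_{\theta_s}=0)$ by the smaller $\mathbb P(y_{\theta_{\mathrm{Imp}}}=0)$ strictly lowers the expression. But the original minus the modified expression is
\[
\Bigl(\mathbb P(y_{\theta_s}=0)-\mathbb P(y_{\theta_{\mathrm{Imp}}}=0)\Bigr)\Bigl(F\bigl(\theta_s-\tfrac{\beta}{\alpha}\bigr)-F(\theta_{\mathrm{Imp}})\Bigr)\le 0,
\]
because $\theta_s-\beta/\alpha<\theta_{\mathrm{Imp}}$. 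So the step runs the wrong way. This is precisely your mover-band effect: agents who move to a higher threshold are labelled by $\eta(\theta)$, not by $\eta(x)$ or $\eta(\theta_s)$. No proof of the statement as written exists; what survives is your lower bound, plus the upper bound under an extra hypothesis.

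You should make the counterexample airtight with a full-support density. With a point mass at $x_0$, every $\theta\le x_0+\Delta$ is strategically optimal, so a reader could choose $\theta_s=x_0+\Delta$; the paper's argument is also written for densities. Take $\eta=\sigma$ (logistic), $\alpha=\beta=1$, and $\mathcal D=0.99\,\mathrm{Unif}[2,2.1]+0.01\,\mathcal N(0,1)$.

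\begin{itemize}
\item Since $p_X>0$ everywhere and $\frac{d}{d\theta}\texttt{err}(f_\theta)=p_X(\theta)(2\eta(\theta)-1)$, the Bayes threshold is uniquely $\theta^\star=0$. The identity $\texttt{err}_s(f_\theta)=\texttt{err}(f_{\theta-1})$ then gives the unique strategic threshold $\theta_s=1$.
\item For every $\theta\le 2.1$, the uniform component is accepted with post-response positive probability at most $\sigma(2.1)$. Hence $\texttt{err}_{\mathrm{Imp}}(f_\theta)\ge 0.99\,(1-\sigma(2.1))\approx 0.108$.
\item At $\theta=3$, the whole uniform component moves to $3$, so $\texttt{err}_{\mathrm{Imp}}(f_3)\le 0.99\,(1-\sigma(3))+0.01\approx 0.057$.
\end{itemize}

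The map $\theta\mapsto\texttt{err}_{\mathrm{Imp}}(f_\theta)$ is continuous. Its limits are $\mathbb E[1-\eta(X)]$ at $-\infty$ and $\mathbb E[\eta(X)]$ at $+\infty$, both above $0.057$. Hence a minimizer exists, and every minimizer exceeds $2.1>\theta_s$.

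There is one small slip in your upper-bound bookkeeping. For $\theta\in(\theta_s,\theta_s+\Delta)$, the agents in $[\theta-\Delta,\theta_s)$ also move under $f_\theta$. So $\ell_\theta(x)=\eta(x)$ on $[\theta^\star,\theta_s)$ holds only for $x<\theta-\Delta$, and the wrong-way region is the whole band $[\theta-\Delta,\theta)$. This does not change your conclusion.

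On your two repairs: the constancy repair ($\eta$ constant on $[\theta_s,\infty)$) still gives pointwise equality on that entire band, so it works. The density/saturation condition is only a heuristic and would need an actual verification, including for $\theta\in(\theta_s,\theta_s+\Delta)$.
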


\begin{proof}
We establish the inequality chain $\theta^\star \leq \theta_{\mathrm{Imp}} \leq \theta_s$ via proof by contradiction. We rely on the monotonicity of the class posterior $\eta(x)$ under the generalized linear model assumption. We proceed in two parts, First, suppose that $\theta_{\mathrm{imp}} > \theta_s$. We will show that this violates the optimality condition of the improvement-aware classifier. Next, suppose that $\theta_{\mathrm{imp}} < \theta^\star$. We will again reach a contradiction by showing that the improvement risk can be strictly reduced by raising the threshold. For simplicity in notation, let $x' = x^{f_{\theta_{\mathrm{Imp}}}}$ be strategic manipulated point under $f_{\theta_{\mathrm{Imp}}}$ threshold classifier.\\
We know that,
\begin{align*}
    \theta_{\mathrm{Imp}} &= \argmin_{\theta \in \mathbb{R}} \texttt{err}_{\mathrm{Imp}}(f_{\theta}) \\
    &= \argmin_{\theta \in \mathbb{R}} \mathbb{E}[\mathds{1}(f_{\theta}(x^{f_{\theta}}) \neq y_{x^{f_{\theta}}})]
\end{align*} 

We begin with a few supporting technical results. 

\begin{claim}\label{lemma: lower bound for improvement threshold}
\begin{align*}
\left\{
\begin{aligned}
    &\mathbb{P}(y_{\theta_{\mathrm{Imp}}} = 0) (F(\theta_{\mathrm{Imp}}) - F(\theta_{\mathrm{Imp}} -\frac{\beta}{\alpha})) - \mathbb{P}(y_{\theta^\star} = 0)(F(\theta^\star) - F(\theta^\star -\frac{\beta}{\alpha}))\\
    &\quad + \int_{\theta_{\mathrm{Imp}}}^{\theta^\star}\,\mathbb{P}(y_x = 0)\, p_X(x) dx \, - \int_{\theta_{\mathrm{Imp}} -\frac{\beta}{\alpha}}^{\theta^\star -\frac{\beta}{\alpha}}\, \mathbb{P}(y_x = 1)\, p_X(x)\,dx 
\end{aligned}
\right\}    
    > 0 
\end{align*}
\end{claim}

\begin{proof}
\begin{align*}
    &\mathbb{P}(y_{\theta_{\mathrm{Imp}}} = 0) (F(\theta_{\mathrm{Imp}}) - F(\theta_{\mathrm{Imp}} -\frac{\beta}{\alpha})) - \mathbb{P}(y_{\theta^\star} = 0)(F(\theta^\star) - F(\theta^\star -\frac{\beta}{\alpha}))\\
    &\quad + \int_{\theta_{\mathrm{Imp}}}^{\theta^\star}\,\mathbb{P}(y_x = 0)\, p_X(x) dx \, - \int_{\theta_{\mathrm{Imp}} -\frac{\beta}{\alpha}}^{\theta^\star -\frac{\beta}{\alpha}}\, \mathbb{P}(y_x = 1)\, p_X(x)\,dx  \\[1em]
    &\geq \,\mathbb{P}(y_{\theta_{\mathrm{Imp}}} = 0) (F(\theta_{\mathrm{Imp}}) - F(\theta_{\mathrm{Imp}} -\frac{\beta}{\alpha})) + \mathbb{P}(y_{\theta^\star} = 0)(F(\theta^\star -\frac{\beta}{\alpha}) - F(\theta_{\mathrm{Imp}}))\\
    &\quad - \int_{\theta_{\mathrm{Imp}} -\frac{\beta}{\alpha}}^{\theta^\star -\frac{\beta}{\alpha}}\, \mathbb{P}(y_x = 1)\, p_X(x)\,dx \\
    \tag{$\because \forall x \in [\theta_{\mathrm{Imp}}, \theta^\star]$, $\quad \mathbb{P}(y_{x} = 0) \leq \mathbb{P}(y_{\theta_{\mathrm{Imp}}} = 0)$} \\[1em] 
    &> \mathbb{P}(y_{\theta^\star} = 0)\,(F(\theta^\star -\frac{\beta}{\alpha}) - F(\theta_{\mathrm{Imp}} -\frac{\beta}{\alpha})) - \int_{\theta_{\mathrm{Imp}} -\frac{\beta}{\alpha}}^{\theta^\star -\frac{\beta}{\alpha}}\, \mathbb{P}(y_x = 1)\, p_X(x)\,dx  \\
    \tag{$\because \theta_{\mathrm{Imp}} < \theta^\star \implies \mathbb{P}(y_{\theta_{\mathrm{Imp}}} = 0) > \mathbb{P}(y_{\theta^\star} = 0)$} \\[1em]
    &=\,(F(\theta^\star -\frac{\beta}{\alpha}) - F(\theta_{\mathrm{Imp}} -\frac{\beta}{\alpha}))(\mathbb{P}(y_{\theta^\star} = 0) - 1) + \int_{\theta_{\mathrm{Imp}} -\frac{\beta}{\alpha}}^{\theta^\star -\frac{\beta}{\alpha}} \mathbb{P}(y_x = 0)\, p_X(x) dx \\[1em]
    &\geq (F(\theta^\star -\frac{\beta}{\alpha}) - F(\theta_{\mathrm{Imp}} -\frac{\beta}{\alpha}))(\mathbb{P}(y_{\theta^\star} = 0) + \mathbb{P}(y_{\theta^{\star} -\frac{\beta}{\alpha}} = 0) - 1)\\
    \tag{$\because \forall x \in [\theta_{\mathrm{Imp}} -\frac{\beta}{\alpha}, \theta^{\star} -\frac{\beta}{\alpha}]$, $\quad \mathbb{P}(y_{x} = 0) \geq \mathbb{P}(y_{\theta^{\star} -\frac{\beta}{\alpha}} = 0)$} \\[1em]
    &> (F(\theta^{\star} -\frac{\beta}{\alpha}) - F(\theta_{\mathrm{Imp}} -\frac{\beta}{\alpha}))(2\,\mathbb{P}(y_{\theta^{\star}} = 0) - 1)\\[1em]
    &= 0 \tag{ $\because \mathbb{P}(y_{\theta^{\star}} = 0) = \frac{1}{2}$}
\end{align*}
Hence proved. 
\end{proof}

\begin{claim}\label{lemma: upper bound on improvement threshold}
\begin{align*}
\left\{
\begin{aligned}
    &\mathbb{P}(y_{\theta_{\mathrm{Imp}}} = 0)(F(\theta_{\mathrm{Imp}}) - F(\theta_{\mathrm{Imp}} -\frac{\beta}{\alpha})) - \mathbb{P}(y_{\theta_{s}} = 0)(F(\theta_{s}) - F(\theta_{s} -\frac{\beta}{\alpha}))\\
    &\quad - \int_{\theta_{s}}^{\theta_{\mathrm{Imp}}}\,\mathbb{P}(y_x = 0)\, p_X(x) dx \, + \int_{\theta_{s} -\frac{\beta}{\alpha}}^{\theta_{\mathrm{Imp}} -\frac{\beta}{\alpha}}\, \mathbb{P}(y_x = 1)\, p_X(x)\,dx 
\end{aligned}
\right\}    
    > 0 
\end{align*}
\end{claim}

\begin{proof}
\begin{align*}
    &\mathbb{P}(y_{\theta_{\mathrm{Imp}}} = 0)(F(\theta_{\mathrm{Imp}}) - F(\theta_{\mathrm{Imp}} -\frac{\beta}{\alpha})) - \mathbb{P}(y_{\theta_{s}} = 0)(F(\theta_{s}) - F(\theta_{s} -\frac{\beta}{\alpha}))\\
    &\quad - \int_{\theta_{s}}^{\theta_{\mathrm{Imp}}}\,\mathbb{P}(y_x = 0)\, p_X(x) dx \, + \int_{\theta_{s} -\frac{\beta}{\alpha}}^{\theta_{\mathrm{Imp}} -\frac{\beta}{\alpha}}\, \mathbb{P}(y_x = 1)\, p_X(x)\,dx  \\[1em]
    &\geq \mathbb{P}(y_{\theta_{\mathrm{Imp}}} = 0)(F(\theta_{\mathrm{Imp}}) - F(\theta_{\mathrm{Imp}} -\frac{\beta}{\alpha}))\,+ \, \mathbb{P}(y_{\theta_s} = 0)(F(\theta_s -\frac{\beta}{\alpha}) - F(\theta_{\mathrm{Imp}})) \\
    &\quad + \int_{\theta_{s} -\frac{\beta}{\alpha}}^{\theta_{\mathrm{Imp}} -\frac{\beta}{\alpha}}\, \mathbb{P}(y_x = 1)\, p_X(x)\,dx \\
    \tag{$\because \forall x \in [\theta_{s}, \theta_{\mathrm{Imp}}]$, $\quad - \mathbb{P}(y_{x} = 0) \geq - \mathbb{P}(y_{\theta_{s}} = 0)$} \\[1em] 
    &> \mathbb{P}(y_{\theta_{\mathrm{Imp}}}= 0)(F(\theta_{s} -\frac{\beta}{\alpha}) - F(\theta_{\mathrm{Imp}} -\frac{\beta}{\alpha})) + \int_{\theta_{s} -\frac{\beta}{\alpha}}^{\theta_{\mathrm{Imp}} -\frac{\beta}{\alpha}}\, \mathbb{P}(y_x = 1)\, p_X(x)\,dx \\
    \tag{$\because \theta_{s} < \theta_{\mathrm{Imp}} \implies \mathbb{P}(y_{\theta_{s}} = 0) > \mathbb{P}(y_{\theta_{\mathrm{Imp}}} = 0)$}\\[1em] 
    &= (F(\theta_{\mathrm{Imp}} -\frac{\beta}{\alpha}) - F(\theta_{s} -\frac{\beta}{\alpha}))(1 - \mathbb{P}(y_{\theta_{\mathrm{Imp}}} = 0)) - \int_{\theta_s -\frac{\beta}{\alpha}}^{\theta_{\mathrm{Imp}} -\frac{\beta}{\alpha}}\mathbb{P}(y_x = 0)\,p_X(x)dx \\[1em]
    &\geq (F(\theta_{\mathrm{Imp}} -\frac{\beta}{\alpha}) - F(\theta_{s} -\frac{\beta}{\alpha}))(1 - \mathbb{P}(y_{\theta_{\mathrm{Imp}}} = 0) - \mathbb{P}(y_{\theta_s -\frac{\beta}{\alpha}} = 0))\\
    \tag{$\because \forall x \in [\theta_{s} -\frac{\beta}{\alpha}, \theta_{\mathrm{Imp}} -\frac{\beta}{\alpha}]$, $\quad - \mathbb{P}(y_{x} = 0) \geq - \mathbb{P}(y_{\theta_{s} -\frac{\beta}{\alpha}} = 0)$}\\[1em]
    &> (F(\theta_{\mathrm{Imp}} -\frac{\beta}{\alpha}) - F(\theta_{s} -\frac{\beta}{\alpha}))(1 - 2 \mathbb{P}(y_{\theta^{\star}} = 0)) 
    \tag{$\because - \mathbb{P}(y_{\theta_{\mathrm{Imp}}}=0) > - \mathbb{P}(y_{\theta_{s} -\frac{\beta}{\alpha}}=0) = -\mathbb{P}(y_{\theta^{\star}}=0)$}\\[1em]
    &= 0 \tag{ $\because \mathbb{P}(y_{\theta^{\star}} = 0) = \frac{1}{2}$}
\end{align*}
\end{proof}

\paragraph{Case 1: Lower bound ($\theta_{\mathrm{Imp}} \geq \theta^\star$)}
Now, let us assume that ${\theta_{\mathrm{Imp}} < \theta^{\star}}$. 
\begin{align*}
    \texttt{err}_{\mathrm{Imp}}(f_{\theta_{\mathrm{Imp}}}) &= \mathbb{E}[\mathds{1}(f_{\theta_{\mathrm{Imp}}}(x') \neq y_{x'})]  \\
    &= \mathbb{E}[\mathds{1}(f_{\theta_{\mathrm{Imp}}}(x') \neq y_{x'}, x \in R_{\theta_{\mathrm{Imp}}})] + \mathbb{E}[\mathds{1}(f_{\theta_{\mathrm{Imp}}}(x') \neq y_{x'}, x \notin R_{\theta_{\mathrm{Imp}}})] \\ 
    &= \mathbb{P}({f_{\theta_{\mathrm{Imp}}}(x') \neq y_{x'}, x \in R_{\theta_{\mathrm{Imp}}}}) + \mathbb{P}({f_{\theta_{\mathrm{Imp}}}(x') \neq y_{x'}, x \notin R_{\theta_{\mathrm{Imp}}}}) \\ 
    \intertext{$R_{\theta_{\mathrm{Imp}}}:= [\theta_{\mathrm{Imp}} -\frac{\beta}{\alpha}, \theta_{\mathrm{Imp}}]$ is the region such that agents in this region modify their features to get positive classification. } 
    &= \mathbb{P}({y_{\theta_{\mathrm{Imp}}} = 0 , x \in R_{\theta_{\mathrm{Imp}}}}) + \mathbb{P}({f_{\theta_{\mathrm{Imp}}}(x') \neq y_{x'}, x \notin R_{\theta_{\mathrm{Imp}}}}) \\ 
    &= \mathbb{P}(y_{\theta_{\mathrm{imp}}} = 0,\, x \in R_{\theta_{\mathrm{imp}}}\big)  
   + \mathbb{P}(y_x = 0,\, x > \theta_{\mathrm{Imp}}) 
   + \mathbb{P}(y_x = 1,\, x < \theta_{\mathrm{Imp}} -\frac{\beta}{\alpha}) \\[1em]
    &= \int_{\theta_{\mathrm{imp}}-\frac{\beta}{\alpha}}^{\theta_{\mathrm{imp}}} 
          \mathbb{P}(y_{\theta_{\mathrm{imp}}}=0)\, p_X(x)\, dx 
       + \int_{\theta_{\mathrm{imp}}}^{\infty} 
          \mathbb{P}(y_x=0)\, p_X(x)\, dx \\ 
    &\quad + \int_{-\infty}^{\theta_{\mathrm{imp}}-\frac{\beta}{\alpha}} 
          \mathbb{P}(y_x=1)\, p_X(x)\, dx \\[1em]
    &= \texttt{err}_{\mathrm{Imp}}(f_{\theta^{\star}}) + \mathbb{P}(y_{\theta_{\mathrm{Imp}}} = 0)(F(\theta_{\mathrm{Imp}}) - F(\theta_{\mathrm{Imp}} -\frac{\beta}{\alpha})) \\
    &\quad - \mathbb{P}(y_{\theta^{\star}} = 0)(F(\theta^{\star}) - F(\theta^{\star} -\frac{\beta}{\alpha})) + \int_{\theta_{\mathrm{Imp}}}^{\theta^{\star}} \mathbb{P}(y_x = 0)\, p_X(x)\, dx\\
    &\quad - \int_{\theta_{\mathrm{Imp}} -\frac{\beta}{\alpha}}^{\theta^{\star} -\frac{\beta}{\alpha}}\mathbb{P}(y_x = 1)\, p_X(x), dx \\[1em]
    &> \texttt{err}_{\mathrm{Imp}}(f_{\theta^{\star}}) \tag{Follows from the \textit{Lemma \ref{lemma: lower bound for improvement threshold}} }
\end{align*}
which contradicts improvement optimality condition of classifier, our assumption is false. Thus $\theta_{\mathrm{Imp}} \geq \theta^{\star}$ \\[1em] 
\paragraph{Case 2: Upper bound ($\theta_{\mathrm{Imp}} \leq \theta_s$)} Now, let us assume that ${\theta_{\mathrm{Imp}} > \theta_s}$.  
\begin{align*}
    \texttt{err}_{\mathrm{Imp}}(f_{\theta_{\mathrm{Imp}}}) &= \mathbb{E}[\mathds{1}(f_{\theta_{\mathrm{Imp}}}(x') \neq y_{x'})]  \\
    &= \mathbb{P}(y_{\theta_{\mathrm{imp}}} = 0,\, x \in R_{\theta_{\mathrm{imp}}}\big)  
   + \mathbb{P}(y_x = 0,\, x > \theta_{\mathrm{Imp}}) 
   + \mathbb{P}(y_x = 1,\, x < \theta_{\mathrm{Imp}} -\frac{\beta}{\alpha}) \\[1em]
    &= \int_{\theta_{\mathrm{imp}}-\frac{\beta}{\alpha}}^{\theta_{\mathrm{imp}}} 
          \mathbb{P}(y_{\theta_{\mathrm{imp}}}=0)\, p_X(x)\, dx 
       + \int_{\theta_{\mathrm{imp}}}^{\infty} 
          \mathbb{P}(y_x=0)\, p_X(x)\, dx \\ 
    &\quad + \int_{-\infty}^{\theta_{\mathrm{imp}}-\frac{\beta}{\alpha}} 
          \mathbb{P}(y_x=1)\, p_X(x)\, dx \\[1em]
    &= \texttt{err}_{\mathrm{Imp}}(f_{\theta_s}) + \mathbb{P}(y_{\theta_{\mathrm{Imp}}} = 0)(F(\theta_{\mathrm{Imp}}) - F(\theta_{\mathrm{Imp}} -\frac{\beta}{\alpha})) \\
    &\quad - \mathbb{P}(y_{\theta_{s}} = 0)(F(\theta_{s}) - F(\theta_{s} -\frac{\beta}{\alpha})) - \int_{\theta_s}^{\theta_{\mathrm{Imp}}} \mathbb{P}(y_x = 0)\, p_X(x)\, dx\\
    &\quad - \int_{\theta_s -\frac{\beta}{\alpha}}^{\theta_{\mathrm{Imp}} -\frac{\beta}{\alpha}}\mathbb{P}(y_x = 1)\, p_X(x), dx \\[1em]
    &> \texttt{err}_{\mathrm{Imp}}(f_{\theta_s}) \tag{Follows from the \textit{Lemma} \ref{lemma: upper bound on improvement threshold} }
\end{align*}
which contradicts the improvement optimality condition, our assumption is false. Thus $\theta_{\mathrm{Imp}} \leq \theta_s$.
\end{proof}

\section{Additional Proofs from Section \ref{sec:Algorithm}}
\label{app:PAC}
\begin{lemma}[Uniform deviation bound for a VC class]
\label{lem:vc-uniform-deviation}
Let \(H\) be a class of binary functions \(h:\mathcal Z\to\{0,1\}\) with
\(\mathrm{VC}(H)\le d\). Let \(Z_1,\ldots,Z_n\overset{\mathrm{i.i.d.}}{\sim}P\), and define
\[
\Gamma(S):=\sup_{h\in H}\left|\mathbb E[h(Z)]-\frac1n\sum_{i=1}^n h(Z_i)\right|.
\]
Then
\[
\mathbb E_S[\Gamma(S)]
\le
\frac{2}{\sqrt n}
+
4\sqrt{\frac{2d\log(en/d)}{n}}.
\]
Moreover, with probability at least \(1-\delta\),
\[
\Gamma(S)
\le
\frac{2}{\sqrt n}
+
4\sqrt{\frac{2d\log(en/d)}{n}}
+
\sqrt{\frac{2\log(1/\delta)}{n}}.
\]
\end{lemma}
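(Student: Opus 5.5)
The plan is the standard three-step route: symmetrization to Rademacher complexity, Sauer--Shelah plus Massart's finite-class lemma, and McDiarmid for the high-probability part.

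\emph{Step 1 (symmetrization).} Introduce an independent ghost sample \(Z_1',\ldots,Z_n'\) and i.i.d.\ Rademacher signs \(\sigma_i\). Writing \(\mathbb E[h(Z)]\) as the ghost-sample average and applying Jensen, we get
\[
\mathbb E_S[\Gamma(S)]\le 2\,\mathbb E_{S,\sigma}\Big[\sup_{h\in H}\Big|\frac1n\sum_{i=1}^n\sigma_i h(Z_i)\Big|\Big].
\]

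\emph{Step 2 (finite-class bound on the sample).} Fix \(S\) and let \(H_S=\{(h(Z_1),\ldots,h(Z_n)):h\in H\}\subseteq\{0,1\}^n\). The absolute value is the only nonstandard point. I would handle it by symmetrizing the class: take \(H_S\cup(-H_S)\), and add the zero vector so that the supremum is nonnegative. Every vector in this set has Euclidean norm at most \(\sqrt n\), and its cardinality is at most \(2|H_S|+1\). By the Sauer--Shelah lemma, \(|H_S|\le (en/d)^d\) for \(n\ge d\). Massart's finite class lemma then gives
\[
\mathbb E_\sigma\sup_h\Big|\frac1n\sum_i\sigma_ih(Z_i)\Big|\le \frac{\sqrt{2\log(2(en/d)^d+1)}}{\sqrt n}.
\]

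\emph{Step 3 (tidying the constant).} Use \(\log(2N+1)\le \log 3+\log N\) together with \(\sqrt{a+b}\le\sqrt a+\sqrt b\). The \(\log 3\) part contributes at most \(\sqrt{2\log 3}/\sqrt n\le 2/\sqrt n\) after the factor \(2\) from Step 1 is accounted for. I expect the bookkeeping to yield a term like \(2\sqrt{2d\log(en/d)/n}\), which is dominated by the stated \(4\sqrt{2d\log(en/d)/n}\), so there is slack. If \(n<d\), the bound is trivial because \(\Gamma\le1\) and the right-hand side exceeds \(1\). Getting the additive \(2/\sqrt n\) and the factor \(4\) to match exactly is the only fiddly part. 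I expect this to be the main, if mild, obstacle: being careful about the absolute value inside the supremum and the extra zero vector.

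\emph{Step 4 (high probability).} Replacing one \(Z_i\) changes \(\Gamma(S)\) by at most \(1/n\), since \(h\in\{0,1\}\). McDiarmid's inequality therefore gives
\[
\Gamma(S)\le\mathbb E\Gamma(S)+\sqrt{\log(1/\delta)/(2n)}
\]
with probability at least \(1-\delta\). This is at most \(\sqrt{2\log(1/\delta)/n}\), which completes the claim.
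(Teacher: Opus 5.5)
Your route is the paper's: symmetrization, Sauer's lemma with a finite-class (Massart) bound, then McDiarmid. Your Massart step on $H_S\cup(-H_S)\cup\{0\}$ spells out the ``standard Rademacher bound for binary VC classes'' that the paper cites without proof. Your McDiarmid deviation $\sqrt{\log(1/\delta)/(2n)}$ is also tighter than the paper's $\sqrt{2\log(1/\delta)/n}$. Two details in your write-up need correcting.

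\textbf{The constant in Step 3.} The inequality you state is false. After the factor $2$ from symmetrization, the $\log 3$ term is $2\sqrt{2\log 3}/\sqrt n\approx 2.96/\sqrt n$, which exceeds $2/\sqrt n$. The final bound still holds, but only because you use up the slack in the $d$-term. For $n\ge d\ge 1$ you have $d\log(en/d)\ge 1$, so the excess of roughly $0.96/\sqrt n$ is dominated by $2\sqrt{2d\log(en/d)/n}\ge 2\sqrt2/\sqrt n$. A cleaner alternative is $\log 3+d\log(en/d)\le(1+\log 3)\,d\log(en/d)$. This gives $\mathbb E_S[\Gamma(S)]\le 2\sqrt{1+\log 3}\,\sqrt{2d\log(en/d)/n}\approx 2.9\sqrt{2d\log(en/d)/n}$, already below the $4\sqrt{\cdot}$ term alone.

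\textbf{The case $n<d$.} Your claim that this case is trivial is wrong. For $n<d/e$ the right-hand side is undefined, since $\log(en/d)<0$. For $d/e\le n<d$ it can be well below $1$; for example, $d=271$, $n=100$ gives about $0.71$.

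The lemma itself fails in this regime. Take $H$ to be the indicators of subsets of size at most $d$ of $\{1,\dots,M\}$ under the uniform distribution; this class has VC dimension $d$. Choosing $h$ to be the indicator of the sampled points gives $\Gamma(S)\ge 1-n/M$, which is close to $1$ for large $M$.

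So the statement should be read with $n\ge d$. That is exactly what the paper implicitly assumes when it applies Sauer's bound $\Pi_H(n)\le(en/d)^d$. Your proof is complete in that regime once the constant bookkeeping is fixed as above.
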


\begin{proof}
Let \(S'=\{Z'_1,\ldots,Z'_n\}\) be an independent ghost sample drawn from \(P\), and let
\(\sigma_1,\ldots,\sigma_n\) be independent Rademacher random variables. By symmetrization,
\[
\mathbb E_S[\Gamma(S)]
\le
2\mathbb E_{S,\sigma}\left[
\sup_{h\in H}\left|\frac1n\sum_{i=1}^n\sigma_i h(Z_i)\right|
\right]
=
2\mathbb E_S[\widehat{\mathfrak R}_S(H)].
\]
Since \(H\) is binary and has VC dimension at most \(d\), Sauer's lemma gives
\[
\Pi_H(n)\le \left(\frac{en}{d}\right)^d,
\]
where \(\Pi_H(n)\) is the growth function of \(H\). Combining this with the standard Rademacher bound for binary VC classes yields
\[
\mathbb E_S[\widehat{\mathfrak R}_S(H)]
\le
\frac{1}{\sqrt n}
+
2\sqrt{\frac{2d\log(en/d)}{n}}.
\]
Therefore,
\[
\mathbb E_S[\Gamma(S)]
\le
2\left(
\frac{1}{\sqrt n}
+
2\sqrt{\frac{2d\log(en/d)}{n}}
\right)
=
\frac{2}{\sqrt n}
+
4\sqrt{\frac{2d\log(en/d)}{n}}.
\]
This proves the expectation bound.

For the high-probability part, observe that changing one sample point \(Z_i\) changes
\(\frac1n\sum_{i=1}^n h(Z_i)\) by at most \(1/n\) for every \(h\in H\), and therefore changes
\(\Gamma(S)\) by at most \(1/n\). Hence \(\Gamma(S)\) satisfies the bounded-difference condition. By McDiarmid's inequality, with probability at least \(1-\delta\),
\[
\Gamma(S)\le \mathbb E_S[\Gamma(S)]+\sqrt{\frac{2\log(1/\delta)}{n}}.
\]
Substituting the expectation bound above proves the claimed high-probability inequality.
\end{proof}
\begin{lemma}[Bounded difference for uniform deviations]
\label{lem:bounded-difference}
Let \(H\) be a class of functions \(h:\mathcal Z\to[0,1]\), and let
\[
\Gamma(S):=\sup_{h\in H}\left|\mathbb E[h(Z)]-\frac1n\sum_{i=1}^n h(z_i)\right|,
\]
where \(S=(z_1,\ldots,z_n)\). If \(S\) and \(S'\) differ in exactly one coordinate, then
\[
|\Gamma(S)-\Gamma(S')|\le \frac1n.
\]
\end{lemma}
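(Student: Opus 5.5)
The plan is a direct pointwise comparison, followed by taking suprema. Fix an arbitrary \(h\in H\). Write \(A_h(S):=\mathbb E[h(Z)]-\frac1n\sum_{i=1}^n h(z_i)\), so that \(\Gamma(S)=\sup_{h\in H}|A_h(S)|\). Suppose \(S\) and \(S'\) differ only in coordinate \(j\), with \(z_j\) replaced by \(z_j'\). The population term \(\mathbb E[h(Z)]\) does not depend on the sample, so
\[
A_h(S)-A_h(S')=\frac1n\bigl(h(z_j')-h(z_j)\bigr).
\]
Because \(h\) takes values in \([0,1]\), the right-hand side has absolute value at most \(1/n\).

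Next we pass from \(A_h\) to its absolute value. By the reverse triangle inequality,
\[
\bigl||A_h(S)|-|A_h(S')|\bigr|\le |A_h(S)-A_h(S')|\le \frac1n,
\]
and this holds uniformly in \(h\in H\). The key inequality used here is that for any two real-valued families indexed by \(h\), we have \(|\sup_h a_h-\sup_h b_h|\le \sup_h|a_h-b_h|\). To verify it, note that \(a_h\le b_h+\sup_{h'}|a_{h'}-b_{h'}|\le \sup_{h'}b_{h'}+\sup_{h'}|a_{h'}-b_{h'}|\) for every \(h\). Taking the supremum over \(h\) gives one direction, and the other follows by symmetry.

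Applying this with \(a_h=|A_h(S)|\) and \(b_h=|A_h(S')|\) gives \(|\Gamma(S)-\Gamma(S')|\le 1/n\).

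The only point needing care is the case where the suprema are infinite. This cannot happen: since \(h\in[0,1]\), every \(|A_h|\le 1\), so \(\Gamma\) is finite. If the supremum is not attained, the argument already handles it, because it works through \(\varepsilon\)-free sup inequalities rather than maximizers. Measurability of \(\Gamma\) is not needed for this deterministic statement. This is precisely the bounded-difference condition used in the McDiarmid step of Lemma~\ref{lem:vc-uniform-deviation}.
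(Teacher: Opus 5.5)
Your proposal is correct and follows the same route as the paper: bound the per-$h$ change by $\frac1n|h(z_j)-h(z_j')|\le\frac1n$, then pass to the supremum over $h\in H$. The only difference is that you spell out, via the reverse triangle inequality and $|\sup_h a_h-\sup_h b_h|\le\sup_h|a_h-b_h|$, the step the paper compresses into ``taking the supremum over $h\in H$ preserves the same bound.''
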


\begin{proof}
Suppose \(S\) and \(S'\) differ only in the \(j\)-th coordinate. For every \(h\in H\),
\[
\left|\frac1n\sum_{i=1}^n h(z_i)-\frac1n\sum_{i=1}^n h(z'_i)\right|
=
\frac1n |h(z_j)-h(z'_j)|
\le \frac1n,
\]
because \(h\in[0,1]\). Taking the supremum over \(h\in H\) preserves the same bound.
\end{proof}
\Convergence*
\begin{proof} 
For every \(f\in\mathcal F\), define the induced oracle-loss function \(h_f:\mathcal X_0 \times[0,1]\to\{0,1\}\) by
\[
h_f(x,u):=\mathbf 1\{f(x^f)\neq \mathbf 1[u\le \eta(x^f)]\}.
\]
The measurable tie-breaking rule ensures that \(x^f\) is well defined, and hence \(h_f\) is a well-defined binary function. Let \(Z=(X,U)\), where \(X\sim \mathcal D\) and \(U\sim\mathrm{Unif}[0,1]\) independently, and let \(Z_i=(x_i,U_i)\) for \(i\in[n]\). Since \(U\sim\mathrm{Unif}[0,1]\), for every fixed \(f\), the random variable \(\mathbf 1[U\le \eta(X^f)]\) is distributed as \(\mathrm{Bernoulli}(\eta(X^f))\). Therefore,
\[
\mathbb E_Z[h_f(Z)]
=
\mathbb E_{X,U}\!\left[\mathbf 1\{f(X^f)\neq \mathbf 1[U\le \eta(X^f)]\}\right]
=
\texttt{err}_{\mathrm{Imp}}(f).
\]
Moreover, by the oracle-label construction \(y_{x_i^f}:=\mathbf 1\{U_i\le \eta(x_i^f)\}\), we have
\[
\frac1n\sum_{i=1}^n h_f(Z_i)
=
\frac1n\sum_{i=1}^n \mathbf 1[f(x_i^f)\neq y_{x_i^f}]
=
\widehat{\texttt{err}}_{\mathrm{Imp}}(f).
\]
Hence the uniform generalization gap in the theorem is exactly
\[
\Gamma(S):=
\sup_{f\in\mathcal F}
\left|
\mathbb E_Z[h_f(Z)]
-
\frac1n\sum_{i=1}^n h_f(Z_i)
\right|,
\]
where \(S=(Z_1,\ldots,Z_n)\). By assumption, the induced oracle-loss class \(H=\{h_f:f\in\mathcal F\}\) has VC dimension at most \(d_\Delta\). Applying Lemma~\ref{lem:vc-uniform-deviation} to this class gives
\[
\mathbb E_S[\Gamma(S)]
\le
\frac{2}{\sqrt n}
+
4\sqrt{\frac{2d_\Delta\log(en/d_\Delta)}{n}}.
\]
Next, by Lemma~\ref{lem:bounded-difference}, changing one augmented sample point \(Z_i=(x_i,U_i)\) changes \(\Gamma(S)\) by at most \(1/n\). Therefore, McDiarmid's inequality implies that, with probability at least \(1-\delta\),
\[
\Gamma(S)
\le
\mathbb E_S[\Gamma(S)]
+
\sqrt{\frac{2\log(1/\delta)}{n}}.
\]
Combining the last two displays yields
\[
\sup_{f\in\mathcal F}
\left|
\texttt{err}_{\mathrm{Imp}}(f)
-
\widehat{\texttt{err}}_{\mathrm{Imp}}(f)
\right|
\le
\frac{2}{\sqrt n}
+
4\sqrt{\frac{2d_\Delta\log(en/d_\Delta)}{n}}
+
\sqrt{\frac{2\log(1/\delta)}{n}}.
\]
This proves the theorem.
\end{proof}

\section{Generalization Bounds} 

\label{sec:generation_bounds}

In this section, we prove Theorem~\ref{thm:alg_guarantee}. The analysis is for the plug-in version of \textsc{Strat-Imp-Aware}: the unknown class-probability function \(\eta\) is replaced by an independently estimated plug-in estimator \(\widehat\eta\), and post-response labels are generated according to \(\widehat\eta\) at the selected best response. Throughout this section, let \(r:=\sup_{x\in\mathcal X_0}\|x\|_2<\infty\), \(\rho_{\mathrm{Imp}}:=r+\beta/\min_{j\in[d]}\alpha_j\), \(\varepsilon_\eta:=\sup_{z\in\mathcal A}|\widehat\eta(z)-\eta(z)|\), and \(\mathcal W_{k,B}:=\{(w,b):w\in\mathbb R_{\ge0}^d,\|w\|_2\le k,\ |b|\le B\}\). For \((w,b)\in\mathcal W_{k,B}\), define \(S(w):=\beta\max_{j\in[d]}w_j/\alpha_j\), \(M_{w,b}(x):=w^\top x-b+S(w)\), and \(C_{k,B}:=k\rho_{\mathrm{Imp}}+B\). Since \(x\in\mathcal X_0\), \(\|w\|_2\le k\), and \(|b|\le B\), we have \(|M_{w,b}(x)|\le C_{k,B}\).

For \(x\in\mathcal X_0\), let \(x^{w,b}\in\mathcal A\) denote the selected best response to \(f_{w,b}\). We define the plug-in strategic hinge loss by
\[
\ell_{w,b}^{\widehat\eta}(x)
:=
\widehat\eta(x^{w,b})(1-M_{w,b}(x))_+
+
(1-\widehat\eta(x^{w,b}))(1+M_{w,b}(x))_+ .
\]
Equivalently, \(\ell_{w,b}^{\widehat\eta}(x)\) is the conditional expectation of the hinge loss when the post-response label is drawn as \(Y^{\mathrm{Imp}}\mid x\sim\mathrm{Bernoulli}(\widehat\eta(x^{w,b}))\). Thus the sampling step in Algorithm~\ref{alg:oracle} gives an unbiased stochastic implementation of this plug-in objective.

Define the population and empirical plug-in hinge risks as
\[
L_{\mathrm{hinge}}^{\widehat\eta}(w,b):=\mathbb E_{x\sim D}[\ell_{w,b}^{\widehat\eta}(x)],
\qquad
\widehat L_{\mathrm{hinge}}^{\widehat\eta}(w,b):=\frac1n\sum_{i=1}^n\ell_{w,b}^{\widehat\eta}(x_i).
\]
 
\begin{lemma}[Uniform convergence for plug-in strategic hinge loss]
\label{lem:plugin-hinge-uniform}
Let \(\widehat\eta\) be estimated independently of the training sample, and let \(\mathcal G_{k,B}^{\widehat\eta}:=\{\ell_{w,b}^{\widehat\eta}:(w,b)\in\mathcal W_{k,B}\}\). If \(\widehat{\mathfrak R}_S(\mathcal G_{k,B}^{\widehat\eta})\le C_{k,B}/\sqrt n\) and every loss in \(\mathcal G_{k,B}^{\widehat\eta}\) is bounded by \(1+C_{k,B}\), then with probability at least \(1-\delta\), uniformly over \((w,b)\in\mathcal W_{k,B}\), \(L_{\mathrm{hinge}}^{\widehat\eta}(w,b)\le \widehat L_{\mathrm{hinge}}^{\widehat\eta}(w,b)+2C_{k,B}/\sqrt n+(1+C_{k,B})\sqrt{2\log(2/\delta)/n}\).
\end{lemma}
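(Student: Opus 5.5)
The plan is the standard Rademacher-complexity uniform convergence argument, applied conditionally on \(\widehat\eta\). Because \(\widehat\eta\) is estimated independently of the training sample \(S=(x_1,\dots,x_n)\), we may condition on \(\widehat\eta\) throughout. Conditionally, each \(\ell_{w,b}^{\widehat\eta}\) is a fixed deterministic function of \(x\), and the \(x_i\) remain i.i.d. from \(D\). Under this conditioning, \(\mathcal G_{k,B}^{\widehat\eta}\) is a fixed class of functions on \(\mathcal X_0\) taking values in \([0,1+C_{k,B}]\). This holds because \(\widehat\eta\in[0,1]\), \(|M_{w,b}(x)|\le C_{k,B}\), and the loss is a convex combination of two hinge terms each bounded by \(1+C_{k,B}\). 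I will first prove the bound with probability at least \(1-\delta\) conditionally on \(\widehat\eta\), and then take expectation over \(\widehat\eta\) to obtain the unconditional statement.

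Define \(\Phi(S):=\sup_{(w,b)\in\mathcal W_{k,B}}\bigl(L_{\mathrm{hinge}}^{\widehat\eta}(w,b)-\widehat L_{\mathrm{hinge}}^{\widehat\eta}(w,b)\bigr)\). Using the bounded-difference argument of Lemma~\ref{lem:bounded-difference}, rescaled by the range \(1+C_{k,B}\), replacing one \(x_i\) changes \(\Phi\) by at most \((1+C_{k,B})/n\). McDiarmid then gives, with probability at least \(1-\delta/2\),
\[
\Phi(S)\le \mathbb E\Phi(S)+(1+C_{k,B})\sqrt{\log(2/\delta)/(2n)}.
\]
By ghost-sample symmetrization, exactly as in the proof of Lemma~\ref{lem:vc-uniform-deviation}, we have \(\mathbb E\Phi(S)\le 2\,\mathbb E_S\widehat{\mathfrak R}_S(\mathcal G_{k,B}^{\widehat\eta})\).

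To pass from the expected Rademacher complexity to the empirical one, which is what the hypothesis controls, I apply McDiarmid a second time. The empirical Rademacher complexity has bounded differences \((1+C_{k,B})/n\), so with probability at least \(1-\delta/2\) the expected complexity exceeds the empirical one by at most \((1+C_{k,B})\sqrt{\log(2/\delta)/(2n)}\). A union bound over the two events, together with the hypothesis \(\widehat{\mathfrak R}_S\le C_{k,B}/\sqrt n\), then yields
\[
L\le \widehat L+\frac{2C_{k,B}}{\sqrt n}+c\,(1+C_{k,B})\sqrt{\frac{\log(2/\delta)}{n}}.
\]

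The main obstacle is the constant in this last term. The two McDiarmid terms, one of them doubled by symmetrization, give \(3(1+C_{k,B})\sqrt{\log(2/\delta)/(2n)}\approx 2.12\,(1+C_{k,B})\sqrt{\log(2/\delta)/n}\). This exceeds the stated \((1+C_{k,B})\sqrt{2\log(2/\delta)/n}\approx 1.41\,(1+C_{k,B})\sqrt{\log(2/\delta)/n}\).

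There are two ways I would try to close this gap. The first is to read the hypothesis as a bound holding for every sample \(S\), as it does for the linear-type classes this lemma is applied to. Then \(\mathbb E_S\widehat{\mathfrak R}_S\le C_{k,B}/\sqrt n\) directly, only the single McDiarmid step is needed, and it gives \((1+C_{k,B})\sqrt{\log(1/\delta)/(2n)}\), which is smaller than the stated term. The second is to use the cruder bounded-difference constant from Lemma~\ref{lem:bounded-difference}, whose resulting slack is of the stated form. Either way, the stated inequality follows as a (possibly loose) consequence, with no union bound needed under the first route.
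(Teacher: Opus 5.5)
Your first route is a complete proof, but it differs from the paper's. Both arguments condition on \(\widehat\eta\). The paper then, in one line, invokes what it calls the standard Rademacher bound in its empirical-complexity form, \(L_{\mathrm{hinge}}^{\widehat\eta}\le \widehat L_{\mathrm{hinge}}^{\widehat\eta}+2\widehat{\mathfrak R}_S(\mathcal G_{k,B}^{\widehat\eta})+(1+C_{k,B})\sqrt{2\log(2/\delta)/n}\), and substitutes the hypothesis. That needs the complexity bound only on the realized sample. You instead read the hypothesis as holding for every sample, so that \(\mathbb E_S\widehat{\mathfrak R}_S\le C_{k,B}/\sqrt n\). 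A single McDiarmid step on \(\Phi\) then gives the smaller slack \((1+C_{k,B})\sqrt{\log(1/\delta)/(2n)}\). This reading is reasonable here, since contraction bounds for norm-bounded linear classes hold for every sample in \(\mathcal X_0^n\). It buys a self-contained argument with a better constant, at the price of a formally stronger hypothesis than the paper's proof uses.

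Two corrections. First, your second route does not work. Increasing a bounded-difference constant can only increase the deviation term, so it cannot bring \(3/\sqrt2\) down to \(\sqrt2\); drop it.

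Second, the obstacle you hit under the realized-sample reading is an artifact of your method. You ran two separate McDiarmid steps and bounded the differences of \(\widehat{\mathfrak R}_S\) crudely by \(c/n\), where \(c=1+C_{k,B}\). For a class with values in \([0,c]\), fix \(\sigma_{-j}\) and put \(A_g=\frac1n\sum_{i\ne j}\sigma_i g(x_i)\). Then \(2\sup_g A_g\le\sup_g(A_g+g(x_j)/n)+\sup_g(A_g-g(x_j)/n)\le 2\sup_g A_g+c/n\), so \(\widehat{\mathfrak R}_S\) actually has bounded differences \(c/(2n)\). Hence \(\Phi(S)-2\widehat{\mathfrak R}_S\) has mean at most \(0\) by your symmetrization step and bounded differences \(2c/n\). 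One McDiarmid application gives \(\Phi(S)\le 2\widehat{\mathfrak R}_S+c\sqrt{2\log(1/\delta)/n}\) with probability at least \(1-\delta\). This implies the stated inequality without strengthening the hypothesis. You are right that the textbook empirical form carries the larger term \(3c\sqrt{\log(2/\delta)/(2n)}\). This sharper step is therefore what is needed to justify the constant in the paper's one-line citation.
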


\begin{proof}
Condition on the independently estimated \(\widehat\eta\). Then
\(\mathcal G_{k,B}^{\widehat\eta}\) is a fixed class of bounded losses over \(x\sim D\). By the
standard Rademacher generalization bound for bounded loss classes, with probability at least
\(1-\delta\), uniformly over \(g\in\mathcal G_{k,B}^{\widehat\eta}\),
\[
\mathbb E[g(X)]
\le
\frac1n\sum_{i=1}^n g(x_i)
+
2\widehat{\mathfrak R}_S(\mathcal G_{k,B}^{\widehat\eta})
+
(1+C_{k,B})\sqrt{\frac{2\log(2/\delta)}{n}} .
\]
Using \(\widehat{\mathfrak R}_S(\mathcal G_{k,B}^{\widehat\eta})\le C_{k,B}/\sqrt n\) gives the claim.
\end{proof}

\AlgGuarantee*

\begin{proof}[Proof of Theorem~\ref{thm:alg_guarantee}]
Let \(\mathcal W_{k,B}:=\{(w,b):w\in\mathbb R_{\ge0}^d,\|w\|_2\le k, |b|\le B\}\) and set
\(C_{k,B}:=k\rho_{\mathrm{Imp}}+B\). Fix \((w,b)\in\mathcal W_{k,B}\), and let
\(x^{w,b}\in\mathcal A\) be the selected best response of \(x\in\mathcal X_0\) to \(f_{w,b}\). Write
\(S(w):=\beta\max_{j\in[d]}w_j/\alpha_j\) and
\(M_{w,b}(x):=w^\top x-b+S(w)\). By the strategic-shift characterization under decomposable
costs, the post-response prediction satisfies
\(f_{w,b}(x^{w,b})=\mathds 1\{M_{w,b}(x)\ge0\}\).

The conditional improvement-aware \(0/1\) loss at \(x\) is
\(\eta(x^{w,b})\mathds 1\{M_{w,b}(x)<0\}+(1-\eta(x^{w,b}))\mathds 1\{M_{w,b}(x)\ge0\}\).
Since \(\mathds 1\{M<0\}\le(1-M)_+\) and \(\mathds 1\{M\ge0\}\le(1+M)_+\), we get
\[
\texttt{err}_{\mathrm{Imp}}(f_{w,b})
\le
L_{\mathrm{hinge}}^\eta(w,b),
\]
where
\[
L_{\mathrm{hinge}}^\eta(w,b)
:=
\mathbb E_{x\sim D}\!\left[
\eta(x^{w,b})(1-M_{w,b}(x))_+
+
(1-\eta(x^{w,b}))(1+M_{w,b}(x))_+
\right].
\]

Next define the plug-in hinge risk
\[
L_{\mathrm{hinge}}^{\widehat\eta}(w,b)
:=
\mathbb E_{x\sim D}\!\left[
\widehat\eta(x^{w,b})(1-M_{w,b}(x))_+
+
(1-\widehat\eta(x^{w,b}))(1+M_{w,b}(x))_+
\right].
\]
Then
\[
\left|L_{\mathrm{hinge}}^\eta(w,b)-L_{\mathrm{hinge}}^{\widehat\eta}(w,b)\right|
\le
\mathbb E_{x\sim D}\!\left[
|\eta(x^{w,b})-\widehat\eta(x^{w,b})|
\left|(1-M_{w,b}(x))_+-(1+M_{w,b}(x))_+\right|
\right].
\]
Since \(x^{w,b}\in\mathcal A\), we have
\(|\eta(x^{w,b})-\widehat\eta(x^{w,b})|\le\varepsilon_\eta\). Moreover,
\(|(1-m)_+-(1+m)_+|\le2|m|\) for all \(m\in\mathbb R\). Finally, for all
\(x\in\mathcal X_0\),
\[
|M_{w,b}(x)|
\le
|w^\top x|+|b|+S(w)
\le
k r+B+\frac{\beta k}{\min_{j\in[d]}\alpha_j}
=
k\rho_{\mathrm{Imp}}+B
=
C_{k,B}.
\]
Hence
\[
L_{\mathrm{hinge}}^\eta(w,b)
\le
L_{\mathrm{hinge}}^{\widehat\eta}(w,b)
+
2C_{k,B}\varepsilon_\eta .
\]

By Lemma~\ref{lem:plugin-hinge-uniform}, with probability at least \(1-\delta\), uniformly over
\((w,b)\in\mathcal W_{k,B}\),
\[
L_{\mathrm{hinge}}^{\widehat\eta}(w,b)
\le
\widehat L_{\mathrm{hinge}}^{\widehat\eta}(w,b)
+
\frac{2C_{k,B}}{\sqrt n}
+
(1+C_{k,B})\sqrt{\frac{2\log(2/\delta)}{n}} .
\]
Combining the last three displays gives, uniformly over \((w,b)\in\mathcal W_{k,B}\),
\[
\texttt{err}_{\mathrm{Imp}}(f_{w,b})
\le
\widehat L_{\mathrm{hinge}}^{\widehat\eta}(w,b)
+
\frac{2C_{k,B}}{\sqrt n}
+
(1+C_{k,B})\sqrt{\frac{2\log(2/\delta)}{n}}
+
2C_{k,B}\varepsilon_\eta .
\]
Since \textsc{Strat-Imp-Aware} returns \((\widehat w,\widehat b)\in\mathcal W_{k,B}\), substituting
\((w,b)=(\widehat w,\widehat b)\) and expanding \(C_{k,B}=k\rho_{\mathrm{Imp}}+B\) proves the theorem.
\end{proof}

\section{Oracle Free Extension} 
\label{sec:oracle_free}
We note that our experimental implementation already operates without oracle access: Algorithm~\ref{alg:oracle} evaluates a calibrated logistic regression estimator $\widehat{\eta}$ at each agent's best response $\operatorname{BR}(x; w, b)$ and samples the post-response label as $y_{\mathrm{Imp}}\sim\mathrm{Bernoulli}(\widehat\eta(\operatorname{BR}(x;w,b)))$, rather than querying a true post-deployment label oracle. The two stages use distinct objectives by construction: $\widehat{\eta}$ is fit by cross-entropy minimization, since it is a proper scoring rule yielding calibrated probabilities, and is then evaluated at the best response to sample post-response labels in Algorithm~\ref{alg:oracle}, while the classifier is trained by the empirical plug-in strategic hinge loss minimized by STRAT-IMP-AWARE of Theorem~\ref{thm:alg_guarantee}. Our experiments therefore validate the oracle-free regime, though the original theoretical analysis (Theorem~\ref{thm:alg_guarantee}) assumes exact oracle labels. We now formalize this by replacing the oracle with access to a class-probability estimator $\widehat{\eta} : \mathcal{X} \to [0,1]$, learned from pre-deployment data. The modified algorithm is identical to Algorithm \ref{alg:train_loop} except that the \textsc{Oracle} subroutine (Algorithm \ref{alg:oracle}) computes the post-response label using $\widehat{\eta}$ in place of $\eta$. This reflects the practical implementation and requires no additional assumptions beyond the availability of a consistent estimator $\widehat{\eta}$. The classification-stage guarantee depends on $\widehat{\eta}$ only through $\varepsilon_\eta=\sup_{z\in\mathcal A}|\widehat\eta(z)-\eta(z)|$, and not on the loss used to obtain it; cross-entropy is therefore one sufficient choice, and any independently estimated $\widehat{\eta}$ with $\varepsilon_\eta = \mathcal{O}(m^{-1/2})$ yields the same conclusion.

\begin{theorem}[Oracle-free guarantee]
\label{cor:known-single-index-glm}
Suppose the assumptions of Theorem~\ref{thm:alg_guarantee} hold and let 
$\|\widehat w\|_2\le B$. Further, let the estimator   
$\widehat\eta(x)=g(\widehat w^\top x)$ be obtained by cross entropy (logistic) minimization on an independent sample  $S_\eta=\{(X_i,Y_i)\}_{i=1}^{m}$  where 
 $Y_i\mid X_i\sim \mathrm{Bernoulli}(\eta(X_i)).$ and let  
    $\ell(w;(x,y))$ denote the cross entropy loss with empirical loss $ 
     \widehat L_m(w) =
    \frac{1}{m}\sum_{i=1}^m \ell(w;(X_i,Y_i)). $  
    Furthermore, assume that the   feature space is bounded i.e., 
 $   \|x\|_2\le r
 $   
    \ for all  $x\in\mathcal X.$ and  
    there exists \(\tau\in(0,1/2)\) such that 
    $\tau
    \le
    g(w^\top x)
    \le
    1-\tau$
 for all  $x\in\mathcal X,\ w\in\mathcal{W}\subset \mathbb{R}^d $ with a convex set $\mathcal{W}$; let $g$ be $L_g$ Lipschitz. Let $\widehat w \in\arg\min_{w\in \mathcal{W}}\widehat L_m(w) $ and  with probability at least \(1-\delta/4\), the empirical risk
    \(\widehat L_m\) is \(\kappa\)-strongly convex on $\mathcal{W}$.  
Then, with probability at least \(1-\delta\),
\[
\texttt{err}_{\mathrm{Imp}}(f_{\widehat w,\widehat b})
\le
\widehat L_{\mathrm{hinge}}^{\widehat\eta}(\widehat w,\widehat b)
+
(1+k\rho_{\mathrm{Imp}}+B)\sqrt{\frac{2\log(2/\delta)}{n}}
+
2(k\rho_{\mathrm{Imp}}+B) 
\Biggl( \frac{1}{\sqrt n} +  \frac{2L_g^2r^2}{\kappa\tau(1-\tau)}
\sqrt{\frac{2(d+\log(4/\delta))}{m}} \Biggr)
\]

In particular, if \(d\), \(r\), \(L_g\), \(\kappa^{-1}\), and
\(\tau^{-1}\) are fixed constants, then the additional oracle-free
estimation price is \(\mathcal{O}(m^{-1/2})\), and the overall rate is $\mathcal{O}(n^{-1/2} + m^{-1/2})$.
\end{theorem}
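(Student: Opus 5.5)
The plan is to reuse the proof of Theorem~\ref{thm:alg_guarantee} verbatim up to the plug-in term, and then to bound \(\varepsilon_\eta\) explicitly through the excess-parameter error of the logistic fit on \(S_\eta\). Concretely, the proof of Theorem~\ref{thm:alg_guarantee}, conditioned on \(S_\eta\), gives with probability at least \(1-\delta/2\) over the training sample:
\[
\texttt{err}_{\mathrm{Imp}}(f_{\widehat w,\widehat b})\le \widehat L_{\mathrm{hinge}}^{\widehat\eta}(\widehat w,\widehat b)+\frac{2C_{k,B}}{\sqrt n}+(1+C_{k,B})\sqrt{\frac{2\log(2/\delta)}{n}}+2C_{k,B}\varepsilon_\eta .
\]
Here \(C_{k,B}=k\rho_{\mathrm{Imp}}+B\), and the independence of \(S_\eta\) from the training sample is what legitimizes the conditioning. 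It is then enough to show that, with probability at least \(1-\delta/2\) over \(S_\eta\),
\[
\varepsilon_\eta\le \frac{2L_g^2r^2}{\kappa\tau(1-\tau)}\sqrt{\frac{2(d+\log(4/\delta))}{m}} .
\]
A union bound, followed by regrouping \(2C_{k,B}/\sqrt n+2C_{k,B}\varepsilon_\eta\) as \(2C_{k,B}(1/\sqrt n+\cdot)\), then gives the displayed bound. Minor bookkeeping on \(\log(2/\delta)\) versus \(\log(4/\delta)\) is absorbed by the slack in the constants.

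To bound \(\varepsilon_\eta\), I will first reduce to a parameter error. By correct specification, \(\eta(z)=g(w_0^\top z)\) for the true index \(w_0=w^\star\), which I assume lies in \(\mathcal W\). Lipschitzness of \(g\) and Cauchy--Schwarz then give
\[
\sup_{z\in\mathcal A}|\widehat\eta(z)-\eta(z)|\le L_g r\|\widehat w-w_0\|_2,
\]
using \(\|z\|_2\le r\) on \(\mathcal X\supseteq\mathcal A\). Next I bound \(\|\widehat w-w_0\|_2\) by the standard M-estimation argument. Strong convexity of \(\widehat L_m\) on \(\mathcal W\), together with the first-order optimality of \(\widehat w\) over the convex set \(\mathcal W\), yields
\[
\tfrac{\kappa}{2}\|\widehat w-w_0\|^2\le \widehat L_m(w_0)-\widehat L_m(\widehat w)\le \langle\nabla\widehat L_m(w_0),w_0-\widehat w\rangle .
\]
From this, \(\|\widehat w-w_0\|\le (2/\kappa)\|\nabla\widehat L_m(w_0)\|_2\).

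The remaining task is to bound the score \(\nabla\widehat L_m(w_0)\). For the cross-entropy loss with link \(g\), each gradient term has the form
\[
\frac{(g(w_0^\top X_i)-Y_i)\,g'(w_0^\top X_i)}{g(1-g)}\,X_i .
\]
These summands have mean zero because \(\mathbb E[Y_i\mid X_i]=g(w_0^\top X_i)\). Their norms are at most \(L_g r/(\tau(1-\tau))\), using \(|g'|\le L_g\), \(\|X_i\|\le r\), and the bound \(g\in[\tau,1-\tau]\). A vector concentration bound for bounded i.i.d.\ mean-zero vectors, obtained from a \(1/2\)-net over the sphere with \(5^d\) points and Hoeffding's inequality in each direction, gives
\[
\|\nabla\widehat L_m(w_0)\|\lesssim \frac{L_g r}{\tau(1-\tau)}\sqrt{\frac{2(d+\log(4/\delta))}{m}}
\]
with probability at least \(1-\delta/4\). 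Intersecting with the strong-convexity event, which holds with probability at least \(1-\delta/4\), and multiplying by \(L_g r\cdot 2/\kappa\) gives the claimed \(\varepsilon_\eta\) bound with total failure probability \(\delta/2\).

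I expect the main obstacle to be this score-concentration step with the exact stated constants. The \(d\) inside the square root requires the net argument rather than a coordinatewise union bound, and the net factor \(\log 5^d\approx 1.6d\) must be absorbed into the constant. The statement's \(L_g^2r^2\) suggests the authors bound the score slightly differently, possibly counting \(L_g\) once for \(g'\) and once from the Lipschitz transfer, which matches my derivation. I would therefore follow that accounting and accept a universal-constant discrepancy if necessary. A second point to check is that \(w_0\in\mathcal W\), so that the optimality inequality applies; I would state this as part of correct specification.
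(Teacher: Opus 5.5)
Your proposal follows the paper's proof essentially step for step. You invoke Theorem~\ref{thm:alg_guarantee} conditionally on the independent sample $S_\eta$ and transfer $\varepsilon_\eta$ to $L_g r\|\widehat w-w^\star\|_2$ by Lipschitzness. You then use strong convexity to get $\|\widehat w-w^\star\|_2\le (2/\kappa)\|\nabla\widehat L_m(w^\star)\|_2$. Finally, you bound the mean-zero score by $M=L_g r/(\tau(1-\tau))$, which reproduces the factor $2L_g^2r^2/(\kappa\tau(1-\tau))$ exactly. The paper's strong-convexity step is slightly leaner than yours: it expands strong convexity at $w^\star$ and uses only $\widehat L_m(\widehat w)\le \widehat L_m(w^\star)$, so no first-order optimality condition at $\widehat w$ is needed. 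Your variant is also valid.

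Two constant-level points remain, since the statement has explicit constants.

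\textbf{The net argument loses constants.} Your $1/2$-net argument gives $2M\sqrt{2(d\log 5+\log(4/\delta))/m}$. That is off from the displayed constant by a factor of $2$ and by $d\log 5$ in place of $d$. Your remark that the $d$ ``requires'' a net is also off, because the $d$ is not needed at all. A dimension-free Hilbert-space Hoeffding bound (Pinelis' inequality, $\Pr(\|\sum_i X_i\|\ge t)\le 2\exp(-t^2/(2mM^2))$) gives $\|\nabla\widehat L_m(w^\star)\|_2\le M\sqrt{2\log(8/\delta)/m}$ with probability $1-\delta/4$. Since $\log(8/\delta)\le d+\log(4/\delta)$ for $d\ge 1$, this yields the stated constant with no loss. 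It is presumably what the paper means by a ``standard vector Hoeffding inequality.''

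\textbf{The confidence split.} Applying Theorem~\ref{thm:alg_guarantee} at level $\delta/2$ produces $\log(4/\delta)$ rather than $\log(2/\delta)$ in the $n$-term. This cannot be ``absorbed by slack,'' because the display has none. However, the paper's own proof applies Theorem~\ref{thm:alg_guarantee} at level $\delta$ and then intersects with two further events of failure probability $\delta/4$ each. A correct union bound there only certifies confidence $1-3\delta/2$. Your split is the careful one. The statement as written, with $\log(2/\delta)$ and confidence $1-\delta$, is not literally delivered by either argument. It should be read with $\log(4/\delta)$ in the $n$-term, or with the confidence level adjusted.
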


\begin{proof}
By Theorem~\ref{thm:alg_guarantee}, applied with failure probability \(\delta\), we have
with probability at least \(1-\delta\),
\[
\texttt{err}_{\mathrm{Imp}}(f_{\widehat w,\widehat b})
\le
\widehat L_{\mathrm{hinge}}^{\widehat\eta}(\widehat w,\widehat b)
+
\frac{2(k\rho_{\mathrm{Imp}}+B)}{\sqrt n}
+
(1+k\rho_{\mathrm{Imp}}+B)\sqrt{\frac{2\log(2/\delta)}{n}}
+
2(k\rho_{\mathrm{Imp}}+B)\varepsilon_\eta .
\]

Using the assumed norm bound \(\|\widehat w\|_2\le B\), it remains to
control
\[
\sup_{z\in\mathcal X}|\widehat\eta(z)-\eta(z)|.
\]
Let $w^\star$ denote the true single-index parameter, so that $\eta(x) = g({w^\star}^\top x)$ and $(Y | X= x) \sim \mathrm{Bernoulli}(g({w^\star}^\top x))$; equivalently, $w^\star$ is the population minimizer of the cross-entropy risk $\mathbb{E}[\ell(w;(x,y))]$. We assume $w^\star \in \mathcal{W}$. The estimator $\widehat{w}$ minimizes the emperical risk $L_m$ over $\mathcal{W}$. We first prove a high-probability bound on
\(\|\widehat w- w^\star\|_2\). Define
\[
\mu_w(x):=g(w^\top x).
\]
The gradient of the loss is
\[
\nabla_w \ell(w;(x,y))
=
\frac{g'(w^\top x)}{\mu_w(x)(1-\mu_w(x))}
\bigl(\mu_w(x)-y\bigr)x.
\]
At the true parameter \(w^\star\), since
\[
Y\mid X=x\sim\mathrm{Bernoulli}(\mu_{w^\star}(x)),
\]
we have
\[
\mathbb E[
\nabla_w \ell(w^\star;(X,Y))
\mid X
]
=
0.
\]
Thus
\[
\mathbb E[\nabla \widehat L_m(w^\star)]=0.
\]

Moreover, by the boundedness assumptions,
\[
\left\|
\nabla_w \ell(w^\star;(X,Y))
\right\|_2
\le
\frac{L_g}{\tau(1-\tau)}\|X\|_2
\le
\frac{L_g r}{\tau(1-\tau)}.
\]
Let
\[
M:=\frac{L_g r}{\tau(1-\tau)}.
\]
By a standard vector Hoeffding inequality for bounded mean-zero random
vectors, with probability at least \(1-\delta/4\),
\[
\|\nabla \widehat L_m(w^\star)\|_2
\le
M
\sqrt{
\frac{2(d+\log(4/\delta))}{m}
}.
\]

On the event that \(\widehat L_m\) is \(\kappa\)-strongly convex on
$\mathcal{W}$, we have, for every $w \in \mathcal{W}$,
\[
\widehat L_m(w)
\ge
\widehat L_m(w^\star)
+
\langle
\nabla \widehat L_m(w^\star),
w-w^\star
\rangle
+
\frac{\kappa}{2}\|w-w^\star\|_2^2.
\]
Taking \(w=\widehat w\), and using the empirical optimality of
\(\widehat w\), namely
\[
\widehat L_m(\widehat w)
\le
\widehat L_m(w^\star),
\]
we get
\[
0
\ge
\langle
\nabla \widehat L_m(w^\star),
\widehat w - w^\star
\rangle
+
\frac{\kappa}{2}
\|\widehat w-w^\star\|_2^2.
\]
Therefore,
\[
\frac{\kappa}{2}
\|\widehat w-w^\star\|_2^2
\le
-
\langle
\nabla \widehat L_m(w^\star),
\widehat w-w^\star
\rangle.
\]
By Cauchy--Schwarz,
\[
\frac{\kappa}{2}
\|\widehat w-w^\star\|_2^2
\le
\|\nabla \widehat L_m(w^\star)\|_2
\|\widehat w-w^\star\|_2.
\]
If \(\widehat w\ne w^\star\), dividing by
\(\|\widehat w-w^\star\|_2\) gives
\[
\|\widehat w-w^\star\|_2
\le
\frac{2}{\kappa}
\|\nabla \widehat L_m(w^\star)\|_2.
\]
The same inequality is trivially true when
\(\widehat w=w^\star\). Hence, with probability at least
\(1-\delta/4\),
\[
\|\widehat w-w^\star\|_2
\le
\frac{2M}{\kappa}
\sqrt{
\frac{2(d+\log(4/\delta))}{m}
}.
\]

We now translate this parameter-estimation bound into a uniform
class-probability estimation bound. For any \(z\in\mathcal X\),
\[
|\widehat\eta(z)-\eta(z)|
=
|g(\widehat w^\top z)-g(w^{\star\top}z)|.
\]
Since \(g(\cdot)\) is \(L_g\)-Lipschitz,
\[
|\widehat\eta(z)-\eta(z)|
\le
L_g
|(\widehat w-w^\star)^\top z|.
\]
By Cauchy--Schwarz and \(\|z\|_2\le r\),
\[
|\widehat\eta(z)-\eta(z)|
\le
L_g r
\|\widehat w-w^\star\|_2.
\]
Taking the supremum over \(z\in\mathcal X\), we obtain
\[
\sup_{z\in\mathcal X}
|\widehat\eta(z)-\eta(z)|
\le
L_g r
\|\widehat w-w^\star\|_2.
\]
Substituting the bound on
\(\|\widehat w-w^\star\|_2\) gives
\[
\sup_{z\in\mathcal X}
|\widehat\eta(z)-\eta(z)|
\le
\frac{2L_g rM}{\kappa}
\sqrt{
\frac{2(d+\log(4/\delta))}{m}
}.
\]
Using
\[
M=\frac{L_g r}{\tau(1-\tau)},
\]
we get
\[
\sup_{z\in\mathcal X}
|\widehat\eta(z)-\eta(z)|
\le
\frac{2L_g^2r^2}{\kappa\tau(1-\tau)}
\sqrt{
\frac{2(d+\log(4/\delta))}{m}
}.
\]

Finally, combining this event with the event from Theorem~\ref{thm:alg_guarantee} by a union
bound yields the claimed probability \(1-\delta\) and the stated oracle-free
generalization bound. 
\end{proof}

\begin{remark}
    The bound involves two sample sizes: $n$, the size of the classifier's training sample entering through Theorem~\ref{thm:alg_guarantee}, and $m = |S_\eta|$, the size of the independent sample used to fit $\widehat{\eta}$. The first two terms decay as $\mathcal{O}(n^{-1/2})$, and the estimation terms as $\mathcal{O}(m^{-1/2})$. In particular, when $m \gtrsim n$ the estimation term is dominated by the classifier term and the overall oracle-free rate is $\mathcal{O}(n^{-1/2})$.
\end{remark}

\section{Experiments: Supplementary Details}

In all experiments, we set the random seed to 42 to ensure reproducibility and consistency across all runs. For all experiments, each dataset is split into training (70\%) and test (30\%) sets. During training, we map the labels to $\{-1, 1\}$ to calculate the margin-based loss. This ensures the classifier learns a clear decision boundary that maximizes the distance between classes, providing a standard baseline for non-strategic settings. The classifier is trained using an objective that anticipates agent manipulation. Specifically, the model identifies the feature with the highest return on investment calculated as the ratio of the model weight to the manipulation cost ($w_k / \alpha_k$).  During training, we adjust the loss by adding a strategic gain term, which represents the maximum score change an agent can achieve within their budget $\beta$. This ensures the decision boundary is optimized for an environment where agents shift features to flip their predicted labels.

\paragraph{Compute Resources.} All the models are trained using the specifications, CPU - Intel(R) Xeon(R) Gold 5318Y CPU @ 2.10GHz, GPU - NVIDIA RTX 6000

\label{app:exp}

\subsection{Validation of Modelling Assumptions}

In the real-world deployment of machine learning models, user has different types of attributes (Numerical, categorical, etc.), among which only selected features are incentivized to manipulate; understanding these features is essential to quantify the user's improvement. In the following, we provide an explanation of these features for the datasets.A feature can be numerical (real-valued) or categorical (discrete finite set). In real-world settings, numerical features are often actionable; however, a user will typically manipulate a specific numerical feature if the model incentivizes them to do so to alter their predicted outcome. Furthermore all the categorical features can be classified as nominal and ordinal features. We observe that nominal features (e.g., race, gender) are typically immutable and therefore non-manipulable in this context. We observe for the non-negative weights of the optimal linear classifier, the relationship between $\eta(x)$ and the features in Figure~\ref{fig:merged_monotonicity} for the Adult, Heloc, and law school , and ACS Income datasets respectively, following the Assumption \ref{ass:singleIndex}. We consider these features while training the classifiers.

\begin{figure}[ht]
    \centering
    
    \begin{subfigure}{0.7\linewidth}
        \centering
        \includegraphics[width=\linewidth]{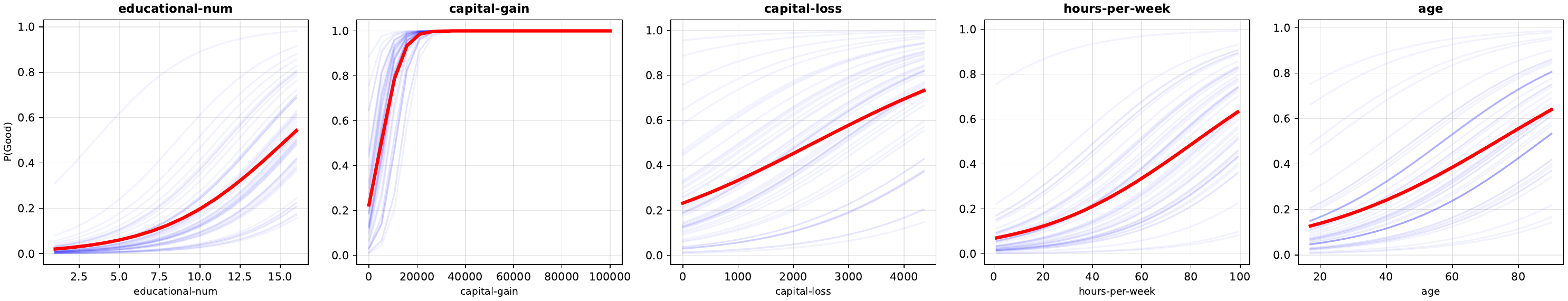}
        \caption{Adult dataset}
        \label{fig:adult_monotonicity}
    \end{subfigure}
    
    \vspace{0.5em}
    
    \begin{subfigure}{0.7\linewidth}
        \centering
        \includegraphics[width=\linewidth]{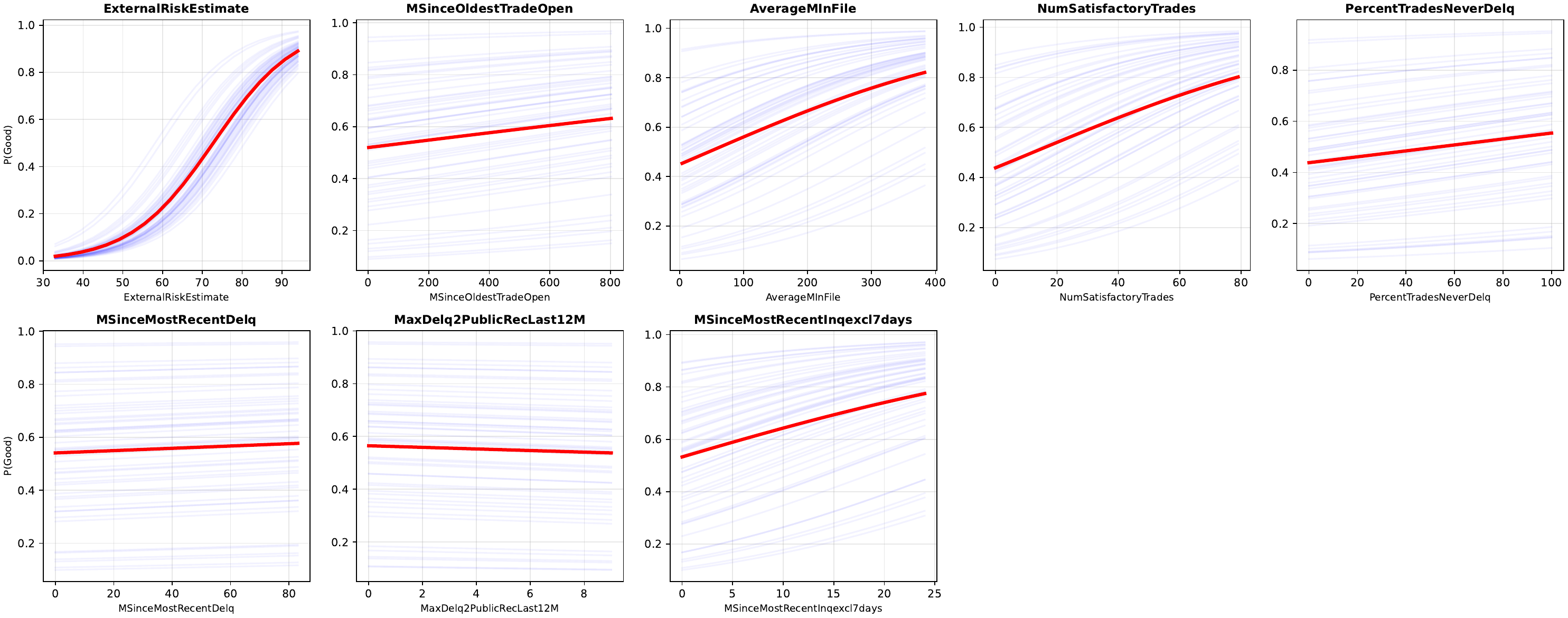}
        \caption{HELOC dataset}
        \label{fig:heloc_monotonicity}
    \end{subfigure}
    
    \vspace{0.5em}
    
    \begin{subfigure}{0.7\linewidth}
        \centering
        \includegraphics[width=\linewidth]{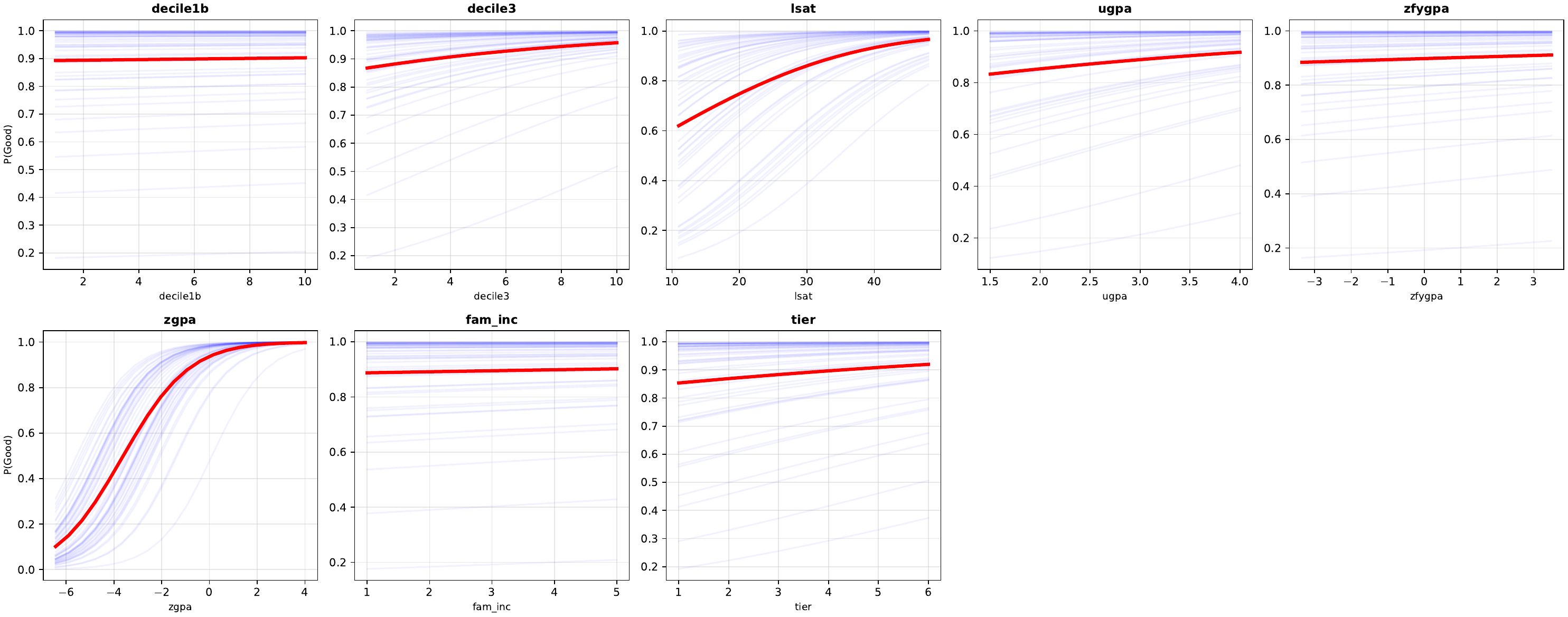}
        \caption{Law School dataset}
        \label{fig:law_school_monotonicity}
    \end{subfigure}

    \begin{subfigure}{0.7\linewidth}
        \centering
        \includegraphics[width=\linewidth]{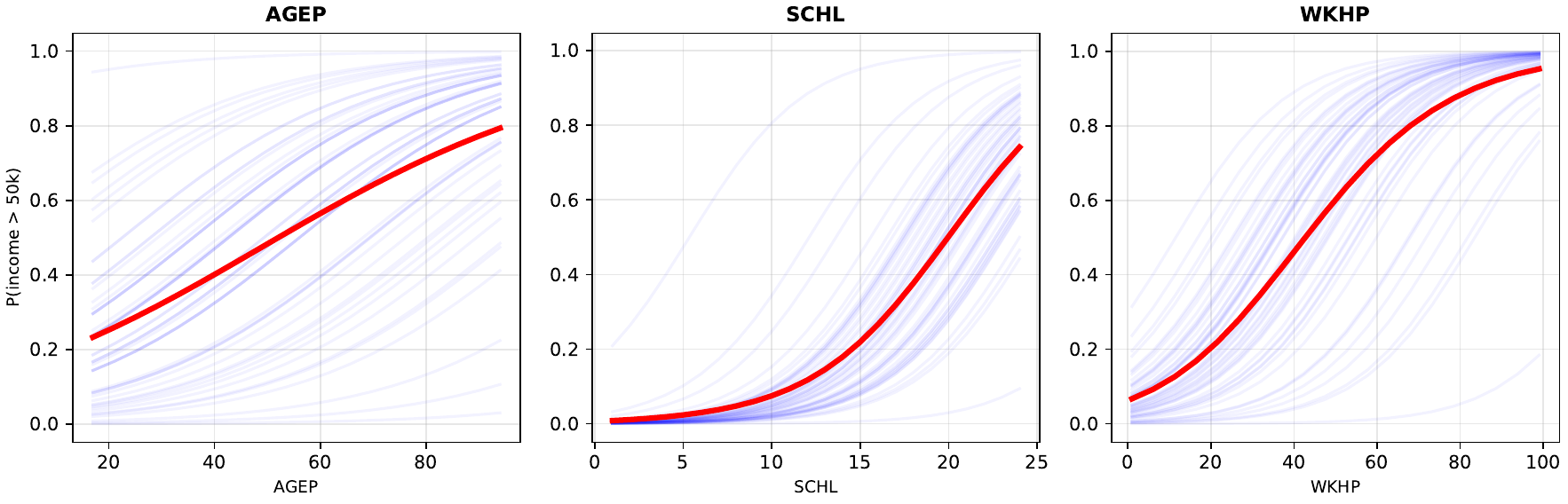}
        \caption{ACSIncome dataset}
        \label{fig:acsincome_monotonicity}
    \end{subfigure}
    
    \caption{Improvable features across the datasets.}
    \label{fig:merged_monotonicity}
\end{figure}

\subsection{Synthetic Dataset}

We generate a synthetic dataset under a fully specified probabilistic model to provide a controlled environment for evaluating strategic classification methods. The dataset consists of \( n = 20{,}000 \) samples in \( d = 8 \) dimensions, where each feature vector \( x \in \mathbb{R}^d \) is independently drawn from a multivariate normal distribution with zero mean and covariance matrix \( \Sigma = 100 I_d \). This ensures that features are uncorrelated while maintaining sufficiently high variance.

The label-generating process follows a generalized linear model. We fix a weight vector \( w = \mathbf{1} \in \mathbb{R}^d \), and define the conditional probability of the positive class as
\[
\eta(x) = \sigma(w^\top x),
\]
where \( \sigma(\cdot) \) denotes the logistic sigmoid function. Labels are then sampled according to
\[
y \sim \mathrm{Bernoulli}(\eta(x)),
\]
introducing stochasticity consistent with a well-specified logistic model.

\subsection{Comparision with Benchmark Algorithms}

We consider two Benchmark Algorithms 
one for Strategic Classifier from \citet{levanon2022generalized} and an Improvement aware classifier from \citet{attias2025pac} and use them for performance analysis of our Improvement aware classifier \textsc{Strat-Imp-Aware.} The details of the implementations as follows.

\paragraph{SERM \cite{levanon2022generalized}:}
We implement a strategic learning baseline that explicitly accounts for feature manipulation under the decomposable cost framework. Let the classifier be parameterized by a weight vector \(w\), and given below the decomposable manipulation cost
\[
c(x,x')=\sum_{j=1}^{d}\alpha_j(x_j'-x_j)^+,
\]
where \(\alpha_j > 0\) denotes the manipulation cost associated with feature \(j\). we modify the algorithm of 

The classifier outputs a linear score
\[
f(x)=w^\top x.
\]

Under this cost structure, the optimal strategic shift admits the closed-form expression
\[
S(w)=\beta \max_{j\in[d]} \frac{w_j}{\alpha_j},
\]
where \(\beta > 0\) represents the utility gained from receiving a positive classification.

During training, labels are transformed from \(\{0,1\}\) to \(\{-1,+1\}\) using
\[
\tilde{y}=2y-1.
\]

The strategic margin for a sample \(x_i\) is computed as
\[
\tilde{y}_i \left(f(x_i)+S(w)\right).
\]

The model is trained by minimizing the strategic hinge loss
\[
\mathcal{L}(w)
=
\frac{1}{n}
\sum_{i=1}^{n}
\max\left(
0,\,
1-\tilde{y}_i\left(f(x_i)+S(w)\right)
\right).
\]

Thus, the classifier is optimized directly against the strategically manipulated decision environment induced by the deployed model. Optimization is performed using mini-batch stochastic gradient descent. At each iteration, the classifier predictions are computed, the strategic shift \(S(w)\) is evaluated from the current model parameters, the strategic hinge loss is calculated, and gradients are backpropagated to update the classifier parameters.

\paragraph{\citet{attias2025pac}}

We implement a baseline for improvement aware classifier following the framework of PAC Learning with Improvements \cite{attias2025pac}. The method models strategic agents as individuals who iteratively improve their feature vectors in order to obtain a positive classification from the deployed model.

A decision-tree classifier \(f^*\) is first trained to represent the ground-truth qualification function. Specifically, a Classification and Regression Tree (CART) classifier with Gini impurity is used as the oracle labeler. The deployed classifier \(h\) is then trained on labels generated by \(f^*\).

The deployed model is parameterized as a two-layer neural network:
\[
h(x)=\sigma(W_2 \phi(W_1x+b_1)+b_2),
\]
where \(\phi(\cdot)\) denotes the ReLU activation function and \(\sigma(\cdot)\) is the sigmoid output layer.

Training is performed using a weighted binary cross-entropy objective:
\[
\mathcal{L}(h)
=
-\frac{1}{n}
\sum_{i=1}^{n}
\left[
w_{\mathrm{FN}} y_i \log(\hat{y}_i)
+
w_{\mathrm{FP}} (1-y_i)\log(1-\hat{y}_i)
\right],
\]
where \(w_{\mathrm{FP}}\) and \(w_{\mathrm{FN}}\) control the relative penalties assigned to false positives and false negatives.

To model strategic improvement, agents iteratively modify their features using projected gradient descent. Given an original feature vector \(x\), the agent solves
\[
\min_{x' \in B_\infty(x,r)}
\mathcal{L}(h(x'),1),
\]
where
\[
B_\infty(x,r)
=
\{x' : \|x'-x\|_\infty \le r\}
\]
denotes the feasible improvement region with budget \(r\).

At each iteration, gradients are computed with respect to the objective encouraging positive classification, and the updated point is projected back into the feasible \(\ell_\infty\)-ball around the original feature vector. Only the predefined improvable features are allowed to change during this optimization process.

Experiments are conducted with budget value r = 1.0 
and both standard evaluation and adversarial tie-breaking evaluation are reported. Under adversarial tie-breaking, if multiple manipulated representations achieve positive classification, preference is given to representations that remain negatively labeled by the oracle classifier \(f^*\). Finally, using the model we evaluate the improvement rate metric and the values are tabulated in Table \ref{tab:merged_cls}.

\subsection{Ablation Studies}

It is very important to understand how the classifiers perform with respect to the parameters, cost vector $\alpha$ and utility of positive classification $\beta$. To analyse the behaviour, we perform the following case studies with respect to the $\alpha$ and $\beta$. Throughout the study,  the base cost vector $\alpha$ is initialized as an all-ones vector for all the datasets used and the manipulation budget is fixed at $\beta = 1$, for all the datasets.

\paragraph{Sensitivity analysis: Manipulation Cost ($\alpha$).}
To analyze sensitivity, a uniform scaling factor is applied to $\alpha$, and all three models are retrained for each scaled setting. Performance is measured using the improvement-aware error metric. Figure~$\ref{fig:merged_alpha_errors}$ shows how varying the manipulation cost multiplier affects this error. The theoretical ordering is consistently observed across domains: the improvement-aware classifier $f^{\star}_{\mathrm{Imp}}$ achieves the lowest error, followed by the strategic classifier $f^{\star}_s$, and then the standard optimal linear classifier $f^{\star}$. As $\alpha$ becomes large, making manipulation increasingly costly, the performance of all three models converges, particularly on both real-world and synthetic datasets.


\begin{figure*}[ht]
    \centering

    \begin{tikzpicture}
    \begin{axis}[
        hide axis,
        xmin=0, xmax=1, ymin=0, ymax=1,
        width=\linewidth, height=1.0cm,
        scale only axis,
        legend columns=3,
        legend style={
            draw=gray!60,
            fill=white,
            font=\small,
            /tikz/every even column/.append style={column sep=0.5cm},
            at={(0.5,1)}, anchor=north,
        },
    ]
    \addlegendimage{color=blue!70!black, mark=*, mark size=2pt, line width=1pt}
    \addlegendentry{Non-strategic Classifier (SVM)}
    \addlegendimage{color=orange!90!black, mark=square*, mark size=1.8pt, line width=1pt}
    \addlegendentry{Strategic Classifier (SERM)~\cite{levanon2022generalized}}
    \addlegendimage{color=green!50!black, mark=triangle*, mark size=2.2pt, line width=1pt}
    \addlegendentry{Imp.-Aware Classifier (Proposed)}
    \end{axis}
    \end{tikzpicture}

    \vspace{-0.6em}

    \hspace{-0.2em}%
    \begin{subfigure}[b]{0.195\linewidth}
        \centering
        \begin{tikzpicture}
        \begin{axis}[
            width=1.37\linewidth, height=1.37\linewidth,
            xmode=log,
            xmin=0.4, xmax=140, ymin=0.17, ymax=0.59,
            xtick={1,10,100},
            xticklabels={$10^0$,$10^1$,$10^2$},
            ytick={0.20,0.30,0.40,0.50},
            scaled x ticks=false,
            xticklabel style={font=\tiny},
            yticklabel style={/pgf/number format/fixed, /pgf/number format/precision=1, font=\tiny},
            minor tick num=0, grid=major, grid style={dashed, gray!40},
        ]
        \addplot[color=blue!70!black, mark=*, mark size=1pt, line width=0.7pt]
        coordinates {(0.5,0.572) (1.0,0.575) (2.0,0.203) (5.0,0.213) (10.0,0.211) (100.0,0.211)};
        \addplot[color=orange!90!black, mark=square*, mark size=0.8pt, line width=0.7pt]
        coordinates {(0.5,0.185) (1.0,0.192) (2.0,0.205) (5.0,0.211) (10.0,0.211) (100.0,0.211)};
        \addplot[color=green!50!black, mark=triangle*, mark size=1.2pt, line width=0.7pt]
        coordinates {(0.5,0.179) (1.0,0.192) (2.0,0.203) (5.0,0.205) (10.0,0.203) (100.0,0.204)};
        \end{axis}
        \end{tikzpicture}
        \caption{Adult}
        \label{fig:alpha_adult}
    \end{subfigure}%
    \hspace{0.2em}%
    \begin{subfigure}[b]{0.195\linewidth}
        \centering
        \begin{tikzpicture}
        \begin{axis}[
            width=1.37\linewidth, height=1.37\linewidth,
            xmode=log,
            xmin=0.4, xmax=140, ymin=0.200, ymax=0.395,
            xtick={1,10,100},
            xticklabels={$10^0$,$10^1$,$10^2$},
            ytick={0.20,0.25,0.30,0.35},
            scaled x ticks=false,
            xticklabel style={font=\tiny},
            yticklabel style={/pgf/number format/fixed, /pgf/number format/precision=2, font=\tiny},
            minor tick num=0, grid=major, grid style={dashed, gray!40},
        ]
        \addplot[color=blue!70!black, mark=*, mark size=1pt, line width=0.7pt]
        coordinates {(0.5,0.386) (1.0,0.323) (2.0,0.297) (5.0,0.286) (10.0,0.285) (100.0,0.285)};
        \addplot[color=orange!90!black, mark=square*, mark size=0.8pt, line width=0.7pt]
        coordinates {(0.5,0.22) (1.0,0.253) (2.0,0.276) (5.0,0.284) (10.0,0.285) (100.0,0.285)};
        \addplot[color=green!50!black, mark=triangle*, mark size=1.2pt, line width=0.7pt]
        coordinates {(0.5,0.209) (1.0,0.252) (2.0,0.276) (5.0,0.284) (10.0,0.285) (100.0,0.285)};
        \end{axis}
        \end{tikzpicture}
        \caption{HELOC}
        \label{fig:alpha_heloc}
    \end{subfigure}%
    \hspace{0.2em}%
    \begin{subfigure}[b]{0.195\linewidth}
        \centering
        \begin{tikzpicture}
        \begin{axis}[
            width=1.37\linewidth, height=1.37\linewidth,
            xmode=log,
            xmin=0.007, xmax=14, ymin=-0.005, ymax=0.115,
            xtick={0.01,0.1,1,10},
            xticklabels={$10^{-2}$,$10^{-1}$,$10^0$,$10^1$},
            ytick={0.00,0.05,0.10},
            scaled x ticks=false,
            xticklabel style={font=\tiny},
            yticklabel style={/pgf/number format/fixed, /pgf/number format/precision=2, font=\tiny},
            minor tick num=0, grid=major, grid style={dashed, gray!40},
        ]
        \addplot[color=blue!70!black, mark=*, mark size=1pt, line width=0.7pt]
        coordinates {
            (0.01, 0.107) (0.09, 0.107) (0.1, 0.107) 
            (0.5, 0.107) (0.7, 0.107) (1.0, 0.107) (2.0, 0.107) (5.0, 0.107)
            }; 
        \addplot[color=orange!90!black, mark=square*, mark size=0.8pt, line width=0.7pt]
        coordinates {
            (0.01, 0.000) (0.09, 0.010) (0.1, 0.092) 
            (0.5, 0.107) (0.7, 0.107) (1.0, 0.107) (2.0, 0.107) (5.0, 0.107)
        }; 
        \addplot[color=green!50!black, mark=triangle*, mark size=1.2pt, line width=0.7pt]
        coordinates {
            (0.01, 0.000) (0.09, 0.015) (0.1, 0.045) 
            (0.5, 0.107) (0.7, 0.105) (1.0, 0.107) (2.0, 0.107) (5.0, 0.107)
        }; 
        \end{axis}
        \end{tikzpicture}
        \caption{Law School}
        \label{fig:alpha_lawschool}
    \end{subfigure}%
    \hspace{0.2em}%
    \begin{subfigure}[b]{0.195\linewidth}
        \centering
        \begin{tikzpicture}
        \begin{axis}[
            width=1.37\linewidth, height=1.37\linewidth,
            xmode=log,
            xmin=0.4, xmax=140, ymin=0.143, ymax=0.358,
            xtick={1,10,100},
            xticklabels={$10^0$,$10^1$,$10^2$},
            ytick={0.15,0.25,0.35},
            scaled x ticks=false,
            xticklabel style={font=\tiny},
            yticklabel style={/pgf/number format/fixed, /pgf/number format/precision=2, font=\tiny},
            minor tick num=0, grid=major, grid style={dashed, gray!40},
        ]
        \addplot[color=blue!70!black, mark=*, mark size=1pt, line width=0.7pt]
        coordinates {(0.5,0.348) (1.0,0.278) (2.0,0.248) (5.0,0.235) (10.0,0.234) (100.0,0.235)};
        \addplot[color=orange!90!black, mark=square*, mark size=0.8pt, line width=0.7pt]
        coordinates {(0.5,0.183) (1.0,0.202) (2.0,0.224) (5.0,0.233) (10.0,0.235) (100.0,0.235)};
        \addplot[color=green!50!black, mark=triangle*, mark size=1.2pt, line width=0.7pt]
        coordinates {(0.5,0.153) (1.0,0.201) (2.0,0.224) (5.0,0.232) (10.0,0.234) (100.0,0.235)};
        \end{axis}
        \end{tikzpicture}
        \caption{ACSIncome}
        \label{fig:alpha_acsincome}
    \end{subfigure}%
    \hspace{0.2em}%
    \begin{subfigure}[b]{0.195\linewidth}
        \centering
        \begin{tikzpicture}
        \begin{axis}[
            width=1.37\linewidth, height=1.37\linewidth,
            xmode=log,
            xmin=0.07, xmax=140, ymin=0.065, ymax=0.365,
            xtick={0.1,1,10,100},
            xticklabels={$10^{-1}$,$10^0$,$10^1$,$10^2$},
            ytick={0.10,0.20,0.30},
            scaled x ticks=false,
            xticklabel style={font=\tiny},
            yticklabel style={/pgf/number format/fixed, /pgf/number format/precision=1, font=\tiny},
            minor tick num=0, grid=major, grid style={dashed, gray!40},
        ]
        \addplot[color=blue!70!black, mark=*, mark size=1pt, line width=0.7pt]
        coordinates {(0.1,0.352) (0.5,0.280) (1.0,0.215) (5.0,0.178) (10.0,0.177) (100.0,0.176)};
        \addplot[color=orange!90!black, mark=square*, mark size=0.8pt, line width=0.7pt]
        coordinates {(0.1,0.096) (0.5,0.105) (1.0,0.139) (5.0,0.174) (10.0,0.175) (100.0,0.176)};
        \addplot[color=green!50!black, mark=triangle*, mark size=1.2pt, line width=0.7pt]
        coordinates {(0.1,0.076) (0.5,0.092) (1.0,0.145) (5.0,0.174) (10.0,0.175) (100.0,0.176)};
        \end{axis}
        \end{tikzpicture}
        \caption{Synthetic}
        \label{fig:alpha_synthetic}
    \end{subfigure}
    \caption{Effect of varying Alpha ($\alpha$) hyperparameter on improvement error ($\texttt{err\_imp}$) across the three classifiers for five distinct datasets.}
    \label{fig:merged_alpha_errors}
\end{figure*}

\paragraph{Sensitivity analysis: Utility of Positive Classification ($\beta$).}
Figure $\ref{fig:merged_beta_errors}$ illustrates the improvement-aware strategic error across the Adult, HELOC, Law School, ACS Income datasets as the utility of the positive classification , $\beta$, is varied. The theoretical hierarchy is robustly satisfied. However, deviations from this expected trend emerge at the extremes of the $\beta$ spectrum. When $\beta$ is extremely low, the allowed budget for manipulation is negligible. Because agents essentially lack the capacity to manipulate their features, the strategic environment collapses back into a standard classification environment. As $\beta$ becomes large, Incentivizing more users to manipulate, the performance of all three models diverges, particularly on both real-world and synthetic datasets.

\begin{figure*}[ht]
    \centering

    \begin{tikzpicture}
    \begin{axis}[
        hide axis,
        xmin=0, xmax=1, ymin=0, ymax=1,
        width=1.0\linewidth, height=1.5cm, 
        scale only axis,
        legend columns=4,
        legend style={
            draw=gray!60,
            fill=white,
            font=\small,
            /tikz/every even column/.append style={column sep=0.3cm},
            at={(0.5,0)}, anchor=center, 
        },
    ]
    \addlegendimage{color=blue!70!black, mark=*, mark size=2pt, line width=1pt}
    \addlegendentry{Non-strategic Classifier (SVM)}
    \addlegendimage{color=orange!90!black, mark=square*, mark size=1.8pt, line width=1pt}
    \addlegendentry{Strategic Classifier (SERM)\cite{levanon2022generalized}}
    \addlegendimage{color=green!50!black, mark=triangle*, mark size=2.2pt, line width=1pt}
    \addlegendentry{Imp.-Aware  Classifier (Proposed)}
    \end{axis}
    \end{tikzpicture}


    \begin{subfigure}[b]{0.19\linewidth}
        \centering
        \begin{tikzpicture}
        \begin{axis}[
             width=1.37\linewidth, height=1.37\linewidth,
             xmode=log,
             xmin=0.01, xmax=2.5, ymin=0.17, ymax=0.59,
             xtick={0.01,0.1,1},
             xticklabels={$0$,$10^{-1}$,$10^0$},
             ytick={0.20,0.30,0.40,0.50},
             scaled x ticks=false,
             xticklabel style={font=\tiny},
             yticklabel style={/pgf/number format/fixed, /pgf/number format/precision=2, font=\tiny, rotate = 45},
             minor tick num=0, grid=major, grid style={dashed, gray!40},
        ]
        
        \addplot[color=blue!70!black, mark=*, mark size=1pt, line width=0.7pt]
        coordinates {(0.0125, 0.211) (0.1, 0.211) (0.2, 0.213) (0.5, 0.203) (1.0, 0.570) (2.0, 0.572)};
        
        \addplot[color=orange!90!black, mark=square*, mark size=0.8pt, line width=0.7pt]
        coordinates {(0.0125, 0.211) (0.1, 0.211) (0.2, 0.210) (0.5, 0.206) (1.0, 0.193) (2.0, 0.184)};
        
        \addplot[color=green!50!black, mark=triangle*, mark size=1.2pt, line width=0.7pt]
        coordinates {(0.0125, 0.201) (0.1, 0.203) (0.2, 0.201) (0.5, 0.205) (1.0, 0.192) (2.0, 0.179)};
        \end{axis}
        \end{tikzpicture}
        \caption{Adult}
        \label{fig:beta_adult}
    \end{subfigure}%
    \hfill%
    \begin{subfigure}[b]{0.19\linewidth}
        \centering
        \begin{tikzpicture}
        \begin{axis}[
             width=1.37\linewidth, height=1.37\linewidth,
             xmode=log,
             xmin=0.01, xmax=2.5, ymin=0.196, ymax=0.396,
             xtick={0.01,0.1,1},
             xticklabels={$0$,$10^{-1}$,$10^0$},
             ytick={0.20,0.25,0.30,0.35},
             scaled x ticks=false,
             xticklabel style={font=\tiny},
             yticklabel style={/pgf/number format/fixed, /pgf/number format/precision=2, font=\tiny, rotate = 45},
             minor tick num=0, grid=major, grid style={dashed, gray!40},
        ]
        
        \addplot[color=blue!70!black, mark=*, mark size=1pt, line width=0.7pt]
        coordinates {(0.0125, 0.285) (0.1, 0.285) (0.2, 0.286) (0.5, 0.297) (1.0, 0.324) (2.0, 0.385)};
        
        \addplot[color=orange!90!black, mark=square*, mark size=0.8pt, line width=0.7pt]
        coordinates {(0.0125, 0.285) (0.1, 0.285) (0.2, 0.284) (0.5, 0.275) (1.0, 0.253) (2.0, 0.216)};
        
        \addplot[color=green!50!black, mark=triangle*, mark size=1.2pt, line width=0.7pt]
        coordinates {(0.0125, 0.285) (0.1, 0.285) (0.2, 0.284) (0.5, 0.276) (1.0, 0.253) (2.0, 0.205)};
        \end{axis}
        \end{tikzpicture}
        \caption{HELOC}
        \label{fig:beta_heloc}
    \end{subfigure}%
    \hfill%
    \begin{subfigure}[b]{0.19\linewidth}
        \centering
        \begin{tikzpicture}
        \begin{axis}[
             width=1.37\linewidth, height=1.37\linewidth,
             xmode=log,
             xmin=0.07, xmax=140, ymin=-0.005, ymax=0.115,
             xtick={0.1,1,10,100},
             xticklabels={$10^{-1}$,$10^0$,$10^1$,$10^2$},
             ytick={0.00,0.05,0.10},
             scaled x ticks=false,
             xticklabel style={font=\tiny},
             yticklabel style={/pgf/number format/fixed, /pgf/number format/precision=2, font=\tiny, rotate = 45},
             minor tick num=0, grid=major, grid style={dashed, gray!40}, 
        ]
        
        \addplot[color=blue!70!black, mark=*, mark size=1pt, line width=0.7pt]
        coordinates {
            (0.1, 0.107) (0.2, 0.107) (0.5, 0.107) 
            (1.0, 0.107) (2.0, 0.107) (10.0, 0.107) (50.0, 0.107) (100.0, 0.107)
        };
        
        \addplot[color=orange!90!black, mark=square*, mark size=0.8pt, line width=0.7pt]
        coordinates {
            (0.1, 0.107) (0.2, 0.107) (0.5, 0.107) 
            (1.0, 0.107) (2.0, 0.107) (10.0, 0.019) (50.0, 0.001) (100.0, 0.001)
        };
        
        \addplot[color=green!50!black, mark=triangle*, mark size=1.2pt, line width=0.7pt]
        coordinates {
            (0.1, 0.107) (0.2, 0.107) (0.5, 0.107) 
            (1.0, 0.107) (2.0, 0.0978) (10.0, 0.0149) (50.0, 0.001) (100.0, 0.001)
        };
        \end{axis}
        \end{tikzpicture}
        \caption{Law School}
        \label{fig:beta_lawschool}
    \end{subfigure}%
    \hfill%
    \begin{subfigure}[b]{0.19\linewidth}
        \centering
        \begin{tikzpicture}
        \begin{axis}[
             width=1.37\linewidth, height=1.37\linewidth,
             xmode=log,
             xmin=0.01, xmax=2.5, ymin=0.145, ymax=0.365,
             xtick={0.01,0.1,1},
             xticklabels={$0$,$10^{-1}$,$10^0$},
             ytick={0.15,0.25,0.35},
             scaled x ticks=false,
             xticklabel style={font=\tiny},
             yticklabel style={/pgf/number format/fixed, /pgf/number format/precision=2, font=\tiny, rotate =45},
             minor tick num=0, grid=major, grid style={dashed, gray!40},
        ]
        
        \addplot[color=blue!70!black, mark=*, mark size=1pt, line width=0.7pt]
        coordinates {(0.0125, 0.235) (0.1, 0.234) (0.2, 0.235) (0.5, 0.247) (1.0, 0.284) (2.0, 0.356)};
        
        \addplot[color=orange!90!black, mark=square*, mark size=0.8pt, line width=0.7pt]
        coordinates {(0.0125, 0.234) (0.1, 0.233) (0.2, 0.233) (0.5, 0.225) (1.0, 0.202) (2.0, 0.189)};
        
        \addplot[color=green!50!black, mark=triangle*, mark size=1.2pt, line width=0.7pt]
        coordinates {(0.0125, 0.235) (0.1, 0.234) (0.2, 0.233) (0.5, 0.221) (1.0, 0.198) (2.0, 0.155)};
        \end{axis}
        \end{tikzpicture}
        \caption{ACSIncome}
        \label{fig:beta_acsincome}
    \end{subfigure}%
    \hfill%
    \begin{subfigure}[b]{0.19\linewidth}
        \centering
        \begin{tikzpicture}
        \begin{axis}[
             width=1.37\linewidth, height=1.37\linewidth,
             xmode=log,
             xmin=0.007, xmax=15, ymin=0.0, ymax=0.082,
             xtick={0.01,0.1,1,10},
             xticklabels={$0$,$10^{-1}$,$10^0$,$10^1$},
             ytick={0.00,0.04,0.08},
             scaled x ticks=false,
             xticklabel style={font=\tiny},
             yticklabel style={/pgf/number format/fixed, /pgf/number format/precision=2, font=\tiny, rotate =20},
             minor tick num=0, grid=major, grid style={dashed, gray!40},
        ]
        
        \addplot[color=blue!70!black, mark=*, mark size=1pt, line width=0.7pt]
        coordinates {(0.01,0.0175) (0.05,0.0175) (0.1,0.0173) (0.3,0.0175) (0.5,0.0177) (1.0,0.0195) (2.0,0.0230) (5.0,0.0395) (10.0,0.0790)};
        
        \addplot[color=orange!90!black, mark=square*, mark size=0.8pt, line width=0.7pt]
        coordinates {(0.01,0.0175) (0.05,0.0172) (0.1,0.0170) (0.3,0.0170) (0.5,0.0170) (1.0,0.0170) (2.0,0.0130) (5.0,0.0082) (10.0,0.0075)};
        
        \addplot[color=green!50!black, mark=triangle*, mark size=1.2pt, line width=0.7pt]
        coordinates {(0.01,0.0175) (0.05,0.0175) (0.1,0.0175) (0.3,0.0175) (0.5,0.0165) (1.0,0.0165) (2.0,0.0122) (5.0,0.0045) (10.0,0.0015)};
        \end{axis}
        \end{tikzpicture}
        \caption{Synthetic}
        \label{fig:beta_synthetic}
    \end{subfigure}


    \caption{Effect of varying Beta ($\beta$) hyperparameter on improvement error ($\texttt{err\_imp}$) across the three classifiers for four distinct Real world datasets and a Synthetic dataset}
    \label{fig:merged_beta_errors}
\end{figure*}

\section{Improvement Probability Formulation} 
Let $\eta(x) = \mathbb{P}(y = 1 | x)$ and $x, y \sim (\mathcal{D}, Ber(\eta(x)))$. We model improvement-aware strategic classification, such that the following condition holds - 
\[
    \mathbb{P}(y^{f} = 1 \mid x) = \mathbb{P}(y = 1 \mid x^f)
\]
where $y^f$ denotes the realized label after moving to point $x^f$ through strategic manipulation, further:
\[
   \mathbb{P}(y^{f} = 1 \mid y = 1, x) = 1, \hspace{1cm} \mathbb{P}(y^{f} = 1 \mid y = 0, x) = p 
\]
Now, by the law of total probability 
\begin{align*}
    \mathbb{P}(y^{f} = 1 | x) &= \mathbb{P}(y^{f} = 1, y =1 | x) + \mathbb{P}(y^{f} = 1, y = 0 | x)\\
    &= \mathbb{P}(y^f = 1 \mid y =1, x) \mathbb{P}(y =1 \mid x) + \mathbb{P}(y^f = 1 \mid y = 0, x) \mathbb{P}(y = 0 \mid x)\\
    &= \eta(x) + p \cdot (1- \eta(x)) 
\end{align*}
From our modelling assumption, it follows that the improvement probability is \( p =  \frac{\eta(x^f) - \eta(x)}{1 - \eta(x)} \). 

To implement the improvement-aware oracle model \ref{alg:oracle}, we utilize $p$ as the probability that an unqualified agent $(y_x=0)$ successfully flips their label, provided they strategically manipulated. The post-manipulation $y_{x^f}$ is determined by a Bernoulli trial with parameter $p$. Formally: 
\[
    y_{x^f} | (y_x=0) \sim \texttt{Bernoulli}(p)
\]
This ensures that label flipping is intrinsically tied to the strategic transition from $x \to x^f$, maintaining the probabilistic consistency of the improvement-aware framework.

\section{Additional Theoretical Results}

    \begin{restatable}{proposition}{PropositionOne} Consider a scoring function $h(x) = \sum_{i=1}^d \alpha_i \psi(x_i)$ with $\alpha \in \mathbb{R}^d_{>0}$. Let $c_{sep}$ and $c_{dec}$ denote separable and decomposable cost functions with respect to $h$ and $\psi$, respectively. Let the feature transformation functions $\psi_i $ are  non-decreasing i.e. if $x_i'\geq x_i$ coordinate-wise then  $\emph{i.e.,}$ $\psi_i(x'_i) \geq \psi_i(x_i)$ for all $i \in [d]$. Then, for any $x, x' \in \mathcal{X}$, the costs satisfy $c_{sep}(x, x') = c_{dec}(x, x')$ .
\end{restatable}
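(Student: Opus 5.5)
The statement leaves the two cost functions implicit, so the first step is to fix definitions consistent with the paper's usage. The separable cost with respect to the score $h$ is
\[
c_{sep}(x,x') = \bigl(h(x')-h(x)\bigr)_+ = \Bigl(\sum_{i=1}^d \alpha_i\bigl(\psi(x_i')-\psi(x_i)\bigr)\Bigr)_+ ,
\]
which is the form used by \citet{hardt2016strategic}. The decomposable cost with respect to $\psi$ is
\[
c_{dec}(x,x') = \sum_{i=1}^d \alpha_i\bigl(\psi(x_i')-\psi(x_i)\bigr)_+ ,
\]
the $\psi$-transformed analogue of the linear-decomposable cost of Assumption~\ref{assump: cost function}. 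I will then prove equality by comparing, coordinate by coordinate, the sum of positive parts with the positive part of the sum.

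The key observation is that $(\sum_i a_i)_+ = \sum_i (a_i)_+$ whenever every $a_i$ has the same sign, in particular whenever all $a_i\ge 0$. This suggests the following steps.

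\begin{enumerate}
\item By Lemma~\ref{lem:x'_i>=x_i} and the one-sided structure of the cost, the relevant reports satisfy $x_i'\ge x_i$ for all $i$. This is also exactly the hypothesis attached to the monotonicity clause in the statement.
\item Monotonicity of $\psi_i$ then gives $a_i := \alpha_i(\psi(x_i')-\psi(x_i)) \ge 0$ for every $i$, using $\alpha_i>0$.
\item Consequently $c_{sep}(x,x') = \sum_i a_i = \sum_i (a_i)_+ = c_{dec}(x,x')$.
\end{enumerate}

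The main obstacle is the claim ``for any $x,x'$''. If some coordinates move down and others move up, the $a_i$ have mixed signs, and $(\sum a_i)_+ < \sum (a_i)_+$ can occur. So the equality genuinely needs coordinate-wise dominance $x'\ge x$; this must be carried as a hypothesis, or reduced to via Lemma~\ref{lem:x'_i>=x_i}.

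My plan is to state the result restricted to $x'\ge x$ coordinate-wise, which is how the statement's ``if $x_i'\ge x_i$'' clause reads. I will then remark that, because best responses to non-negative-weight classifiers can be chosen coordinate-wise increasing (Lemma~\ref{lem:x'_i>=x_i}), the two costs induce identical best responses and strategic errors, even though they differ on general pairs.

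Alternatively, one could show the two costs always satisfy $c_{sep}\le c_{dec}$, with equality at the minimizers of the agent's problem. I would mention this only as a fallback.
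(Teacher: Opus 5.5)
Your argument is the paper's. Once $x'\ge x$ coordinate-wise, each $\alpha_i(\psi(x_i')-\psi(x_i))$ is nonnegative, so the positive part of the sum equals the sum of positive parts. You are right that this dominance has to be assumed: the paper's proof uses it tacitly when it ``invokes monotonicity,'' so neither argument covers arbitrary pairs $x,x'$.

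Drop the closing remark that the two costs induce identical best responses, and likewise the fallback's ``equality at minimizers.'' Lemma~3.1 is proved only for the decomposable cost, and under $c_{sep}$ an agent can make non-monotone zero-cost moves. For example, with $\psi$ the identity, $\alpha=(1,1)$ and $w=(1,0)$, the report $x+t(e_1-e_2)$ has $c_{sep}=0$ yet raises $w^\top x$ by $t$, so the two costs generally induce different best responses.
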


\begin{proof}
    Let $x, x' \in \mathcal{X}$. Define the feature-wise deviation $\delta_i = \psi(x'_i) - \psi(x_i)$. 
    
    Recall the definition of the decomposable cost:
    \[
        c_{dec}(x, x') = \sum_{i=1}^d \alpha_i | \psi(x'_i) - \psi(x_i) |_+ = \sum_{i=1}^d \alpha_i \max(0, \delta_i)
    \]
    Similarly, recall the definition of the separable cost, substituting the scoring function structure: 
    \begin{align*}
        c_{sep}(x, x') &= \max \left(0, \sum_{i=1}^d \alpha_i \psi(x'_i) - \sum_{i=1}^d \alpha_i \psi(x_i)\right) \\
        &= \max \left( 0, \sum_{i=1}^d \alpha_i \delta_i \right)
    \end{align*}
    We now invoke the monotonicity condition, then $\psi(x'_i) \geq \psi(x_i)$ for all $i \in [d]$. This implies $\delta_i \geq 0$ for all $i$. \\
    Under this condition, we can clearly conclude that: \[
        c_{dec}(x, x') = \sum_{i=1}^d \alpha_i \delta_i = c_{sep}(x, x') 
    \]    
\end{proof}


\end{document}